\documentclass{article}

\usepackage[main, final]{neurips_2026}
\usepackage[utf8]{inputenc}
\usepackage[T1]{fontenc}
\usepackage{hyperref}
\hypersetup{
  hidelinks,
  pdftitle={Commutator Memory: Sparse, Path-Local Reading and Steering in Language Models},
  pdfauthor={John Sweeney}
}
\usepackage{url}
\usepackage{booktabs}
\usepackage{amsfonts}
\usepackage{amsmath}
\usepackage{amssymb}
\usepackage{amsthm}
\newtheorem{theorem}{Theorem}
\newtheorem{proposition}{Proposition}
\newtheorem{lemma}{Lemma}
\newtheorem{corollary}{Corollary}
\usepackage{xcolor}
\usepackage{graphicx}
\usepackage{multirow}
\usepackage{array}
\usepackage{placeins}

\newcommand{\R}{\mathbb{R}}
\newcommand{\grad}{\nabla}

\title{Commutator Memory: Sparse, Path-Local Reading and Steering in Language Models}

\author{%
  John Sweeney \\
  Sideplane AI \\
  \texttt{john.sweeney@sideplane.ai}
}

\begin{document}

\maketitle

\begin{abstract}
Gradient updates on different data generally do not commute: training a language model on two data sources in opposite orders gives different weights, even with the same data and total exposure. Loss or benchmark deltas show that the models differ, not where. We ask whether this path dependence leaves a \emph{parametric training-history memory}: a weight component that flips sign when the two sources are swapped, is localized in output space, changes the held-out loss gap between the two orders under targeted interventions, and reveals which trained model came from which order. For one small SGD step of size $\eta$ on each of sources $A$ and $B$, the weight difference $\theta_{AB}-\theta_{BA}$ is, to leading order, $\eta^2 b_{AB}$, where $b_{AB}=H_Bg_A-H_Ag_B$ is the Lie bracket of the two gradient fields at the base model. We define \emph{commutator memory} by projecting the bracket through the logits into one score per vocabulary token; the scores sum to the bracket's prediction of the gap. The scores are localized: on three models, the same readout of the measured $\theta_{AB}-\theta_{BA}$, or of a bracket from disjoint batches, shares 82--99\% of the original top-20 tokens, versus 35--49\% for norm-matched random directions. They are causally actionable: in Qwen-3-4B SFT, downweighting the ten tokens with the largest predicted share of the gap closes a median 32\% of the measured gap, while frequency-matched tokens with near-zero scores have almost no effect. The weights themselves carry the component: projecting the difference between the two trained models onto $b_{AB}$ identifies which came from which order in 92\% of cases across four LLMs (chance 50\%). Controlled tests also cover matched-batch DPO, a frozen-rollout GRPO-style objective, and an AdamW endpoint check. The memory is defined per source pair, not per example, and its projection on $b_{AB}$ decays with further training.
\end{abstract}

\section{Introduction}

Modern LLM post-training composes multiple data sources sequentially: domain pre-training, SFT, preference tuning, safety updates. Prior fine-tuning and training-order studies show that data order and task order can change final performance, and that recently learned information can leave a detectable representation-space trace~\citep{dodge2020finetuning,chen2024order_finetuning,krasheninnikov2026fresh}. In current practice, however, order effects are usually observed only through aggregate benchmark deltas, leaving the practitioner with no view of \emph{where} the interaction between training stages lands in output space. The natural worry is that the order-dependent residue, what the two endpoints disagree on, is diffuse optimizer noise: spread thinly across capabilities, interpretable only in retrospect, and untargetable. Figure~\ref{fig:method} previews our answer: it is not.

\begin{figure}[!t]
\centering
\includegraphics[width=\textwidth]{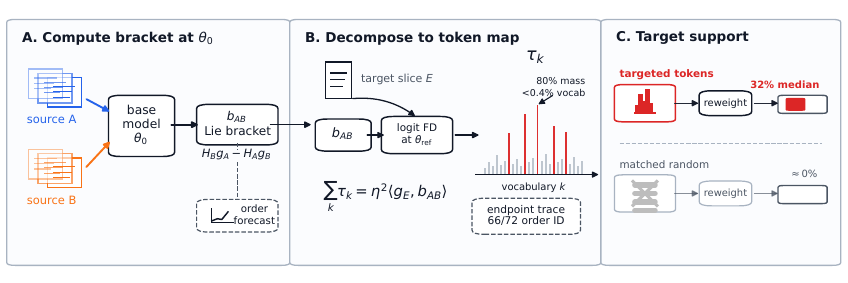}
\caption{\textbf{Commutator memory pipeline.} One bracket produces an order forecast, token map, intervention target, and endpoint trace.}
\label{fig:method}
\end{figure}

This paper shows the residue is not diffuse at the readout level. The order-asymmetric component of source interaction has a concentrated, interpretable, vocabulary-level signature. At the calibrated step sizes studied here (BCH-local: small enough that the second-order expansion of $\theta_{AB}-\theta_{BA}$ holds), the few tokens that carry most of the effect can be estimated from the base model, specified source data, and evaluation slice before both orderings are run. We use ``memory'' operationally, as \emph{parametric training-history memory}. With training content and total exposure held fixed, composing two updates leaves an order-specific component in the weights; exchanging their order flips its sign. To count as memory, this component should be \emph{localized} (concentrated on identifiable outputs), \emph{causally actionable} (intervening on the predicted support changes the order gap), and \emph{assignable} from paired endpoints (the endpoint weights retain an antisymmetric trace of the order that produced them). Access is pair-conditioned: the two candidate sources must be supplied, and we do not claim item-addressable recall, personalization, or long-horizon agent memory. Table~\ref{tab:map} maps the central objects in plain language and marks which are inherited from prior work and which are new here.

\begin{table}[t]
\centering
\small
\caption{Plain-language map of the central objects ($A,B$: the two training data sources; $E$: the held-out evaluation slice). Inherited from prior work: the endpoint-difference identity~\citep[Lemma~2.1]{geometry_sequential_learning}, the scalar criterion at $\theta_0$~\citep{rukhovich2025commute} and at $\theta_{\rm ref}$~\citep{geometry_sequential_learning}, the bracket-only step~\citep[App.~E.8]{geometry_sequential_learning}, and the fixed-clock AdamW regime, in which Adam's buffers advance per step regardless of $\eta$~\citep[Thm.~1]{sweeney2026optimizermemory}. This paper develops the token-level readout, order-gap interventions, paired-endpoint assignment, bracket-only guarantee, and AdamW closure test.}
\label{tab:map}
\begin{tabular}{@{}>{\raggedright\arraybackslash}p{0.24\textwidth}>{\raggedright\arraybackslash}p{0.53\textwidth}>{\raggedright\arraybackslash}p{0.16\textwidth}@{}}
\toprule
Object & Role & Origin \\
\midrule
$b_{AB}=H_Bg_A-H_Ag_B$ & effect of swapping $A$ and $B$: $\theta_{AB}-\theta_{BA}=\eta^2b_{AB}+O(\eta^3)$ & inherited \\
$\sigma=\langle g_E(\theta_{\rm ref}),b_{AB}\rangle$ & one scalar predicting which order has lower held-out loss & inherited \\
$\tau_k$ (commutator memory) & token-level decomposition of $\sigma$; $\sum_k\tau_k=\eta^2\sigma$ & new \\
order-gap closure & fraction of the AB/BA NLL gap removed by an intervention & new \\
$\Delta s=\langle\theta_{AB}-\theta_{BA},b_{AB}\rangle$ & which order produced each of two endpoints & new \\
$\theta_{\rm BO}$ & bracket-only step; beats both orders under Prop.~\ref{prop:bo} & partly inherited \\
$\tau_{\rm Adam}$ & AdamW analogue on the optimizer state (parameters, moments, step counter) & partly inherited \\
\bottomrule
\end{tabular}
\end{table}

\paragraph{Setup and token-level decomposition.} Let $\theta_0$ be the base model and $\eta$ the step size, and let $g_D:=\nabla\mathcal{L}_D(\theta_0)$ and $H_D:=\nabla^2\mathcal{L}_D(\theta_0)$ be the gradient and Hessian of the training loss of source $D$. For two gradient sources $A,B$ with single-step SGD, the endpoint difference obeys $\theta_{AB}-\theta_{BA}=\eta^2 b_{AB}(\theta_0)+O(\eta^3)$, where $b_{AB}:=H_Bg_A-H_Ag_B$~\citep[Lemma~2.1]{geometry_sequential_learning}; we give a self-contained Euler/BCH derivation in Appendix~\ref{app:bracket_derivation}. Projecting this bracket onto a target gradient gives a scalar order criterion~\citep{rukhovich2025commute}. Evaluated at the Trotter reference $\theta_{\text{ref}}:=\theta_0-\eta(g_A+g_B)$, the scalar $\sigma=\langle g_E(\theta_{\text{ref}}),b_{AB}(\theta_0)\rangle$ predicts the better pairwise order with $82$--$92\%$ overall sign accuracy and $82$--$100\%$ on the decisions with the largest predicted gaps~\citep{geometry_sequential_learning}. The scalar says which order, not where. Let $e(x,y;\theta):=\mathrm{softmax}(z(x;\theta))-\mathrm{onehot}(y)$ be the cross-entropy error in logit space and $\delta z(x;v):=J_\theta z(x)\,v$ the logit-space directional derivative along a parameter displacement $v$. We decompose $\sigma$ across vocabulary by the chain rule:
\begin{equation}
  \tau_k(A, B; E) := \mathbb{E}_{(x, y) \sim E}\!\bigl[\, e_k(x, y; \theta_{\text{ref}}) \cdot \delta z_k(x;\, \eta^2 b_{AB}(\theta_0)) \,\bigr], \qquad \sum_k \tau_k = \eta^2 \sigma.
\label{eq:tau_exact}
\end{equation}
The identity is exact for the true logit JVP; the implementation uses central finite differences and reports the corresponding finite-difference token readout (Appendix~\ref{app:tau_implementation}). We call the map from a parameter displacement to the scores $\tau_k$ the \emph{token readout}, and $\tau$ itself the \emph{commutator memory} of source pair $(A,B)$ on slice $E$: a property of the training path, not a state-invariant property of the trained endpoint. Operationally, it is a per-token attribution of the commutator of the two updates, distinct from activation-level mechanistic interpretability: given two sources and an evaluation slice, the method returns a scalar order forecast, a signed token report, and a set of tokens to downweight.

\paragraph{Contributions.} We introduce \emph{commutator memory}, a token-level readout of the Lie bracket of two training updates. We evaluate it by three tests: \textbf{localization}, where bracket-derived token supports are sparse, pair-specific, and recover empirical order-effect supports in endpoint checks; \textbf{causal effect}, where interventions on predicted harmful tokens reduce held-out ordering gaps while $\tau$-neutral controls do not; and \textbf{paired-endpoint assignment}, where an antisymmetric statistic tells which order produced each of two $k{=}1$ SGD endpoints, across four LLMs. Additional DPO, GRPO-style, and AdamW experiments test matched-path extensions beyond the main SFT setting.

\paragraph{Evidence map.} The main SFT experiments use empirically BCH-local SGD with $k=1$--$8$ steps on two-source pairs. We then ask whether the same bracket remains structured under matched-batch DPO, a frozen-rollout reward surrogate, iterative correction toward the observed reverse-order endpoint, and AdamW (an endpoint check on the optimizer state); matched means that the bracket is computed from the same batches or rollouts that produced $\theta_{AB}$ and $\theta_{BA}$. This sequence of tests keeps the main claims fixed (localized, causally actionable, paired-endpoint-readable commutator memory), while the later regimes test the same object on realized update paths. The closest prior work on training-history memory is the activation recency probe of \citet{krasheninnikov2026fresh}; our object is different because it is a signed, pair-specific bracket readout computed before either order is run, projected to output tokens, and used for causal order-gap interventions. The closest causal-localization comparison is mechanistic data attribution~\citep{chen2026mda}: both find that an apparently diffuse training phenomenon is carried by a sparse causal support, but their support consists of training samples for circuit emergence, while ours consists of output tokens for the antisymmetric order residue.

\section{Related Work}
\label{sec:related}

\textbf{Training order, curricula, and temporal traces.} Curriculum learning studies deliberately chosen presentation orders and frames curricula as continuation methods that can speed convergence or guide non-convex optimization~\citep{bengio2009curriculum}. Fine-tuning work shows that data-order seeds contribute substantially to BERT fine-tuning variance~\citep{dodge2020finetuning}, and that sequential intermediate-task order can help or hurt CodeBERT transfer in software-engineering tasks~\citep{chen2024order_finetuning}. Most directly, \citet{krasheninnikov2026fresh} show that sequential fine-tuning leaves a linear activation-space recency signal: probes can distinguish early vs.\ late entity datasets and the model can be trained to report a stage. Their work and ours both ask what training history survives later learning. We study the weight-space noncommutativity of two training updates: the bracket gives a signed AB-vs-BA displacement, an output-token decomposition, causal token interventions, and a paired-endpoint assignment statistic. We also measure how this displacement attenuates under continued training. We use ``memory'' in this parametric, pair-conditioned sense, not in the sense of agent memory systems with storage and retrieval.

\textbf{Lie brackets and splitting geometry.} The BCH/Magnus/splitting viewpoint is classical in differential equations and geometric numerical integration~\citep{magnus1954exponential,blanes2009magnus,hairer2006geometric}. The ICML paper by \citet{geometry_sequential_learning} proves the SGD bracket identity and shows that the scalar $\sigma=\langle g_E,b_{AB}\rangle$ predicts ordering quality across multi-domain suites; we inherit this identity and lift it to token-level localization, intervention, and paired-endpoint readability. \citet{rukhovich2025commute} project the same bracket onto a target-loss gradient as a local optimality criterion for multi-domain learning. \citet{sweeney2026optimizermemory} shows that optimizer state that advances with the step count, such as AdamW moment buffers and de-biasing counters, makes the effect of reordering the same data first order in $\eta$ rather than second order; our AdamW appendix replays this optimizer state, but only as an endpoint check after training.

\textbf{Continual learning and editing.} Catastrophic interference and continual-learning methods such as EWC, SI, GEM, and PCGrad~\citep{mccloskey1989catastrophic,kirkpatrick2017overcoming,zenke2017continual,lopez2017gradient,yu2020gradient} address forgetting or gradient conflict; ROME/MEMIT~\citep{meng2022locating,meng2022mass} show how factual content can be written into weights; our complementary question is what order-specific interaction is created when parameter writes are composed. Commutator memory instead resolves the antisymmetric path residue $\theta_{AB}-\theta_{BA}$ into output-token support. Task arithmetic~\citep{ilharco2023task} composes whole-model fine-tuning vectors as task differences; the bracket $b_{AB}$ is an order-asymmetric residue \emph{within} a single training sequence, and loss responses to small masked bracket edits on Qwen-3-4B are near-additive (4.3\% mean relative error; App.~\ref{app:triplet_additivity}).

\textbf{Mechanistic localization and data attribution.} Mechanistic-interpretability work localizes behavior to circuits, activation features, and editable internal representations~\citep{olah2020zoom,elhage2021mathematical,wang2022interpretability,conmy2023towards,bricken2023monosemanticity,templeton2024scaling,belrose2023tunedlens}. Influence functions, datamodels, TRAK, and recent LLM influence studies~\citep{koh2017understanding,ilyas2022datamodels,park2023trak,grosse2023studying} attribute predictions or behaviors to training examples. Mechanistic data attribution~\citep{chen2026mda} traces interpretable units to high-influence training samples; our attribution is not sample-level or activation-level, but a token-level decomposition of the commutator of two training updates.

\textbf{Attribution, fingerprinting, and training protocols.} Watermarking~\citep{kirchenbauer2023watermark,zhao2024provable} embeds a signal during generation; commutator memory arises without one. Membership inference~\citep{shokri2017membership,carlini2022membership} recovers set membership, not order. Seed fingerprinting~\citep{tong2025seedprints} recovers the initialization seed from seed-induced prediction biases; our paired-endpoint statistic identifies which of two candidate sources was trained first ($k{=}1$ SGD). DPO~\citep{rafailov2023direct}, PPO~\citep{schulman2017ppo}, and GRPO~\citep{shao2024deepseekmath} provide the preference/RL objectives used only in the controlled matched-path tests. Adam and AdamW~\citep{kingma2014adam,loshchilov2019decoupled} motivate the lifted-state optimizer check rather than the main SGD claims.

\section{Background: Lie Bracket Analysis of Alternating Training}
\label{sec:background}

\subsection{Problem Setup}

Let $\theta \in \R^p$ denote model parameters and $\mathcal{L}_A, \mathcal{L}_B$ denote loss functions on domains A and B. Consider alternating gradient descent:
\begin{align}
\theta_1 &= \theta_0 - \eta \grad \mathcal{L}_A(\theta_0) \\
\theta_2 &= \theta_1 - \eta \grad \mathcal{L}_B(\theta_1)
\end{align}
versus the reverse ordering (B then A).

The \emph{ordering gap} measures the difference:
\begin{equation}
\Delta(\theta_0) = \mathcal{L}_E(\theta_{AB}) - \mathcal{L}_E(\theta_{BA})
\end{equation}
where $\mathcal{L}_E$ is a held-out evaluation loss and $\theta_{AB}, \theta_{BA}$ denote final parameters after sequences AB and BA.

\subsection{Lie Bracket Decomposition}

\emph{Smoothness assumption (used throughout).} The source and target losses considered below are $C^3$ on the local neighborhood of $\theta_0$ and $\theta_{\rm ref}$ with bounded third derivatives, so the $O(\eta^3)$ endpoint remainders and target-loss Taylor remainders are uniform; this is the regularity inherited from \citet{geometry_sequential_learning} Lemma~2.1.

For discrete gradient descent with step size $\eta$, the parameter difference after two alternating steps satisfies
\begin{equation}
\theta_{AB} - \theta_{BA} = \eta^2 b_{AB}(\theta_0) + O(\eta^3),
\label{eq:endpoint_diff}
\end{equation}
where the \emph{bracket} $b_{AB}$ is defined via Hessian-vector products:
\begin{equation}
b_{AB}(\theta) := H_B(\theta) g_A(\theta) - H_A(\theta) g_B(\theta)
\label{eq:bracket}
\end{equation}
with $g_D=\nabla_\theta\mathcal{L}_D$ and $H_D=\nabla^2_\theta\mathcal{L}_D$ (Lemma 2.1 of \citet{geometry_sequential_learning}; Euler/BCH derivation in Appendix~\ref{app:bracket_derivation}). Appendix~\ref{app:bracket_derivation} also reads $b_{AB}$ as the curvature of an optimizer connection. We compute $b_{AB}$ via exact HVPs~\citep{pearlmutter1994fast}, cast to fp32 on bf16 models.

\subsection{Target Score and Trotter Reference}

Following \citet{geometry_sequential_learning} and the standard splitting perspective in geometric numerical integration~\citep{hairer2006geometric}, define the first-order Trotter reference $\theta_{\text{ref}} := \theta_0 - \eta (g_A(\theta_0) + g_B(\theta_0))$, the shared first-order endpoint of the two schedules. Taylor-expanding around $\theta_{\text{ref}}$ and substituting eq.~\eqref{eq:endpoint_diff} yields the bracket score
\begin{equation}
\Delta_E(A,B)
  := \mathcal{L}_E(\theta_{AB}) - \mathcal{L}_E(\theta_{BA})
  = \eta^2 \langle g_E(\theta_{\text{ref}}), b_{AB}(\theta_0) \rangle + O(\eta^3).
\label{eq:target_score}
\end{equation}
The $O(\eta^4)$ loss-linearization remainder for the \emph{actual} endpoint displacement is dominated by the $O(\eta^3)$ Euler truncation in $\theta_{AB}-\theta_{BA}\approx\eta^2 b_{AB}$. Proposition~\ref{prop:bo} (\S\ref{sec:discussion}, App.~\ref{app:bo_proof}) gives a conditional leading-order comparison: when the symmetric drift on $E$ raises the target loss ($\mu>0$, \S\ref{sec:discussion}) and the sign of $\sigma$ is estimated correctly, the bracket-only step at $\theta_{\rm ref}$ has lower target loss than both pure orders up to the stated remainder. The companion closure identity $\theta_{AB}-\eta^2 b_{AB}=\theta_{BA}+O(\eta^3)$ underlies the matched-batch correction results summarized in Section~\ref{sec:controlled_tests} and detailed in Appendix~\ref{app:dpo_extended} and Appendix~\ref{app:grpo}.

\subsection{Token-Level Decomposition}

The bracket $b_{AB}\in\mathbb{R}^p$ acts on parameters; Eq.~\eqref{eq:tau_exact} maps it to tokens through the logit Jacobian. Token $k$ contributes $\tau_k$ to the leading-order gap; $\tau_k>0$ means $AB$ disadvantages token $k$ relative to $BA$ at the bracket-prediction level. Individual entries are reported in the model's native logit coordinates; the aggregate identity is invariant to position-wise logit shifts because $\sum_k e_k=0$. In experiments we compute the logit JVP by central finite difference with step $1.0$ after exact HVPs. All downstream sections characterize this readout: sparsity (\S\ref{sec:sparsity}), support interpretation (\S\ref{sec:interpretability}), and the controlled tests beyond supervised fine-tuning in Section~\ref{sec:controlled_tests}.

\section{Experimental Setup}
\label{sec:setup}

\subsection{Models and Domains}

Models: Qwen-3-4B~\citep{qwen3_2025}, Qwen-2.5-1.5B~\citep{qwen25_2024}, Llama-3.1-8B~\citep{llama31_2024}, Llama-3.2-1B~\citep{llama32_2024}; DPO sparsity also uses Qwen-3-8B, and the GRPO-style reward surrogate uses Qwen-2.5-0.5B-Instruct. SFT domains: \texttt{code, news, math, legal, biomedical, wikipedia} from The Pile~\citep{gao2020pile}; 512 tokens/sequence. Sparsity and token-prediction runs use all $C(6,2){=}15$ pairs $\times$ 3 seeds. The SFT intervention rows in Table~\ref{tab:interventions} use three fixed domain pairs $\times$ 30 seeds ($90$ trials/model; a trial is one seeded run of one pair); single-step and iterative endpoint correction (edits of $\theta_{AB}$ toward $\theta_{BA}$) use the fixed six-pair subset stated in their tables. DPO uses pairs of UltraFeedback~\citep{cui2024ultrafeedback} source partitions. The GRPO-style protocol uses frozen matched rollouts and two analytic reward sources (math correctness and brevity/format compliance). Learning rates $\eta$ are selected by an empirical BCH-locality check (Qwen-3-4B $6.42\text{e-}4$, Qwen-2.5-1.5B $1.91\text{e-}3$, Llama-3.1-8B $1.28\text{e-}3$, Llama-3.2-1B $8.0\text{e-}4$; Appendix~\ref{app:eta_autopilot}); for the DPO and GRPO-style correction experiments the local $\eta$ is recalibrated per protocol (Appendix~\ref{app:eta_calibration} and Appendix~\ref{app:grpo}). HVPs are computed in fp32 via upcast; held-out evaluation uses documents disjoint from training.

\subsection{Closure metric and measurement floor}
\label{sec:floor}

The closure metric used in the intervention and correction sections is $C := 1 - |\Delta_{\text{edited}}| / |\Delta_{\text{baseline}}|$, where $\Delta_{\text{baseline}} = \mathcal{L}_E(\theta_{AB}) - \mathcal{L}_E(\theta_{BA})$ and $\Delta_{\text{edited}}$ is the same quantity after the bracket-based edit. The metric is bounded above by $1$ (perfect closure) and unbounded below; its sampling variance diverges as $|\Delta_{\text{baseline}}| \to 0$, since the denominator approaches the per-trial NLL noise floor (fp32 $\sim 2{\times}10^{-5}$; SFT bf16 $\sim 10^{-4}$). We adopt three complementary reporting conventions that are inherited by every downstream result.

\textbf{(i) Above-floor subset.} For each protocol, a floor on $|\Delta_{\text{baseline}}|$ is fixed before evaluation: $10^{-4}$ for the Qwen-2.5-1.5B fp32 SFT intervention (28 of 90 trials retained), $5{\times}10^{-3}$ for Qwen-3-4B iterative correction (5 of 6 pairs retained: news vs.\ biomedical has $|\Delta|=8.5{\times}10^{-4}$ at $k{=}1$, below every other pair), and $2{\times}10^{-5}$ as the DPO diagnostic denominator floor used in Appendix~\ref{app:r79_freshpairs}. We report all-trial and above-floor statistics in parallel where both are computable.

\textbf{(ii) Winsorized effect-size summaries.} Because closure ratios are heavy-tailed near the measurement floor, we report the median and a winsorized 5/95 mean (App.~\ref{app:dpo_stats_convention}) as descriptive effect-size summaries. Inferential claims use paired comparisons or pair-level tests rather than the raw trial-level mean.

\textbf{(iii) Pair-level summary.} The DPO closure summary uses source-pair medians of the bracket-minus-random closure difference. The criterion fixed in advance, a per-trial bootstrap $95\%$ CI strictly above zero, holds for $11/12$ source pairs (the original six UltraFeedback pairs and six disjoint fresh pairs; App.~\ref{app:r79_seed_cluster}); all $\mathbf{12/12}$ pair medians are positive. The pairs share source domains, so this sign agreement is descriptive, not a significance test. The trial-level count (316/360 = 87.8\% matched trials with bracket $>$ random) is within-cluster consistency, not an independent-sample p-value. Winsorized 5/95 means and seed-clustered bootstrap CIs are effect-size summaries, not decision criteria.

\section{Results}

The results follow the three tests set out in the introduction. Sections~\ref{sec:sparsity}--\ref{sec:interpretability} test whether the antisymmetric residue is \emph{localized}; Section~\ref{sec:interventions} tests whether the localized support is \emph{causal}; Section~\ref{sec:forensic} tests whether the endpoint weights retain a \emph{readable} antisymmetric trace. The other sections are validity checks and controlled matched-path tests: agreement with the prior scalar predictor, persistence across multiple SGD steps, matched-path correction, and the lack of transfer to unmatched GRPO rollouts.

\subsection{Localized: Sparse, Pair-Specific Token Readout}
\label{sec:sparsity}

\begin{figure}[t]
\centering
\includegraphics[width=0.8\textwidth]{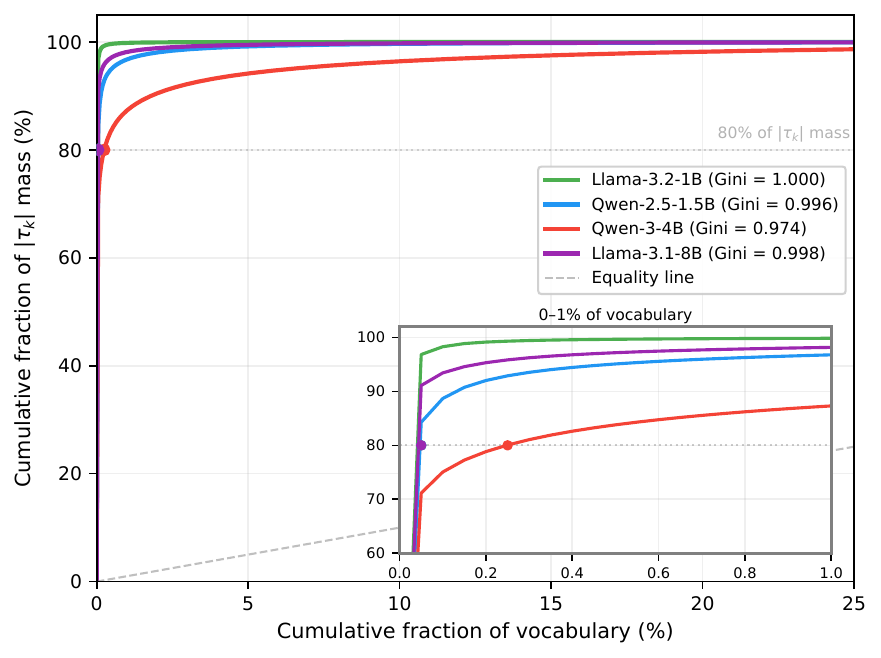}
\caption{Concentration diagnostics for the token-level bracket readout. Lorenz curves show the operational finite-difference $\tau$ readout across four reported main runs; Table~\ref{tab:sparsity} reports the separate fp32 random-direction control subset used for the support-specificity claim.}
\label{fig:sparsity}
\end{figure}

The finite-difference readout is highly concentrated (Fig.~\ref{fig:sparsity}), but concentration alone is not the specificity claim: random directions can also produce heavy-tailed logit readouts. Our bracket-specific evidence is support alignment. Bracket-derived supports are pair-specific and, when the measured $\theta_{AB}-\theta_{BA}$ is passed through the same readout, recover empirical order-effect supports far above the corresponding random-direction controls (App.~\ref{app:sparsity_nulls}).

\begin{table}[h]
\centering
\caption{Concentration and support-specificity checks from the fp32 random-direction control subset (3 pairs $\times$ 3 seeds/model). ``80\% mass'': fraction of the vocabulary holding $80\%$ of the $|\tau|$ mass (mean/median). Endpoint (the measured $\theta_{AB}-\theta_{BA}$) and resampled (a bracket from disjoint batches) columns report top-$20$ overlap with the bracket support; the random column reports mean top-$20$ overlap for norm-matched random controls.}
\label{tab:sparsity}
\begin{tabular}{@{}p{0.17\textwidth}p{0.10\textwidth}p{0.17\textwidth}p{0.22\textwidth}p{0.13\textwidth}@{}}
\toprule
Model & Precision & 80\% mass mean/med. & Endpoint/resampled overlap & Random overlap \\
\midrule
Llama-3.2-1B & fp32 & 1.47\% / 1.47\% & 99\% / 97\% & 35\% \\
Qwen-2.5-1.5B & fp32 & 0.99\% / 1.16\% & 99\% / 97\% & 39--40\% \\
Qwen-3-4B & fp32 & 0.30\% / 0.042\% & 82\% / 93\% & 36--49\% \\
\bottomrule
\end{tabular}
\end{table}

\paragraph{Support specificity, not Gini alone.} Top-token sets are pair-specific rather than a universal high-influence vocabulary set: cross-pair Jaccard is $0.004$ on Qwen-3-4B (Fig.~\ref{fig:token_heatmap}). In the endpoint checks used in Table~\ref{tab:sparsity}, empirical endpoint displacements and batch-resampled brackets preserve the bracket's top-token support, while norm-matched random controls share only the tokens that any direction excites. Frequency-normalized $\kappa_k=\tau_k/(\Pr(y{=}k)+\varepsilon)$ is reported as a label-frequency diagnostic; support recovery, not $\kappa$-Gini, is the specificity test. Appendix~\ref{app:obs_gauge} checks how much of $b_{AB}$ the token readout can see.

\subsection{Localized: The Sparse Support Is Interpretable}
\label{sec:interpretability}

Top-20 $|\tau_k|$ tokens fall into three token categories, each pair with its own signature. \textbf{Domain-marker tokens} dominate code-vs-news at $\theta_{\rm ref}$ (lexical proper nouns: ``Facebook'', ``Amazon'', ``Washington''; a $\theta_0$-reference variant of the same pair surfaces syntactic markers \texttt{xml}, \texttt{\$}, \texttt{php}, App.~\ref{app:reachability}) and code-vs-math (academic metadata + LaTeX source structure). \textbf{Digit asymmetry} dominates code-vs-legal (9/20 are digits 0--9; token ``0'' alone has $\tau=+23.1$) plausibly array indices vs.\ section numbers. \textbf{BPE morpheme fragments} dominate code-vs-biomedical (``-ervation'' from \emph{observation/conservation/preservation}; ``-fluence''; ``-ulatory''): polysyllabic domain terms collapse into a few shared subword pieces, concentrating $|\tau_k|$ via many-to-few mapping. Token heatmap and per-pair token examples: Appendix~\ref{app:interpretability_extended}, Fig.~\ref{fig:token_heatmap}.

\paragraph{Category summary.} The three categories identify tokens with the largest error-weighted logit response $e_k\,\delta z_k$ along the bracket. The prediction and label parts of $\tau_k$ (App.~\ref{app:interpretability_extended}) are comparable in aggregate (ratio $0.88$); $30\%$ of $|\tau|$ mass comes from prediction-only tokens. Smoothed $\kappa_k = \tau_k/(\Pr(y{=}k)+\varepsilon)$ is a label-frequency diagnostic. The operational readout is consistently heavy-tailed across the tested architectures; the bracket-specific claim is the stability and endpoint alignment of the high-mass support. The categories recur across pairs; reference and eval-slice choices can change the surface tokens, but the observed high-mass categories remain domain-linked. Per-model detail: Appendix~\ref{app:interpretability_extended}.

\paragraph{Concrete examples.} In held-out code-vs-news text, ``Ask HN: Should I include my GPA and/or transcripts when applying for jobs?'' starts with \texttt{Ask}, the largest-$|\tau|$ token of that slice ($\tau=-3.51$); in code-vs-biomedical text, ``the ambulatory health care environment'' contains \texttt{ulatory}, a top-10 $|\tau|$ subword ($\tau=0.437$). The readout is aggregated over the slice, not scored per sentence (Fig.~\ref{fig:examples}).

\subsection{Agreement with the Scalar Bracket Predictor}
\label{sec:baselines}

The scalar bracket-projection predictor $\sigma = \langle g_E, b_{AB}\rangle$ for pairwise order quality was validated as an order predictor in prior ICML work~\citep{geometry_sequential_learning}, which reports $82$--$92\%$ overall sign accuracy and $82$--$100\%$ accuracy on highest-impact decisions. Our identity gives $\sum_k \tau_k = \eta^2\sigma$ (Eq.~\ref{eq:tau_exact}); we verify that our pipeline reproduces its sign accuracy on three models. Bracket sign-prediction accuracy on 18 pair-seed units (6 domain pairs $\times$ 3 seeds): \textbf{Qwen-3-4B 72\%}, \textbf{Qwen-2.5-1.5B 100\%}, \textbf{Llama-3.1-8B 100\%}, vs.\ first-order baselines (grad cosine, grad norm) at $11$--$56\%$ and random at $39$--$50\%$. Magnus-expansion validation ($R^2=0.994$ on Qwen-3-4B) and full baseline table: Appendix~\ref{app:magnus}, \ref{app:baselines}.

\subsection{Causal: Token Interventions Close the SFT Gap}
\label{sec:interventions}

To test whether the predicted support causally affects the ordering gap, we intervene on the ten tokens whose $\tau_k$ contributes most to the measured baseline gap (App.~\ref{app:intervention_types}) during alternating training. On Qwen-3-4B we use mean-preserving loss reweighting ($\alpha = 0.1$, i.e., 90\% downweight) on these label tokens; on Qwen-2.5-1.5B fp32 we use the lower-noise variant that scales the learning rate of those tokens' output-layer (lm-head) rows, on the predefined above-floor subset. Controls are random token sets with $\tau_k$ near zero, drawn to match the harmful tokens' label-frequency bins (30 trials per domain pair, held-out evaluation data disjoint from training).

\begin{table}[h]
\centering
\caption{Intervention results on two models (30 trials/pair, held-out evaluation). Qwen-3-4B reports all 90 trials. $^\dagger$Qwen-2.5 denotes Qwen-2.5-1.5B fp32 on the predefined above-floor subset (28/90 trials with $|\Delta_{\mathrm{baseline}}|>10^{-4}$). Harmful: the ten tokens whose $\tau_k$ contributes most to the measured gap; Random: frequency-matched tokens with $\tau_k\approx0$; entries are closure $C$ in \%; Winsor.\ mean: $5/95$ winsorized mean, with its bootstrap CI; Gap shrinks: fraction of trials with a smaller ordering gap after the intervention ($68/90$, $45/90$, $27/28$, $11/28$ by row).}
\label{tab:interventions}
\begin{tabular}{llcccc}
\toprule
Model & Condition & Winsor.\ mean & Median & 95\% CI & Gap shrinks \\
\midrule
Qwen-3-4B & Harmful & +31.2\% & +32.0\% & [+21.4\%, +39.7\%] & 75.6\% \\
Qwen-3-4B & Random & +1.2\% & +0.2\% & [--5.8\%, +8.7\%] & 50.0\% \\
\midrule
Qwen-2.5$^\dagger$ & Harmful & +48.6\% & +47.9\% & [+40.5\%, +57.1\%] & 96.4\% \\
Qwen-2.5$^\dagger$ & Random & --0.006\% & --0.003\% & [--0.03\%, +0.02\%] & 39.3\% \\
\bottomrule
\end{tabular}
\end{table}

On Qwen-3-4B (held-out, all $90$ trials), intervening on predicted harmful tokens reduces the ordering gap by a median $32\%$ (the winsorized-mean CI excludes zero), while random controls show no effect. The paired harmful-vs-random comparison yields a CI entirely above zero (harmful-minus-random gap reduction $[0.00096,\,0.0022]$ nats, $p < 0.001$, Cohen's $d = 0.53$). Both HVP terms matter: selecting tokens with the full bracket $H_Bg_A-H_Ag_B$ gives $d = 0.53$, whereas the $H_Bg_A$ term alone gives $d = 0.25$. On Qwen-2.5-1.5B (full fp32 with lm-head learning-rate scaling), the predefined above-floor protocol keeps only the $28$ of $90$ trials with $|\Delta_{\text{baseline}}| > 10^{-4}$ (the remaining $62$ trials are below the reporting floor); on this above-floor fp32 subset, learning-rate scaling on the harmful tokens yields a $+47.9\%$ median gap reduction, with the random control near zero. The paper's comparison is the within-model bracket-vs-random separation; cross-model magnitudes are not directly compared. The $32\%$ figure is the effect of this reweighting edit, not the share of the ordering gap that commutator memory explains: after the edit, the leading-order bracket still predicts $71\%$ of the original gap, and finite-step corrections, which change sign across pairs, do not explain a common remainder (App.~\ref{app:residual_gap}).

\subsection{Readable: The Endpoint Pair Identifies Which Order Was Trained}
\label{sec:forensic}

A complementary stringent test of ``memory'' is paired-endpoint assignment: after both orders have been run, can the bracket assign which endpoint came from which order? Given a public base $\theta_0$, candidate training-domain pair $(A, B)$, and paired alternate-order endpoints $\theta_{AB}, \theta_{BA}$, Theorem~\ref{thm:forensic_id} and Corollary~\ref{cor:delta_s} (App.~\ref{app:bracket_derivation}) give the $k=1$ SGD criterion: the antisymmetric statistic
\begin{equation}
\Delta s := \langle \theta_{AB} - \theta_{BA},\, b_{AB}\rangle
\end{equation}
has leading term $\eta^2\|b_{AB}\|^2$ when the first endpoint is $\theta_{AB}$, with the sign reversed if the endpoints are swapped. Thus the sign assigns the endpoints whenever the $O(\eta^2)$ term dominates finite-$\eta$, sampling, and numerical remainders. The single-endpoint estimator $\mathrm{sign}\,s(\theta)$, with $s(\theta):=\langle \theta - \theta_{\mathrm{ref}},\, b_{AB}\rangle$, reaches $77.8\%$ only on Qwen-3-4B and falls to chance at $1$B, $1.5$B, and $8$B (both $s(\theta_{AB})$ and $s(\theta_{BA})$ land on the same side of zero). The antisymmetric difference removes the shared first-order reference offset and the second-order symmetric term $c_{AB}$ (\S\ref{sec:discussion}) in the expansion, and it stays above chance at all four scales.

\begin{table}[h]
\centering
\caption{Paired-endpoint training-order assignment via $\mathrm{sign}(\Delta s)$ across four LLMs (6 pairs $\times$ 3 seeds = 18 pair-seed units per model). Replication across precision regimes (bf16 vs.\ fp32) in Appendix~\ref{app:forensic}.}
\label{tab:forensic_body}
\begin{tabular}{lcc}
\toprule
Model (precision) & $\mathrm{sign}(\Delta s)$ correct & Wilson 95\% CI \\
\midrule
Llama-3.2-1B (1B, fp32) & 14/18 = 77.8\% & $[55\%, 91\%]$ \\
Qwen-2.5-1.5B (1.5B, fp32) & 16/18 = 88.9\% & $[67\%, 97\%]$ \\
Qwen-3-4B (4B, bf16) & 18/18 = 100.0\% & $[82\%, 100\%]$ \\
Llama-3.1-8B (8B, bf16) & 18/18 = 100.0\% & $[82\%, 100\%]$ \\
\midrule
\textbf{Combined} & \textbf{66/72 = 91.7\%} & $\mathbf{[83\%, 96\%]}$ \\
\bottomrule
\end{tabular}
\end{table}

Assignment is binary, so chance is $50\%$; the combined Wilson interval $[83\%, 96\%]$ (Table~\ref{tab:forensic_body}) lies above it. In this paired-endpoint setting, antisymmetry supplies the leading-order sign criterion; the empirical results test whether it remains readable under finite-sample and finite-step effects. Scope: $k=1$ SGD; AdamW transfer is in App.~\ref{app:adamw}.

\subsection{Controlled Tests Beyond Supervised Fine-Tuning}
\label{sec:controlled_tests}

The bracket's top-token set persists beyond the single-step derivation: across $k\in\{1,2,4,8\}$ SGD steps per source, both Qwen models retain concentrated $\tau$ readouts, and $\geq98\%$ of the $k{=}8$ empirical top-$1\%$ tokens (among those occurring as labels) lie in the single-step top-$1\%$ $\tau$ set (Appendix~\ref{app:multistep}). This asymmetric recall is distinct from the equal-cardinality support-prediction check in Appendix~\ref{app:token_prediction}. The scalar BCH forecast degrades on Qwen-3-4B at the largest calibrated $\eta$, but the support remains useful: the same model still gives strong support recovery, output-row prediction (Spearman $\rho$ between $\tau_k$ and the output-row displacement, from $0.83$ at $k{=}1$ to $0.71$ at $k{=}8$), and $32\%$ intervention closure.

\textbf{Parameter-space correction.} Edits to the trained endpoint using $-\eta^2 b_{AB}$ recover the reversed endpoint only in matched-path settings: subtracting $\eta^2 b_{AB}$ once from $\theta_{AB}$ closes amounts that vary by pair, but $\tau_k$ predicts the actual per-token output-row displacement on three models (per-pair Spearman $\rho=0.78$--$0.90$; App.~\ref{app:endpoint_perpair}), and iterative projection toward the observed $\theta_{BA}$ closes about $90\%$ of the gap on the $5/6$ SFT pairs above floor. The conditional bracket-only update at $\theta_{\rm ref}$ also beats both pure orders when the symmetric drift on $E$ is target-disadvantageous ($\mu>0$, \S\ref{sec:discussion}) and the sign of $\sigma$ is estimated correctly: on Qwen-2.5-1.5B fp32, $12$ of the $13$ units with $\hat\mu>0$ satisfy $\mathcal{L}_E(\theta_{\rm BO})<\min(\mathcal{L}_E(\theta_{AB}),\mathcal{L}_E(\theta_{BA}))$, and the improvement over the better order regresses on the predicted $-\eta^2\hat\mu$ ($\hat\mu$ the estimated $\mu$) with slope $1.15$ (proof in App.~\ref{app:bo_proof}).

\textbf{Preference and reward-surrogate tests.} Matched-batch DPO gives the most direct correction test: subtracting the second-order $\eta^2 b_{AB}$ update from $\theta_{AB}$ closes a median $77\%$ of the held-out DPO gap to $\theta_{BA}$ (App.~\ref{app:r79_perpair}). The flat-bootstrap strict-CI criterion holds for $11/12$ source pairs, all $12/12$ pair medians are positive, and trial-level $316/360$ is reported only as within-cluster consistency. DPO concentration checks on Qwen-3-8B and Qwen-2.5-1.5B are descriptive (App.~\ref{app:dpo_details}). A frozen matched-rollout reward surrogate on Qwen-2.5-0.5B-Instruct tests whether the closure identity holds under analytic rewards through $T=16$ sequential update blocks. Detailed DPO/GRPO controls, random, wrong-sign ($+\eta^2 b_{AB}$) and first-order (norm-matched $\eta(g_A-g_B)$) comparisons, and the AdamW endpoint check are in Appendices~\ref{app:dpo_extended}, \ref{app:grpo}, and~\ref{app:adamw}.

\textbf{Persistence under continued training.} In a new-seed Qwen-3-4B study ($20$ runs over ten source pairs), the $AB$ and $BA$ endpoints each take sixteen more SGD updates on the same third-domain batches. After four, eight, and sixteen updates, $15$, $14$, and $13$ of $20$ runs retain the aligned trace, a positive projection of $\theta_{AB}-\theta_{BA}$ on the original commutator direction; some runs reverse sign, and over the $18$ runs that start aligned the median retained fraction falls from $27\%$ to $14\%$ to $7\%$. The aligned trace persists briefly and then fades; this does not establish behavioral retrieval or long-term memory (App.~\ref{app:continuation}).

\section{Discussion}
\label{sec:discussion}

\paragraph{Path dependence.} $b_{AB}$ is a local correction for a realized update path. Matched-batch SFT/DPO and matched-rollout GRPO closures succeed because the bracket comes from the same updates that produced the two endpoints; calibration on unmatched rollouts is the negative control (App.~\ref{app:grpo}). A residue concentrated on few tokens and rows is consistent with aggregate benchmarks missing it. With $\theta_{BA}$ as target, iterative projection closes about $90\%$ of the gap on the $5/6$ SFT pairs above floor, removing the antisymmetric residue after training with the observed $\theta_{BA}$ as target.

\paragraph{Why this matters.} The practical target is controlled measurement of sequential post-training interactions. Instead of discovering source interactions only after expensive runs as scalar regressions, a pipeline could estimate which evaluation slices and output tokens carry the antisymmetric residue, then test reordering, token-local reweighting, or matched-path bracket correction in a calibrated BCH-local window. In the current evidence these are measurements, not deployment tools: the bracket remains tied to realized batches, rollouts, optimizer state, and the calibrated step-size window.

\paragraph{Why bracket-driven updates are more than order prediction.} At the drift-matched reference $\theta_{\rm ref}=\theta_0-\eta(g_A+g_B)$, define $c_{AB}:=\tfrac12(H_B g_A + H_A g_B)$, $\mu:=\langle g_E(\theta_{\rm ref}), c_{AB}\rangle$, and $\sigma:=\langle g_E(\theta_{\rm ref}), b_{AB}\rangle$. For the bracket-only step $\theta_{\rm BO} := \theta_{\rm ref} - \tfrac12\eta^2 \cdot \mathrm{sign}(\hat\sigma)\cdot b_{AB}$ ($\hat\sigma$ an estimate of $\sigma$), Proposition~\ref{prop:bo} gives
\begin{equation}
\mathcal{L}_E(\theta_{\rm BO}) - \min\{\mathcal{L}_E(\theta_{AB}), \mathcal{L}_E(\theta_{BA})\} \;=\; -\eta^2 \mu + O(\eta^3).
\label{eq:bo_beats}
\end{equation}
When $\mu > 0$ (the symmetric drift on $E$ is target-disadvantageous) and the sign of $\sigma$ is estimated correctly, this update beats the better pure order by $\eta^2\mu$ to leading order in $\eta$ (App.~\ref{app:bo_proof}). Thus the correction experiments are not redundant with order prediction, which forgoes the $\eta^2\mu$ term. The companion closure identity $\theta_{AB}-\eta^2 b_{AB}=\theta_{BA}+O(\eta^3)$ motivates the matched-batch DPO edit $\theta_{AB}-\eta^2 b_{AB}$; GRPO-style results are closure checks on matched rollouts (App.~\ref{app:grpo}).

\section{Limitations}
\label{sec:limitations}
The framework measures the antisymmetric, path-local component of source interaction. The SFT intervention metric is order-gap NLL closure, not absolute loss improvement or discrete capability preservation; at $k{=}1$, we make no claim about GSM8K exact match or HumanEval pass@1 (App.~\ref{app:iterative_extended}). The reported SFT protocol uses SGD with $k=1$--$8$ and two sources per path; DPO correction requires fp32 and is matched-batch; iterative recovery requires $\theta_{BA}$. GRPO-style evidence comes from a frozen matched-rollout KL surrogate on a 0.5B model with analytic rewards (Appendix~\ref{app:grpo}). AdamW evidence is limited. The unmodified SGD bracket is at most weakly predictive of AdamW order ($57.1\%$ at $k{=}20$; \citealp[App.~D.7]{geometry_sequential_learning}), so we use the commutator on the optimizer state instead: we find token concentration of the gradient-space readout on Qwen-3-4B (Gini $0.980$) and endpoint closure in one subspace of Qwen-2.5-1.5B fp32 (parameter cosine $0.998$, NLL gap closure $79.5\%$ $[72.6\%, 85.9\%]$; App.~\ref{app:adamw}). DPO bracket-direct edits and GRPO-style closure checks in the main text use SGD, where the BCH identity is exact at $O(\eta^2)$. ``Memory'' here is parametric and pair-conditioned: the two candidate sources are supplied, the strongest test compares both orderings, and the trace attenuates under continued training (App.~\ref{app:continuation}); we claim neither item-addressable recall nor broad capability gains. App.~\ref{app:cost} gives the per-pair cost, exact reuse across pairs, and cheaper curvature approximations.

\section{Conclusion}
\label{sec:conclusion}
Commutator memory turns the noncommutativity of two training updates into a measurable object with four uses of one bracket: a scalar order forecast, a sparse token report, a path-local parameter-space direction, and a paired-endpoint statistic that tells which order produced each $k{=}1$ SGD endpoint, correct $91.7\%$ of the time across four LLMs (Corollary~\ref{cor:delta_s}).

\begin{ack}
No funding or competing interests to declare.
\end{ack}

\bibliographystyle{plainnat}
\bibliography{references}

\appendix
\makeatletter
\renewcommand{\thesection}{\ifnum\c@section>26 A\@Alph{\numexpr\c@section-26\relax}\else\@Alph\c@section\fi}
\renewcommand{\theHsection}{appendix.\arabic{section}}
\makeatother
\clearpage
\section*{Broader impact and release statement}
\label{app:impact_release}
The framework reveals where in vocabulary space the noncommutativity of training updates concentrates and could support more interpretable diagnosis of capability shifts in sequential post-training pipelines. Corollary~\ref{cor:delta_s} (App.~\ref{app:bracket_derivation}) establishes that, given a public base $\theta_0$, candidate training-domain pair $(A, B)$, and access to both alternate-order endpoints $\theta_{AB}, \theta_{BA}$, the antisymmetric statistic $\Delta s := \langle \theta_{AB} - \theta_{BA},\, b_{AB}\rangle$ identifies which order produced each $k=1$ SGD endpoint via $\mathrm{sign}(\Delta s)$ whenever its leading $\eta^2\|b_{AB}\|^2$ term dominates the $O(\eta^3)$ remainder (eq.~\ref{eq:delta_s}; the paired difference cancels the shared drift). Across $72$ pair-seed units on four models (Qwen-3-4B, Qwen-2.5-1.5B, Llama-3.2-1B, Llama-3.1-8B; two architectures, four scales), combined sign accuracy is $66/72 = 91.7\%$ (Wilson $95\%$ CI $[83\%, 96\%]$); both $4$B and $8$B units reach $100\%$ (\S\ref{sec:forensic}, Appendix~\ref{app:forensic}). The main paired-endpoint accuracy is for $k=1$ SGD; AdamW evidence is in App.~\ref{app:adamw}, and we do not release endpoint-assignment tooling. We have not evaluated bypassing safety training, private-data recovery, or selective capability editing; the released material is limited to order-gap measurement.

The supplementary archive includes source code, fixed protocol definitions, result files for reported claims, and analysis scripts. Dataset versions, model checkpoints, hyperparameters, seed schedules, learning-rate calibrations, and compute budgets are documented in Appendix~\ref{app:reproducibility}. Token-level interventions (Appendix~\ref{app:intervention_types}), the DPO correction (Appendix~\ref{app:dpo_extended}), and the GRPO-style test (Appendix~\ref{app:grpo}) follow these fixed protocol definitions.

\section{Bracket derivation and optimizer connection}
\label{app:bracket_derivation}

\paragraph{Optimizer connection and training anholonomy.} It is useful to separate the \emph{training schedule} from the \emph{model state}. Let $\Theta$ be parameter space and let the exposure base space be $\mathcal{B}=\R_{\geq 0}^N$, with coordinate $s_i$ measuring cumulative exposure to source $i$ in learning-rate units. The total space is the trivial bundle $\mathcal{B}\times\Theta\to\mathcal{B}$. Each source defines an optimizer vector field on the fiber; for SGD, $f_i(\theta) = -\nabla_\theta \mathcal{L}_i(\theta) = -g_i(\theta)$. We define the horizontal lift of the exposure direction $\partial_{s_i}$ by $\widetilde X_i = \partial_{s_i} + f_i(\theta)$. The resulting optimizer connection has vertical-valued curvature $\Omega_{ij} := \operatorname{ver}[\widetilde X_i,\widetilde X_j]$. Because the exposure coordinates commute and the losses are stationary in $s$, this reduces to the optimizer-vector-field bracket $\Omega_{ij} = [f_i,f_j]$. For SGD this curvature component is exactly $\Omega_{AB} = Df_B f_A - Df_A f_B = H_B g_A - H_A g_B$.

The two schedules $A\!\to\!B$ and $B\!\to\!A$ are paths in exposure space with the same endpoints $(0,0)$ and $(\eta,\eta)$, but their horizontal lifts end at different model parameters. The endpoint difference is the small-rectangle anholonomy of this optimizer connection.

\paragraph{Self-contained derivation of eq.~\eqref{eq:endpoint_diff}.} Let $F_A(\theta)=\theta+\eta f_A(\theta)$ and $F_B(\theta)=\theta+\eta f_B(\theta)$ denote one Euler optimizer step, with $f_i=-g_i$. Taylor expansion gives
\begin{align}
F_B(F_A(\theta_0)) &= \theta_0 + \eta f_A + \eta f_B + \eta^2 Df_B f_A + O(\eta^3),\\
F_A(F_B(\theta_0)) &= \theta_0 + \eta f_B + \eta f_A + \eta^2 Df_A f_B + O(\eta^3),
\end{align}
where all vector fields and derivatives are evaluated at $\theta_0$. Subtracting cancels the first-order drift and leaves
\[
\theta_{AB}-\theta_{BA} = \eta^2(Df_Bf_A-Df_Af_B)+O(\eta^3) = \eta^2(H_Bg_A-H_Ag_B)+O(\eta^3).
\]
This is Lemma 2.1 of \citet{geometry_sequential_learning}. The sign convention differs from continuous-time exponential-flow commutators; for the discrete update $\theta \mapsto \theta - \eta g$, the leading-order AB-vs-BA defect is $\eta^2 b_{AB}$ with no $1/2$ factor.

\begin{lemma}[Symmetric/antisymmetric Trotter expansion]
\label{lem:sym_antisym}
Let $\theta_{\rm ref} := \theta_0 - \eta(g_A + g_B)$, $c_{AB} := \tfrac12(H_B g_A + H_A g_B)$, and $b_{AB} := H_B g_A - H_A g_B$, all evaluated at $\theta_0$. Then
\[
\theta_{AB} = \theta_{\rm ref} + \eta^2 \bigl(c_{AB} + \tfrac12 b_{AB}\bigr) + O(\eta^3), \quad
\theta_{BA} = \theta_{\rm ref} + \eta^2 \bigl(c_{AB} - \tfrac12 b_{AB}\bigr) + O(\eta^3).
\]
The antisymmetric component $\eta^2 b_{AB}$ is the AB-vs-BA defect (Lemma~2.1 of \citet{geometry_sequential_learning}); the symmetric component $\eta^2 c_{AB}$ is shared by both orders.
\end{lemma}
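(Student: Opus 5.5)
The approach is to reuse the two second-order Euler expansions from the self-contained derivation of eq.~\eqref{eq:endpoint_diff} and regroup their quadratic terms into symmetric and antisymmetric parts. With $f_i=-g_i$ and $Df_i=-H_i$, the quadratic term $\eta^2 Df_B f_A$ equals $+\eta^2 H_B g_A$, because the two minus signs cancel. Likewise $\eta^2 Df_A f_B = \eta^2 H_A g_B$. The expansions therefore read
\[
\theta_{AB} = \theta_0 - \eta(g_A+g_B) + \eta^2 H_B g_A + O(\eta^3), \qquad \theta_{BA} = \theta_0 - \eta(g_A+g_B) + \eta^2 H_A g_B + O(\eta^3).
\]
The first-order part is exactly $\theta_{\rm ref}$ by definition.

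Before regrouping, I would re-derive the first expansion directly, so the signs and the remainder are controlled. We have $\theta_1=\theta_0-\eta g_A(\theta_0)$ and $\theta_{AB}=\theta_1-\eta g_B(\theta_1)$. Taylor-expanding gives $g_B(\theta_1)=g_B(\theta_0)-\eta H_B(\theta_0) g_A(\theta_0)+R$. Under the standing $C^3$ smoothness assumption (bounded third derivatives near $\theta_0$), the remainder satisfies $\|R\|\le \tfrac12 \sup\|\nabla^3\mathcal{L}_B\|\,\eta^2\|g_A\|^2$. Multiplying by $\eta$ makes it $O(\eta^3)$ uniformly. The $BA$ expansion follows by swapping $A$ and $B$.

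The remaining step is pure algebra, using the identities
\[
H_B g_A = c_{AB} + \tfrac12 b_{AB}, \qquad H_A g_B = c_{AB} - \tfrac12 b_{AB}.
\]
These hold because $c_{AB}+\tfrac12 b_{AB} = \tfrac12(H_Bg_A+H_Ag_B)+\tfrac12(H_Bg_A-H_Ag_B)=H_Bg_A$, and the other identity follows in the same way. Substituting them into the two expansions gives both displayed formulas of Lemma~\ref{lem:sym_antisym}.

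For the interpretive sentence, two checks suffice. Subtracting the two formulas recovers $\theta_{AB}-\theta_{BA}=\eta^2 b_{AB}+O(\eta^3)$, which is eq.~\eqref{eq:endpoint_diff} and Lemma~2.1 of \citet{geometry_sequential_learning}. Averaging them gives $\theta_{\rm ref}+\eta^2 c_{AB}+O(\eta^3)$, and $c_{AB}$ is invariant under swapping $A$ and $B$, so it is the shared symmetric component.

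The only real obstacle is sign bookkeeping. The discrete update carries a minus sign, the paper says its convention differs from the continuous-flow commutator, and there is no factor $\tfrac12$ in the defect. I would therefore verify $Df_Bf_A=+H_Bg_A$ explicitly; a sign slip would swap which order receives $+\tfrac12 b_{AB}$.
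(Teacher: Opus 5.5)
Your proposal is correct and follows the paper's own proof. The paper likewise takes the two second-order Euler expansions with $f_i=-g_i$, obtains $\theta_{AB}-\theta_{\rm ref}=\eta^2 H_B g_A+O(\eta^3)$, rewrites $H_B g_A=c_{AB}+\tfrac12 b_{AB}$, and treats $\theta_{BA}$ symmetrically. Your explicit Taylor-remainder bound and the check that $Df_Bf_A=+H_Bg_A$ only spell out details the paper leaves implicit.
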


\begin{proof}
From the two expansions above, $\theta_{AB} = \theta_0 + \eta(f_A + f_B) + \eta^2 Df_B f_A + O(\eta^3)$ with $f_i = -g_i$, so $\theta_{AB} - \theta_{\rm ref} = \eta^2 H_B g_A + O(\eta^3) = \eta^2(c_{AB} + \tfrac12 b_{AB}) + O(\eta^3)$. Symmetrically for $\theta_{BA}$.
\end{proof}

This expansion is used in the proof of Proposition~\ref{prop:bo} and Theorem~\ref{thm:forensic_id}.

\paragraph{Trotter-reference Taylor expansion (eq.~\eqref{eq:target_score}).} Let $\delta\theta_{\mathrm{exact}}:=\theta_{AB}-\theta_{BA}$. Taylor expansion of $\mathcal{L}_E$ around $\theta_{\text{ref}}$ gives $\Delta_E(A,B) = \langle g_E(\theta_{\text{ref}}), \delta\theta_{\mathrm{exact}}\rangle + O(\eta^4)$. Substituting $\delta\theta_{\mathrm{exact}}=\eta^2 b_{AB}+O(\eta^3)$ yields eq.~\eqref{eq:target_score}; $O(\eta^4)$ is the loss-linearization error for the actual endpoint displacement, while the bracket-only predictor has the expected $O(\eta^3)$ Euler truncation.

\paragraph{Proposition: when the bracket-only update beats both pure orders.}
\label{app:bo_proof}
We give a proof of Equation~\ref{eq:bo_beats} and the associated proposition stated in Section~\ref{sec:discussion}. The result is a direct corollary of Lemma~\ref{lem:sym_antisym}. \citet[App.~E.8]{geometry_sequential_learning} evaluated this bracket-only step empirically. The proposition below gives its conditional leading-order guarantee; we state it because it clarifies the symmetric/antisymmetric decomposition used by the matched-batch DPO closure (Appendix~\ref{app:dpo_extended}) and the GRPO-style closure check (Appendix~\ref{app:grpo}).

\begin{proposition}[Bracket-only update can beat both pure orders]
\label{prop:bo}
Let $\theta_{\rm ref} := \theta_0 - \eta(g_A + g_B)$ be the drift-matched first-order reference. Let
\[
c_{AB} := \tfrac12 (H_B g_A + H_A g_B), \qquad b_{AB} := H_B g_A - H_A g_B
\]
be the symmetric and antisymmetric components of the second-order endpoint correction at $\theta_0$, with $\mu := \langle g_E(\theta_{\rm ref}), c_{AB}\rangle$, $\sigma := \langle g_E(\theta_{\rm ref}), b_{AB}\rangle$. Define the bracket-only update
\[
\theta_{\rm BO} := \theta_{\rm ref} - \tfrac12 \eta^2 \cdot \mathrm{sign}(\hat\sigma) \cdot b_{AB},
\]
where $\hat\sigma$ is an estimate of the sign of $\sigma$ (Section~\ref{sec:baselines} reports $\sigma$-correctness on the SGD predictor). Under the smoothness assumptions of \citet{geometry_sequential_learning} Lemma~2.1 (which gives an $O(\eta^3)$ parameter remainder for $\theta_{AB}, \theta_{BA}$ relative to their second-order Taylor truncations) and a loss-linearization Taylor remainder bounded at $O(\eta^4)$,
\begin{equation}
\mathcal{L}_E(\theta_{\rm BO}) - \min\{\mathcal{L}_E(\theta_{AB}), \mathcal{L}_E(\theta_{BA})\} = -\eta^2 \mu + O(\eta^3),
\label{eq:bo_minus_best}
\end{equation}
on the support where $\sigma \neq 0$ and $\mathrm{sign}(\hat\sigma)$ is correct. In particular, when $\mu > 0$ and $\eta$ is small enough that the leading $\eta^2\mu$ term dominates, $\theta_{\rm BO}$ has strictly lower target loss than the better pure order.
\end{proposition}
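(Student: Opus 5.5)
The plan is to reduce everything to first-order Taylor expansions of $\mathcal{L}_E$ around the common point $\theta_{\rm ref}$, using Lemma~\ref{lem:sym_antisym} for the two endpoints and the definition of $\theta_{\rm BO}$ directly. All three points sit within $O(\eta^2)$ of $\theta_{\rm ref}$, so each loss value equals $\mathcal{L}_E(\theta_{\rm ref})$ plus the inner product of $g_E(\theta_{\rm ref})$ with the displacement, up to a quadratic remainder. Because the displacements are $O(\eta^2)$ and the third derivatives are bounded by the smoothness assumption, that remainder is $O(\eta^4)$.

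First I apply Lemma~\ref{lem:sym_antisym}: $\theta_{AB}-\theta_{\rm ref}=\eta^2(c_{AB}+\tfrac12 b_{AB})+O(\eta^3)$, and likewise for $\theta_{BA}$ with $-\tfrac12 b_{AB}$. Inserting these into the expansion gives
\[
\mathcal{L}_E(\theta_{AB})=\mathcal{L}_E(\theta_{\rm ref})+\eta^2\bigl(\mu+\tfrac12\sigma\bigr)+O(\eta^3), \qquad
\mathcal{L}_E(\theta_{BA})=\mathcal{L}_E(\theta_{\rm ref})+\eta^2\bigl(\mu-\tfrac12\sigma\bigr)+O(\eta^3).
\]
Here the $O(\eta^3)$ parameter remainder only enters linearly through the bounded $g_E(\theta_{\rm ref})$.

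The minimum of the two losses is $\mathcal{L}_E(\theta_{\rm ref})+\eta^2(\mu-\tfrac12|\sigma|)+O(\eta^3)$. This holds without any case split, since $\min(x+r_1,y+r_2)=\min(x,y)+O(\max|r_i|)$ by the $1$-Lipschitz property of $\min$. That property avoids having to argue which order wins when $|\sigma|$ is small.

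For the bracket-only step, $\theta_{\rm BO}-\theta_{\rm ref}=-\tfrac12\eta^2\,\mathrm{sign}(\hat\sigma)\,b_{AB}$ exactly. Linearization therefore gives $\mathcal{L}_E(\theta_{\rm BO})=\mathcal{L}_E(\theta_{\rm ref})-\tfrac12\eta^2\,\mathrm{sign}(\hat\sigma)\,\sigma+O(\eta^4)$. On the support where $\sigma\neq0$ and $\mathrm{sign}(\hat\sigma)=\mathrm{sign}(\sigma)$, this equals $\mathcal{L}_E(\theta_{\rm ref})-\tfrac12\eta^2|\sigma|+O(\eta^4)$. Subtracting the minimum cancels both $\mathcal{L}_E(\theta_{\rm ref})$ and the $\mp\tfrac12\eta^2|\sigma|$ terms, leaving $-\eta^2\mu+O(\eta^3)$, which is eq.~\eqref{eq:bo_minus_best}. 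The strict-improvement clause then follows: if $\mu>0$, then for $\eta$ below a threshold set by the constant in the $O(\eta^3)$ term, $-\eta^2\mu+O(\eta^3)<0$.

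The main obstacle is making the remainders uniform so that ``$\eta$ small enough'' is meaningful. This requires two things. First, the third-derivative bounds must hold on a neighborhood that contains $\theta_{\rm ref}$, $\theta_{AB}$, $\theta_{BA}$ and $\theta_{\rm BO}$; this is the stated smoothness assumption. Second, $g_E(\theta_{\rm ref})$, $c_{AB}$ and $b_{AB}$ must remain bounded as $\eta\to0$. The one subtlety is that $\mu$ and $\sigma$ themselves depend on $\eta$ through $\theta_{\rm ref}$. I would handle this by noting that they converge to their values at $\theta_0$, so the comparison is carried out pointwise in $\eta$ with constants uniform on the neighborhood. The threshold on $\eta$ is then stated relative to $\mu$ evaluated at that $\eta$, exactly as in the proposition's wording.
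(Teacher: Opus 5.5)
Your proposal is correct and follows the paper's proof essentially step for step. You expand $\mathcal{L}_E$ around $\theta_{\rm ref}$ using Lemma~\ref{lem:sym_antisym} for the two endpoints, take the minimum via the $1$-Lipschitz property of $\min$, exploit the exact $O(\eta^2)$ displacement of $\theta_{\rm BO}$ to get an $O(\eta^4)$ remainder, and subtract; your extra remark on the $\eta$-dependence of $\mu$ and $\sigma$ through $\theta_{\rm ref}$ is a harmless refinement that the paper leaves implicit.
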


\begin{proof}
By Lemma~\ref{lem:sym_antisym}, the actual two-step SGD endpoints satisfy
\[
\theta_{AB} - \theta_{\rm ref} = \eta^2(c_{AB} + \tfrac12 b_{AB}) + r_{AB}, \qquad
\theta_{BA} - \theta_{\rm ref} = \eta^2(c_{AB} - \tfrac12 b_{AB}) + r_{BA},
\]
with $\|r_{AB}\|,\|r_{BA}\|=O(\eta^3)$ uniformly in the local smoothness neighborhood.
The bracket-only construction $\theta_{\rm BO} - \theta_{\rm ref} = -\tfrac12 \eta^2 \cdot \mathrm{sign}(\hat\sigma) \cdot b_{AB}$ has \emph{no} $\eta^3$ parameter remainder: it is defined exactly by an arithmetic combination of $g_A, g_B$, and their HVPs at $\theta_0$.
Let $s:=\mathrm{sign}(\sigma)$; on the correct-sign support, $\mathrm{sign}(\hat\sigma)=s$.

Taylor expansion of $\mathcal{L}_E$ around $\theta_{\rm ref}$ gives, for any displacement $\Delta\theta$ with $\|\Delta\theta\| = O(\eta^2)$,
\[
\mathcal{L}_E(\theta_{\rm ref} + \Delta\theta) = \mathcal{L}_E(\theta_{\rm ref}) + \langle g_E(\theta_{\rm ref}), \Delta\theta \rangle + \tfrac12 \langle \Delta\theta, H_E \Delta\theta\rangle + O(\|\Delta\theta\|^3),
\]
where the quadratic term is $O(\eta^4)$ and the cubic remainder is $O(\eta^6)$. Substituting the pure-order displacements and absorbing the linear contribution of $r_{AB},r_{BA}$ into $O(\eta^3)$ gives
\[
\mathcal{L}_E(\theta_{AB}) = \mathcal{L}_E(\theta_{\rm ref}) + \eta^2(\mu + \tfrac12 \sigma) + O(\eta^3),
\]
\[
\mathcal{L}_E(\theta_{BA}) = \mathcal{L}_E(\theta_{\rm ref}) + \eta^2(\mu - \tfrac12 \sigma) + O(\eta^3).
\]
Therefore
\[
\min\{\mathcal{L}_E(\theta_{AB}),\mathcal{L}_E(\theta_{BA})\}
= \mathcal{L}_E(\theta_{\rm ref}) + \eta^2\mu - \tfrac12\eta^2|\sigma| + O(\eta^3).
\]
This min expansion does not require a separation condition on $|\sigma|$: the map $(x,y)\mapsto \min\{x,y\}$ is 1-Lipschitz, so the $O(\eta^3)$ endpoint remainders perturb the leading-order minimum by at most $O(\eta^3)$. A nonzero $\sigma$ is only needed to define the sign-selected BO direction; in finite-sample experiments, a margin $|\sigma|$ above the estimator noise floor is the practical sign-stability condition.
For the bracket-only point,
\[
\mathcal{L}_E(\theta_{\rm BO}) = \mathcal{L}_E(\theta_{\rm ref}) - \tfrac12 \eta^2 s\sigma + O(\eta^4)
= \mathcal{L}_E(\theta_{\rm ref}) - \tfrac12 \eta^2 |\sigma| + O(\eta^4),
\]
because $\mathrm{sign}(\hat\sigma)$ is correct. Subtracting gives Equation~\ref{eq:bo_minus_best}. The $O(\eta^3)$ residual is the Euler endpoint truncation that the bracket-only construction does not encode; it would improve to $O(\eta^4)$ only against the truncated second-order pure endpoints (or for exactly quadratic losses). When $\mu > 0$ and $\eta$ is small enough that the leading $-\eta^2\mu$ term governs the sign of the difference, $\theta_{\rm BO}$ improves on the better pure order by $\eta^2\mu$ at leading order, and therefore improves on both pure orders. The empirical regression slope of $1.15$ and 12/13 successful trials on the $\mu > 0$ subset (Section~\ref{sec:controlled_tests}) at $\eta = 1.91 \cdot 10^{-3}$ indicate that this regime is realized in practice.
\end{proof}

\paragraph{Connection to the matched-batch endpoint edit.} The DPO closure (Appendix~\ref{app:dpo_extended}) and SFT iterative correction (Appendix~\ref{app:iterative_extended}) both apply the operation $\theta_{AB} \mapsto \theta_{AB} - \eta^2 b_{AB}$ rather than the BO construction at $\theta_{\rm ref}$. Direct expansion gives $\theta_{AB} - \eta^2 b_{AB} = \theta_{\rm ref} + \eta^2(c_{AB} - \tfrac12 b_{AB}) + O(\eta^3) = \theta_{BA} + O(\eta^3)$, recovering the reversed-order endpoint at $O(\eta^3)$. Proposition~\ref{prop:bo} is the complementary statement: at the drift-matched reference rather than at the trained $\theta_{AB}$, the sign-selected bracket-only step removes the symmetric target-loss drift and beats the better pure order by $\eta^2\mu$ at leading order. Both constructions exploit the same antisymmetric/symmetric decomposition; the matched-batch DPO edit $\theta_{AB}-\lambda\eta^2 b_{AB}$ at $\lambda{=}1$ is the closure-targeting case ($\lambda{=}1$ recovers $\theta_{BA}$), while for $0<\lambda<1$ the edit lies, to leading order, between the two endpoints and keeps the symmetric $c_{AB}$ term that the BO step omits. The non-monotone $\lambda$-sweep peaking at $\lambda{=}1$ (App.~\ref{app:r79_lambda}) is therefore a signature of the second-order prescribed step rather than ``any small move along $b_{AB}$ helps.''

\begin{theorem}[The order of two updates is identifiable from the endpoint]
\label{thm:forensic_id}
Let $\theta_0 \in \R^d$ be a public base model. Let $g_A := \nabla \mathcal{L}_A(\theta_0)$, $g_B := \nabla \mathcal{L}_B(\theta_0)$ be one-step SGD gradients on candidate training-domain pair $(A, B)$ at $\theta_0$, and let $H_A := \nabla^2 \mathcal{L}_A(\theta_0)$, $H_B := \nabla^2 \mathcal{L}_B(\theta_0)$ be the corresponding Hessians. Let $\theta_{\mathrm{query}} \in \{\theta_{AB}, \theta_{BA}\}$ result from $k=1$ SGD on either ordering with step size $\eta$. Define
\[
b_{AB} := H_B g_A - H_A g_B, \qquad c_{AB} := \tfrac12 (H_B g_A + H_A g_B), \qquad \theta_{\mathrm{ref}} := \theta_0 - \eta(g_A + g_B),
\]
\[
s(\theta) := \langle \theta - \theta_{\mathrm{ref}},\, b_{AB} \rangle, \qquad \mathrm{SCR} := \frac{\tfrac12 \|b_{AB}\|^2}{|\langle c_{AB},\, b_{AB}\rangle|},
\]
where SCR is the ratio of the antisymmetric term to the symmetric one, with $\mathrm{SCR}:=\infty$ when the denominator is zero and $b_{AB}\neq0$. Under the smoothness conditions of Section~\ref{sec:background} and the inequality $\mathrm{SCR} > 1$ (equivalently $\|b_{AB}\|^2 > 2|\langle c_{AB}, b_{AB}\rangle|$),
\[
\mathrm{sign}\bigl(s(\theta_{AB})\bigr) = +1 \qquad \text{and} \qquad \mathrm{sign}\bigl(s(\theta_{BA})\bigr) = -1
\]
to leading order as $\eta\to0$. At finite $\eta$, the conclusion requires the displayed $O(\eta^2)$ margins to dominate the $O(\eta^3 \|b_{AB}\|)$ remainders and estimator error.
\end{theorem}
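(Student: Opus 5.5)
The plan is to reduce the statement to Lemma~\ref{lem:sym_antisym} and a one-line inner-product computation. By that lemma, the two $k=1$ SGD endpoints satisfy
\[
\theta_{AB}-\theta_{\rm ref}=\eta^2\bigl(c_{AB}+\tfrac12 b_{AB}\bigr)+r_{AB},\qquad \theta_{BA}-\theta_{\rm ref}=\eta^2\bigl(c_{AB}-\tfrac12 b_{AB}\bigr)+r_{BA}.
\]
The remainders obey $\|r_{AB}\|,\|r_{BA}\|=O(\eta^3)$ uniformly, by the $C^3$ smoothness assumption of Section~\ref{sec:background}. Taking the inner product with $b_{AB}$ and applying Cauchy--Schwarz to the remainder gives
\[
s(\theta_{AB})=\eta^2\bigl(\tfrac12\|b_{AB}\|^2+\langle c_{AB},b_{AB}\rangle\bigr)+O(\eta^3\|b_{AB}\|),
\]
\[
s(\theta_{BA})=\eta^2\bigl(-\tfrac12\|b_{AB}\|^2+\langle c_{AB},b_{AB}\rangle\bigr)+O(\eta^3\|b_{AB}\|).
\]

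The sign claim then follows from the hypothesis $\mathrm{SCR}>1$. This hypothesis says $\tfrac12\|b_{AB}\|^2>|\langle c_{AB},b_{AB}\rangle|$, and it forces $b_{AB}\neq 0$; if the denominator vanishes, SCR is $\infty$ and the condition still holds. For $\theta_{AB}$, the leading coefficient is at least $\tfrac12\|b_{AB}\|^2-|\langle c_{AB},b_{AB}\rangle|>0$. For $\theta_{BA}$, the leading coefficient is at most $-\bigl(\tfrac12\|b_{AB}\|^2-|\langle c_{AB},b_{AB}\rangle|\bigr)<0$. Set
\[
m:=\tfrac12\|b_{AB}\|^2-|\langle c_{AB},b_{AB}\rangle|>0,
\]
which depends only on $\theta_0$. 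Then $|s(\theta)-\eta^2(\text{leading})|\le C\eta^3\|b_{AB}\|$, so for $\eta<m/(C\|b_{AB}\|)$ the sign of $s$ equals the sign of its leading coefficient. This gives $\mathrm{sign}\,s(\theta_{AB})=+1$ and $\mathrm{sign}\,s(\theta_{BA})=-1$ as $\eta\to0$. The same inequality $\eta^2 m > (\text{remainder}+\text{estimator error})$ is exactly the finite-$\eta$ margin condition stated in the theorem, so I would record it explicitly rather than only asymptotically.

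The step needing most care is the uniformity of the $O(\eta^3)$ remainder. The constant $C$ must depend only on the third-derivative bounds on the neighborhood of $\theta_0$, not on $\eta$; otherwise ``to leading order'' is not meaningful. I would obtain this from the integral form of Taylor's theorem applied to $F_B\circ F_A$ and $F_A\circ F_B$, exactly as in the Euler derivation above. The remaining steps are bookkeeping. One remark worth adding is why SCR is needed at all: the symmetric term $\eta^2\langle c_{AB},b_{AB}\rangle$ shifts both scores by the same amount. This shared shift is why the single-endpoint test can fail. It also leads directly to Corollary~\ref{cor:delta_s}, since the difference $s(\theta_{AB})-s(\theta_{BA})=\eta^2\|b_{AB}\|^2+O(\eta^3)$ cancels the shift and needs only $b_{AB}\neq0$.
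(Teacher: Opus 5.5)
Your proposal is correct and follows the paper's proof: you substitute the expansion from Lemma~\ref{lem:sym_antisym} into $s$, obtain the leading coefficients $\langle c_{AB},b_{AB}\rangle \pm \tfrac12\|b_{AB}\|^2$, and invoke $\mathrm{SCR}>1$. Your single margin $m=\tfrac12\|b_{AB}\|^2-|\langle c_{AB},b_{AB}\rangle|$ replaces the paper's case split on the sign of $\langle c_{AB},b_{AB}\rangle$ and gives an explicit threshold $\eta<m/(C\|b_{AB}\|)$, which is tidier bookkeeping of the same argument.
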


\begin{proof}
By the second-order endpoint expansion derived above (eq.~\eqref{eq:endpoint_diff} and the symmetric companion),
\[
\theta_{AB} = \theta_{\mathrm{ref}} + \eta^2 (c_{AB} + \tfrac12 b_{AB}) + O(\eta^3),
\qquad
\theta_{BA} = \theta_{\mathrm{ref}} + \eta^2 (c_{AB} - \tfrac12 b_{AB}) + O(\eta^3).
\]
Substituting into $s$:
\begin{align*}
s(\theta_{AB}) &= \eta^2 \langle c_{AB} + \tfrac12 b_{AB},\, b_{AB} \rangle + O(\eta^3 \|b_{AB}\|)\\
&= \eta^2 \left( \langle c_{AB},\, b_{AB}\rangle + \tfrac12 \|b_{AB}\|^2 \right) + O(\eta^3 \|b_{AB}\|),\\
s(\theta_{BA}) &= \eta^2 \left( \langle c_{AB},\, b_{AB}\rangle - \tfrac12 \|b_{AB}\|^2 \right) + O(\eta^3 \|b_{AB}\|).
\end{align*}
For $\mathrm{sign}(s(\theta_{AB})) = +1$ at leading order we need $\langle c_{AB}, b_{AB}\rangle + \tfrac12 \|b_{AB}\|^2 > 0$. If $\langle c_{AB}, b_{AB}\rangle \geq 0$, this is trivially satisfied since $\tfrac12\|b_{AB}\|^2 \geq 0$. If $\langle c_{AB}, b_{AB}\rangle < 0$, the condition reduces to $\tfrac12 \|b_{AB}\|^2 > |\langle c_{AB}, b_{AB}\rangle|$, which is exactly $\mathrm{SCR} > 1$. The argument for $\mathrm{sign}(s(\theta_{BA})) = -1$ is symmetric.
\end{proof}

\begin{corollary}[Paired-endpoint identifiability with drift cancellation]
\label{cor:delta_s}
Under the smoothness hypotheses of Theorem~\ref{thm:forensic_id} and given an unordered pair of alternate-order endpoints $\{\theta^{(1)},\theta^{(2)}\}=\{\theta_{AB},\theta_{BA}\}$, define the antisymmetric statistic
\begin{equation}
\Delta s_{12} := \langle \theta^{(1)} - \theta^{(2)},\, b_{AB} \rangle.
\label{eq:delta_s}
\end{equation}
If $\theta^{(1)}=\theta_{AB}$ and $\theta^{(2)}=\theta_{BA}$, then $\Delta s_{12} = \eta^2 \|b_{AB}\|^2 + O(\eta^3 \|b_{AB}\|)$; if the presentation is swapped, the sign is reversed. Thus the sign assigns the two endpoints whenever the leading term dominates the finite-$\eta$ and estimator remainders, with no $\mathrm{SCR}$ condition required.
\end{corollary}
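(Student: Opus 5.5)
The corollary follows from Lemma~\ref{lem:sym_antisym}, and equivalently from eq.~\eqref{eq:endpoint_diff}, by differencing the two endpoint expansions before pairing with $b_{AB}$. I will write
\[
\theta_{AB}=\theta_{\rm ref}+\eta^2\bigl(c_{AB}+\tfrac12 b_{AB}\bigr)+r_{AB},\qquad \theta_{BA}=\theta_{\rm ref}+\eta^2\bigl(c_{AB}-\tfrac12 b_{AB}\bigr)+r_{BA},
\]
with $\|r_{AB}\|,\|r_{BA}\|=O(\eta^3)$ uniformly under the smoothness assumption of Section~\ref{sec:background}. Subtracting the two expansions cancels both the first-order reference $\theta_{\rm ref}$ and the symmetric second-order term $\eta^2 c_{AB}$. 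This leaves $\theta_{AB}-\theta_{BA}=\eta^2 b_{AB}+(r_{AB}-r_{BA})$, which is exactly eq.~\eqref{eq:endpoint_diff}.

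Next I pair this difference with $b_{AB}$:
\[
\Delta s_{12}=\eta^2\|b_{AB}\|^2+\langle r_{AB}-r_{BA},b_{AB}\rangle .
\]
By Cauchy--Schwarz the second term is bounded by $\|r_{AB}-r_{BA}\|\,\|b_{AB}\|=O(\eta^3\|b_{AB}\|)$, which is the stated remainder form. If the presentation is swapped, $\theta^{(1)}-\theta^{(2)}$ changes sign, and since $\Delta s_{12}$ is linear in that difference, it flips sign exactly, including the remainder. In both cases the sign equals $\pm1$ according to the true assignment whenever $\eta^2\|b_{AB}\|^2$ exceeds $|\langle r_{AB}-r_{BA},b_{AB}\rangle|$ plus any estimator error. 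A sufficient condition is $\eta\|b_{AB}\|>C\eta^{2}$ for the remainder constant $C$, i.e.\ $\|b_{AB}\|>C\eta$, which holds for small $\eta$ when $b_{AB}\neq0$.

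Finally I contrast this with Theorem~\ref{thm:forensic_id} to explain why no SCR condition is needed. There, $s(\theta)$ keeps the term $\eta^2\langle c_{AB},b_{AB}\rangle$ common to both endpoints, and that term can push both values of $s$ to the same side of zero. In the difference it cancels identically, so only $\|b_{AB}\|^2\ge0$ remains as the leading coefficient. The only real obstacle is bookkeeping. The $O(\eta^3)$ remainders must be controlled uniformly, and it is cleanest to state the bound relative to $\|b_{AB}\|$ rather than in absolute terms. Nondegeneracy $b_{AB}\neq0$ must also be noted explicitly; otherwise the leading term vanishes and the sign is undetermined.
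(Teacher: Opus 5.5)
Your proposal is correct and follows essentially the same route as the paper: difference the endpoint expansions of Lemma~\ref{lem:sym_antisym} to get eq.~\eqref{eq:endpoint_diff}, pair with $b_{AB}$ to obtain $\eta^2\|b_{AB}\|^2+O(\eta^3\|b_{AB}\|)$, use exact antisymmetry under swapping, and require $b_{AB}\neq 0$. Your explicit Cauchy--Schwarz bound and the threshold $\|b_{AB}\|>C\eta$ simply spell out steps the paper leaves implicit.
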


\begin{proof}
By Lemma~\ref{lem:sym_antisym}, $\theta_{AB}-\theta_{BA} = \eta^2 b_{AB} + O(\eta^3)$. Therefore $\langle\theta_{AB}-\theta_{BA},b_{AB}\rangle = \eta^2\|b_{AB}\|^2 + O(\eta^3\|b_{AB}\|)$. Swapping the two endpoints multiplies the statistic by $-1$. The leading term is strictly positive when $b_{AB}\neq 0$.
\end{proof}

\paragraph{When the per-query test fails.} The per-query sign condition in Theorem~\ref{thm:forensic_id} suffers from a finite-$\eta$ failure mode: when the symmetric drift $\eta^2 \langle c_{AB}, b_{AB}\rangle$ approaches $\eta^2 \cdot \tfrac12 \|b_{AB}\|^2$ in absolute value (i.e.\ $\mathrm{SCR}$ near $1$), both $s(\theta_{AB})$ and $s(\theta_{BA})$ can lie on the same side of zero, breaking the per-query sign test. Corollary~\ref{cor:delta_s} removes this leading symmetric contribution in the paired subtraction, leaving $\eta^2\|b_{AB}\|^2$ as the leading signal. Equivalently $\Delta s_{12}$ projects the antisymmetric endpoint difference onto $b_{AB}$ itself. We adopt $\mathrm{sign}(\Delta s_{12})$ as the practical paired-endpoint assignment statistic.

\paragraph{Reference-point invariance of $\Delta s$.} The Trotter reference $\theta_{\mathrm{ref}} = \theta_0 - \eta(g_A + g_B)$ enters the per-query score $s(\theta) = \langle \theta - \theta_{\mathrm{ref}}, b_{AB}\rangle$, but cancels in the paired subtraction: substituting $\theta_0$ for $\theta_{\mathrm{ref}}$ in the per-query score subtracts the same $\eta\langle g_A+g_B, b_{AB}\rangle$ from both $s(\theta_{AB})$ and $s(\theta_{BA})$. The implementation may equivalently use $\theta_0$ or $\theta_{\mathrm{ref}}$ for the paired statistic. This contrasts with the scalar $\sigma$ predictor of \citet{geometry_sequential_learning}, a single inner product $\langle g_E, b_{AB}\rangle$ with no endpoint subtraction; evaluating its target gradient at the drift-matched $\theta_{\mathrm{ref}}$ rather than at $\theta_0$ reduces the drift error from $O(\eta^3)$ to $O(\eta^4)$ (their Remark~2.6).

\paragraph{Empirical regime.} At finite $\eta$ the $O(\eta^3 \|b_{AB}\|)$ remainder, sampling error, HVP error, and finite-difference error are non-negligible, so the per-query $\mathrm{SCR} > 1$ test is leading-order necessary but not sufficient on the 1B, 1.5B and 8B runs, where $\|b_{AB}\|^2$ is small relative to noise. The paired $\Delta s$ estimator identifies the order across model scales: combined accuracy $66/72 = 91.7\%$ (Wilson $95\%$ CI $[83\%, 96\%]$) over $72$ pair-seed units across four models (Qwen-3-4B, Qwen-2.5-1.5B, Llama-3.2-1B, Llama-3.1-8B) and two architectures, with per-model rates $77.8\%$--$100\%$ (Appendix~\ref{app:forensic}). Both $4$B and $8$B units reach $100\%$; $1$B-class accuracy is $77.8\%$--$88.9\%$. Per-query $\mathrm{SCR}$ statistics: $72\%$--$89\%$ of units satisfy $\mathrm{SCR} > 1$ across runs. Replication across precision regimes (bf16 storage and full fp32) on the $1$B and $1.5$B models gives identical per-model sign-recovery rates (Table~\ref{tab:forensic_cross_model}, App.~\ref{app:forensic}); the main count uses each model once.

\paragraph{Practical relevance and prior work.} Theorem~\ref{thm:forensic_id} requires only the public base model $\theta_0$, candidate training-domain pair $(A, B)$ and corresponding gradient/HVP access at $\theta_0$, and the trained query $\theta_{\mathrm{query}}$ (the paired statistic of Corollary~\ref{cor:delta_s} needs both endpoints); no training-time signal is injected. This differs from adjacent settings: watermarking~\citep{kirchenbauer2023watermark, zhao2024provable} embeds a signal during generation, membership-inference attacks~\citep{shokri2017membership, carlini2022membership} target individual training examples, and influence-function methods~\citep{koh2017understanding, grosse2023studying, park2023trak} estimate per-sample influence on predictions.

\section{$\tau$ implementation: native-logit convention and finite differences}
\label{app:tau_implementation}

\paragraph{Native-logit convention.} The aggregate $\sum_k\tau_k$ is invariant to position-wise constant shifts of $\delta z$, because $\sum_k e_k=0$. Individual token attributions are reported in the model's native logit coordinates. Thus $\tau_k$ depends on the model's logit parameterization; only the sum is shift-invariant.

\paragraph{Finite-difference $\delta z$.} The analytic identity in Eq.~\eqref{eq:tau_exact} is exact for the true JVP $J_\theta z\,v$. In experiments we compute an operational finite-difference readout
\[
\widetilde{\delta z}_\epsilon(v)=\frac{z(\theta_{\text{ref}}+\epsilon v)-z(\theta_{\text{ref}}-\epsilon v)}{2\epsilon},
\]
with $\epsilon=1.0$ for SFT and $\epsilon=0.1$ in the corrected DPO identity checks. This is applied \emph{after} computing $b_{AB}$ via exact HVPs; only the logit-space JVP is finite-differenced. Consequently $\sum_k\widetilde{\tau}_{k,\epsilon}$ approximates the analytic JVP $\mathbb{E}[e(\theta_{\rm ref})^\top J_\theta z\,v]$ with $O(\epsilon^2)$ logit-truncation error and converges to the analytic aggregate $\eta^2\sigma$ as $\epsilon\to0$ for smooth logits. Individual token entries can vary with $\epsilon$ because finite differences integrate local curvature along the supplied displacement.

\paragraph{Choice of displacement $v$.} The decomposition over $k$ is exact for the displacement supplied to the linear readout. Using $v=\eta^2b_{AB}$ gives the leading bracket readout used throughout; using $v=\theta_{AB}-\theta_{BA}$ would give an endpoint-displacement readout whose aggregate matches $\Delta_E(A,B)$ up to the loss-linearization remainder.

\section{Precision policy for HVP computation}
\label{app:precision}

We distinguish model storage dtype, HVP compute dtype, the dtype in which the finite-difference perturbations $\theta_{\rm ref}\pm\epsilon v$ are formed, and loss/readout dtype. Bracket computations (double-backward HVPs for $H_B g_A - H_A g_B$) use float32. For top-token rank and support claims on bf16-stored models, the perturbed parameters are formed in fp32 during the readout check. The choice of end-to-end loss/intervention dtype is determined by whether the measured ordering gap is safely above the numerical resolution of the loss evaluation:

\begin{itemize}
    \item \textbf{Qwen-3-4B} has an ordering gap of $\sim\!7 \times 10^{-3}$ in the reported intervention setting. We run the intervention with bf16 model storage and use fp32 HVPs and fp32 perturbed parameters for the bracket readouts.
    \item \textbf{Qwen-2.5-1.5B} has a much smaller fp32 gap of $\sim\!10^{-4}$. We therefore run the intervention arm in full fp32 and report only the predefined above-floor subset (28/90 trials) in Table~\ref{tab:interventions}.
\end{itemize}

\section{Matched-batch DPO correction: per-pair table and figure (Qwen-3-4B fp32)}
\label{app:r79_perpair}

\begin{table}[h]
\centering
\small
\caption{Matched-batch bracket-correction closure on the six original DPO source pairs (Qwen-3-4B fp32, $\beta_{\text{DPO}}{=}0.1$, $\eta{=}2\text{e-}3$, 3 seeds $\times$ 10 trials each). Closure is heavy-tailed near the fp32 noise floor; per-pair seed-cluster CIs are descriptive diagnostics of within-pair variability with $n_{\rm clusters}=3$, not formal 95\% inference. Columns: mean and median are the winsorized $5/95$ mean and the median closure of the $\lambda{=}1$ edit; flat CI and cluster CI are per-trial and seed-clustered bootstrap $95\%$ intervals of the mean; wrong-sign mean and random mean are the mean closures of the $+\eta^2 b_{AB}$ edit and of a norm-matched random edit. Pair abbreviations: fq=\texttt{false\_qa}, flan=\texttt{flan\_v2\_p3}, share=\texttt{sharegpt}, ultra=\texttt{ultrachat}.}
\label{tab:r79_bracket_closure}
\begin{tabular*}{\textwidth}{@{\extracolsep{\fill}}lcccccc}
\toprule
Pair & mean & flat CI & cluster CI & median & wrong-sign mean & random mean \\
\midrule
fq:flan     & 0.675 & [0.561,\,0.767] & \textbf{[0.425,\,0.772]} & 0.796 & $-1.083$ & $+0.001$ \\
fq:share    & 0.454 & [0.157,\,0.703] & \textbf{[0.126,\,0.536]} & 0.739 & $-1.104$ & $+0.001$ \\
fq:ultra    & 0.647 & [0.477,\,0.798] & $[-0.386,\,0.669]$ & 0.778 & $-1.874$ & $-0.001$ \\
flan:share  & 0.545 & [0.247,\,0.776] & $[-2.306,\,0.825]$ & 0.807 & $-1.915$ & $-0.000$ \\
flan:ultra  & 0.720 & [0.645,\,0.788] & \textbf{[0.455,\,0.754]} & 0.785 & $-1.174$ & $+0.001$ \\
ultra:share & 0.454 & [0.219,\,0.665] & $[-0.107,\,0.604]$ & 0.735 & $-1.928$ & $+0.003$ \\
\midrule
\multicolumn{7}{p{0.95\textwidth}}{\textbf{Combined flat (180 trials):} winsor mean $\mathbf{=0.588}$ [CI $0.507,\,0.660$], median $=0.770$, raw $=0.318$.} \\
\multicolumn{7}{p{0.95\textwidth}}{\textbf{Seed-cluster bootstrap (pairs fixed):} winsor mean $\mathbf{=0.583}$ [CI $0.469,\,0.676$]. Seed-cluster strict-CI: \textbf{3/6}.} \\
\bottomrule
\end{tabular*}
\end{table}

\begin{figure}[h]
\centering
\includegraphics[width=\linewidth]{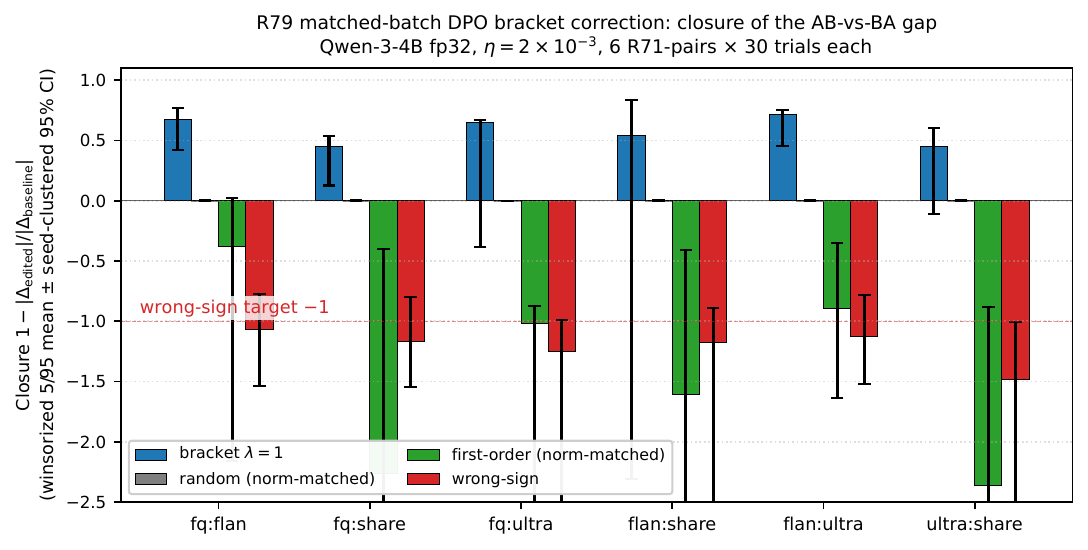}
\caption{Matched-batch DPO bracket-correction closure, per original pair. Bars are winsorized (5/95) means with seed-clustered bootstrap intervals (3 seed clusters per pair) shown as descriptive diagnostics of within-pair variability, not formal 95\% inference. Random direction near zero; wrong-sign edit at or below the Lemma~2.1 prediction of $-1$ (red dashed line); first-order norm-matched control underperforms the bracket.}
\label{fig:dpo_closure}
\end{figure}

\section{Matched-batch DPO correction: $\lambda$-sweep and control-direction signs}
\label{app:r79_lambda}

The matched-batch correction protocol prescribes $\lambda = 1$ in the bracket subtraction $\theta_{AB} - \lambda \cdot \eta^2 b_{AB}$ (Lemma 2.1 prediction; not tuned). We additionally report diagnostic $\lambda \in \{0.5, 2.0\}$ to test that the closure curve in $\lambda$ is non-monotone with a peak at $\lambda = 1$---the signature of a second-order predicted step rather than a generic ``any small move along $b_{AB}$ helps'' regime.

Across the six original pairs, per-pair median closures track the linearized prediction $C(\lambda) = 1 - |1 - \lambda|$ from $\theta_{AB} - \lambda\eta^2 b_{AB} = \theta_{BA} + (1-\lambda)\eta^2 b_{AB} + O(\eta^3)$: $\lambda = 0.5$ closes about half the gap ($6/6$ pair medians positive, range $[+0.45,+0.55]$); $\lambda = 1$ peaks at $0.78$ (median of pair medians); $\lambda = 2$ falls back to $-0.01$ on average. The symmetric peak at $\lambda{=}1$ rules out a generic ``any move along $b_{AB}$ helps'' alternative.

The control-direction signs in Table~\ref{tab:r79_bracket_closure} are accordingly:
\begin{itemize}
\item \textbf{Random direction (norm-matched)}: closure $\approx 0$ on every pair (within $10^{-3}$). Random Gaussian directions of the same Frobenius norm as $\eta^2 b_{AB}$ produce no systematic gap closure.
\item \textbf{Wrong-sign edit ($\theta_{AB} + \eta^2 b_{AB}$)}: winsorized mean closure $-1.21$ across pairs, individual-pair medians in $[-1.24, -0.89]$. The Lemma 2.1 prediction is exactly $-1$ (the gap roughly doubles); the pull of the raw means toward $-1.5$ comes from three pairs with raw means in $[-1.93, -1.87]$, where the trained $\theta_{AB}$ is far enough from $\theta_{BA}$ that adding the bracket moves into a higher-curvature region of the DPO-loss surface than $\theta_{BA}$ itself.
\item \textbf{First-order norm-matched ($\eta(g_A - g_B)$ rescaled to $\|\eta^2 b_{AB}\|$)}: winsorized mean $-0.93$, uniformly worse than bracket. First-order gradient differences are not the AB-vs-BA control variable, even when scaled to bracket magnitude.
\end{itemize}

\section{DPO closure-ratio statistical convention}
\label{app:dpo_stats_convention}

The closure metric of the matched-batch DPO correction $1 - |\Delta_{\text{edited}}|/|\Delta_{\text{baseline}}|$ is a ratio that inherits the same heavy-tail behavior as the SFT closure $C$ in Table~\ref{tab:interventions}: when a trial's baseline gap is near the fp32 DPO-loss noise floor of $2 \times 10^{-5}$, the denominator is small and individual trials are dominated by noise. To prevent the primary statistic from being dominated by $\sim 10\%$ of trials at the noise floor, the SFT intervention runs already use winsorized 5/95 means with bootstrap CI. For example, the Qwen-3-4B winsorized mean in Table~\ref{tab:interventions} is $+31.2\%$ (median $+32.0\%$), while the raw mean over the same 90 trials is $+20.4\%$ (the same ratio-tail effect at the SFT scale).

We apply the same effect-size convention to the DPO correction: winsorized 5/95 mean as the primary point estimate, with the median and raw mean reported alongside. Because each DPO pair has only three seed-index clusters, per-pair seed-cluster bootstrap intervals are descriptive diagnostics of within-pair variability rather than formal $95\%$ inference. The source-pair summary is the median paired bracket-minus-random closure difference: all $12/12$ pair medians are positive across the original and fresh-pair grids, while the flat-bootstrap strict-CI criterion holds for $11/12$ source pairs. The pairs share source domains, so this sign agreement is descriptive, not a significance test. A seed-level check on the $36$ pair-seed cells gives $36/36$ positive bracket-minus-random medians, but the seed cluster remains nested in pair so this is also descriptive. Flat bootstrap CIs are reported only as a comparison with the SFT convention. None of the qualitative cross-pair rankings (bracket $\gg$ random $\approx 0$; wrong-sign $\approx -1$; first-order $<$ bracket) depend on the choice of statistic.

\section{DPO extended methodology and per-pair detail}
\label{app:dpo_extended}

\paragraph{DPO loss derivative and corrected $\tau$.} The DPO loss \(L = -\log s(\beta_{\text{DPO}}\,m_i)\), with $s(x) := 1/(1+e^{-x})$ and margin $m_i = (\log p_c{-}\log p_c^{\text{ref}}) - (\log p_r{-}\log p_r^{\text{ref}})$, compresses per-token effects through a scalar; the per-token DPO derivative is
\[
\begin{aligned}
\tau_k &= \sum_i w_i\Bigl[
\sum_{t\in C_i}(p^c_{i,t}[k] - \mathbf{1}_{[c_t=k]})\,\delta z^c_{i,t}[k] \\
&\hspace{2.0cm} - \sum_{t\in R_i}(p^r_{i,t}[k] - \mathbf{1}_{[r_t=k]})\,\delta z^r_{i,t}[k]\Bigr],\\
w_i &= \beta_{\text{DPO}}\,s(-\beta_{\text{DPO}}\,m_i).
\end{aligned}
\]
restricted to completion positions $C_i, R_i$ and weighted by the per-example DPO factor $w_i$. The token-reweighting DPO-loss attributions use this derivative; Table~\ref{tab:dpo_per_pair} instead reads DPO-derived brackets through the cross-entropy readout on Pile evaluation text. The algebraic identity $\sum_k \tau_k = dL_{\text{DPO\_sum}}/d\theta\cdot d$ holds at relative error below $10^{-2}$ (median $2 \times 10^{-4}$, max $4 \times 10^{-3}$ across the 6 original pairs; Appendix~\ref{app:dpo_identity_check} table).

\paragraph{DPO token reweighting: detailed table on Qwen-3-4B fp32.} Table~\ref{app:r78_perpair} lists the per-pair results.
\begin{table}[h]
\centering
\small
\caption{DPO token reweighting with the corrected $\tau$ on the six original pairs (Qwen-3-4B fp32, $\eta{=}2\text{e-}3$, $\alpha{=}0.05$, the top-100 $|\tau|$ tokens whose sign matches a probe measurement of the baseline gap, 3 seeds $\times$ 20 trials). Pair abbreviations as in Table~\ref{tab:r79_bracket_closure}. Paired difference $= |\Delta_{\text{harmful}}|-|\Delta_{\text{random}}|$; flat diagnostic: whether its per-trial bootstrap CI excludes zero; direction: mitigation if the harmful edit shrinks the gap.}
\label{app:r78_perpair}
\begin{tabular*}{\textwidth}{@{\extracolsep{\fill}}lcccc}
\toprule
Pair & paired difference (mean) & CI 95\% & flat diagnostic & direction \\
\midrule
\texttt{fq:flan}     & $-1.15\text{e-}3$ & $[-1.93\text{e-}3,\,-4.30\text{e-}4]$ & positive & mitigation \\
\texttt{fq:share}    & negative          & excludes 0                         & positive & mitigation \\
\texttt{fq:ultra}    & negative          & excludes 0                         & positive & mitigation \\
\texttt{flan:share}  & ---               & includes 0                         & inconclusive & no effect \\
\texttt{flan:ultra} & negative          & excludes 0                         & positive & mitigation \\
\texttt{ultra:share}     & negative          & excludes 0                         & positive & mitigation \\
\midrule
\multicolumn{5}{p{0.95\textwidth}}{\textbf{Summary: 5/6 pairs have a per-trial CI excluding zero; all 6 pass the protocol checks (probe gap above its magnitude threshold, equal total loss-weight change for the harmful and random sets, and $\sum_k\tau_k$ matching the measured DPO loss change).}} \\
\bottomrule
\end{tabular*}

\end{table}

\paragraph{DPO token reweighting on Qwen-2.5-1.5B fp32.}
\label{app:r78_crossmodel}
At $\eta=2.03\text{e-}3$ on Qwen-2.5-1.5B fp32, the token-reweighting test has 4/6 flat descriptive positives (\texttt{false\_qa\_vs\_sharegpt}, \texttt{false\_qa\_vs\_ultrachat}, \texttt{flan\_v2\_p3\_vs\_sharegpt}, \texttt{ultrachat\_vs\_sharegpt}). We report this Qwen-2.5-1.5B arm as descriptive.

\paragraph{Calibration discipline: $\eta$ as a BCH-locality condition.}
\label{app:eta_calibration}
Lemma 2.1 of \citet{geometry_sequential_learning} is accurate when the $O(\eta^3)$ endpoint remainder is small relative to the leading $\eta^2\|b_{AB}\|$ bracket displacement, and when that bracket displacement remains subordinate to the shared first-order update. Heuristics that scale $\eta$ proportionally across model sizes (e.g., extrapolating an SFT-to-DPO ratio observed on one model to a different model) can violate this locality window and produce incoherent control signs. We use \citet{geometry_sequential_learning}'s $\sigma$-correctness check as the operational diagnostic: $\eta$ is BCH-local if $\sigma = \langle g_E, b_{AB}\rangle$ predicts the sign of the held-out AB-vs-BA loss gap with high accuracy and wrong-sign controls behave according to local antisymmetry. For the DPO arms, the same recipe gives $\eta = 2.03\text{e-}3$ on Qwen-2.5-1.5B fp32 and $1.05\text{e-}3$ on Llama-3.2-1B fp32 (on SFT probe units, $\mathrm{sign}\,\sigma$ matches the measured gap sign in $96/96$ and $199/204$), larger than the SFT values in Table~\ref{tab:eta}. The calibrated $\eta$ is specific to the model and objective, not a universal cross-model constant.

\paragraph{Token reweighting vs.\ parameter correction: intervention design, not readout correctness.} The two DPO tests share the same $b_{AB}$ but differ in how they intervene: token reweighting acts indirectly through SGD-time loss reweighting on top-$|\tau|$ tokens (token reweight $\to$ per-position residual $\to$ scalar margin $\to$ logistic-saturated derivative); the matched-batch correction acts directly as a one-step parameter update. Direct magnitude comparison between the reweighting test's paired difference and the correction's closure ratio is not meaningful; what is comparable is the bracket-vs-random separation within each protocol. Together they support the limited claim that the per-token $\tau_k$ is a readout of a parameter-level direction $b_{AB}$, and that the direct matched-batch update along that direction controls the aggregate AB-vs-BA loss gap.

\paragraph{Cross-pair summary.} All $12/12$ source-pair medians are positive; $11/12$ pass the flat-bootstrap strict-CI criterion; the across-pair seed-clustered aggregate is winsorized mean $0.583$ [CI $0.469, 0.676$] over $n=6$ original pairs. The wrong-sign cross-pair pull to $[-1.9,-1.2]$ on three pairs (vs.\ predicted $-1$) reflects $\theta_{AB}$ being far enough from $\theta_{BA}$ that adding $\eta^2 b_{AB}$ moves into a higher-curvature region than $\theta_{BA}$ itself. The qualitative ranking bracket $\gg$ random $\approx 0$, wrong-sign $\approx -1$, first-order $<$ bracket holds on every pair.

\section{DPO $\tau$ token-sum identity check (original pairs, per pair)}
\label{app:dpo_identity_check}

The corrected DPO $\tau$ derivation predicts that $\sum_k \tau_k$ should equal the directly-measured $\Delta L_{\text{DPO}}^{\text{FD}}$ on the same DPO batches at the finite-difference (FD) step $\varepsilon = 0.1$ in the direction $v = \eta^2 b_{AB}$. The token-reweighting protocol requires this token-sum identity check to pass at a 5\% relative-error tolerance; values reported here are well below tolerance (median $2 \times 10^{-4}$, max $4 \times 10^{-3}$).

\begin{table}[h]
\centering
\small
\caption{DPO $\tau$ identity check on the original pairs. Each pair: \(\sum_k \tau_k\) computed by finite-difference attribution at $\varepsilon = 0.1$, $v = \eta^2 b_{AB}$, with $\eta = 2 \times 10^{-3}$ (original-pairs protocol); compared to the directly-measured $\Delta L_{\text{DPO}}^{\text{FD}}$ on the same DPO batches.}
\label{tab:dpo_identity_check}
\begin{tabular}{lccc}
\toprule
Pair & $\sum_k \tau_k$ & $\Delta L_{\text{DPO}}^{\text{FD}}$ & rel.\ err.\ \\
\midrule
\texttt{false\_qa\_vs\_flan\_v2\_p3} & $-1.247\text{e}{-}1$ & $-1.247\text{e}{-}1$ & $5.9\text{e}{-}5$ \\
\texttt{false\_qa\_vs\_sharegpt} & $-9.150\text{e}{-}2$ & $-9.149\text{e}{-}2$ & $2.6\text{e}{-}5$ \\
\texttt{false\_qa\_vs\_ultrachat} & $-1.805\text{e}{-}1$ & $-1.804\text{e}{-}1$ & $1.4\text{e}{-}4$ \\
\texttt{flan\_v2\_p3\_vs\_sharegpt} & $-6.669\text{e}{-}2$ & $-6.670\text{e}{-}2$ & $2.4\text{e}{-}4$ \\
\texttt{flan\_v2\_p3\_vs\_ultrachat} & $+1.900\text{e}{-}3$ & $+1.907\text{e}{-}3$ & $3.6\text{e}{-}3$ \\
\texttt{ultrachat\_vs\_sharegpt} & $-3.304\text{e}{-}2$ & $-3.307\text{e}{-}2$ & $8.9\text{e}{-}4$ \\
\bottomrule
\end{tabular}
\end{table}

\section{Per-pair endpoint-correction closure (Qwen-3-4B, $k=1$)}
\label{app:endpoint_perpair}

Per-domain-pair closure for the single-step bracket correction of the trained endpoint; aggregate trends are summarized in Section~\ref{sec:controlled_tests}. The signal of interest is the dependence on $|\Delta_{\rm baseline}|$: large-gap pairs (math-vs-legal, news-vs-legal) have positive closure; near-noise-floor pairs (news-vs-biomedical, $0.001$; code-vs-biomedical, $0.002$) overshoot or invert.

\begin{table}[h]
\centering
\caption{Endpoint-correction closure by domain pair at $k=1$ (Qwen-3-4B, $\lambda = 1.0$, averaged over 3 seeds). $\Delta W_k$ is output-layer row $k$ of $\theta_{AB}-\theta_{BA}$.}
\label{tab:endpoint_perpair}
\begin{tabular}{lccc}
\toprule
Pair & $|\Delta_{\rm baseline}|$ & Single-step closure $C$ (\%) & Spearman($\tau_k$, $\|\Delta W_k\|$) \\
\midrule
math-vs-legal & 0.042 & +80.3\% & 0.833 \\
code-vs-legal & 0.013 & +28.4\% & 0.800 \\
news-vs-legal & 0.037 & +23.9\% & 0.830 \\
code-vs-news & 0.004 & +5.9\% & 0.832 \\
news-vs-biomedical & 0.001 & $-32.9\%$ & 0.844 \\
code-vs-biomedical & 0.002 & $-51.1\%$ & 0.821 \\
\bottomrule
\end{tabular}
\end{table}

\section{Token-level ordering prediction: support recovery and rank-correlation discussion}
\label{app:token_prediction}

A complementary test of the bracket attribution is whether $\tau_k$, computed from the base model before either order is run, predicts which specific tokens will be most affected by swapping the two updates. For each of 15 domain pairs $\times$ 3 seeds (45 experiments per model), we compare bracket-derived $\tau_k$ to the empirical per-token ordering effect $\Delta\text{NLL}_k = \text{NLL}_k(\theta_{AB}) - \text{NLL}_k(\theta_{BA})$ measured after actual SGD. Because the vocabulary distribution is extremely sparse, the operational statistic is high-mass support recovery rather than full-vocabulary rank correlation.

\begin{table}[h]
\centering
\caption{Token-level ordering prediction as support recovery (45 experiments per model, 15 domain pairs $\times$ 3 seeds). Top-1\% and top-5\% are mean overlaps of the predicted and empirical high-effect token sets; equal-size-set baselines are 1\% and 5\%. Full-vocabulary Spearman is retained as a diagnostic, not the main metric for the heavy-tailed Qwen-3-4B readout.}
\label{tab:token_prediction}
\small
\begin{tabular}{@{}p{0.15\textwidth}p{0.11\textwidth}p{0.15\textwidth}p{0.15\textwidth}p{0.12\textwidth}p{0.08\textwidth}@{}}
\toprule
Model & Dtypes & Top-1\% & Top-5\% & $\rho$ & Sign \\
\midrule
Qwen-3-4B & bf16/fp32 FD & 53.0 (54.5 med.)\% & 51.0 (54.3 med.)\% & $0.033 \pm 0.051$ & 52.4\% \\
Qwen-2.5-1.5B & fp32/fp32 FD & 57.3 (57.1 med.)\% & 46.8 (47.5 med.)\% & $0.521 \pm 0.034$ & 70.6 $\pm$ 1.9\% \\
\bottomrule
\end{tabular}
\end{table}

On Qwen-3-4B, full-rank Spearman is uninformative under sharp concentration: the long tail contains many near-zero coordinates whose ordering is dominated by numerical and sampling noise. The bracket nevertheless recovers the empirical high-effect support: predicted top-1\% tokens recover 53\% of the empirical top-1\% set on average (54.5\% median) versus a 1\% equal-size-set baseline, and the top-5\% overlap is 51\% on average versus a 5\% equal-size-set baseline. On Qwen-2.5-1.5B fp32, where the empirical endpoint signal has a lower numerical noise floor, full-vocabulary Spearman is also informative ($\rho=0.52$) and agrees with the support-recovery view. The separate parameter-row test summarized in Section~\ref{sec:controlled_tests} gives per-pair $\rho=0.78$--$0.90$ without this vocabulary-tail rank degradation, since parameter row norms are not heavy-tailed in the same way as per-token $\tau_k$.

\section{Paired-endpoint training-order readability}
\label{app:forensic}

The bracket direction $b_{AB}$ used in this appendix is computed at $\theta_0$ on an evaluation split shared across pairs, from a follow-up analysis, distinct from the pair-specific Trotter-reference per-token interpretability readout used in Section~\ref{sec:interpretability}. The two protocols differ in eval-batch composition and reference point ($\theta_0$ vs $\theta_{\text{ref}}$), so the corresponding $\tau$ vectors are not directly comparable token-by-token.

Given a public base $\theta_0$, a query model $\theta_{\text{query}}$, and candidate training-domain pairs $(A, B)$, we ask: can we tell from $\theta_{\text{query}}$'s weights whether it was trained $A{\to}B$ or $B{\to}A$? Define the Trotter reference $\theta_{\text{ref}} := \theta_0 - \eta(g_A + g_B)$ and the signed score
\begin{equation}
  s(\theta_{\text{query}}) := \langle \theta_{\text{query}} - \theta_{\text{ref}},\; b_{AB}\rangle.
\end{equation}
By Theorem~\ref{thm:forensic_id} (App.~\ref{app:bracket_derivation}), $\mathrm{sign}(s(\theta_{AB})) = +1$ and $\mathrm{sign}(s(\theta_{BA})) = -1$ at leading order whenever the antisymmetric commutator $b_{AB}$ dominates the symmetric Trotter term $c_{AB} := \tfrac{1}{2}(H_B g_A + H_A g_B)$. Concretely, the ratio of the antisymmetric signal to the symmetric term
\begin{equation}
  \mathrm{SCR} := \tfrac{1}{2}\,\|b_{AB}\|_2^2 \,/\, |\langle c_{AB}, b_{AB}\rangle|,
\end{equation}
so that $\mathrm{SCR} > 1$ corresponds to the theoretical inequality $\|b_{AB}\|_2^2 > 2|\langle c_{AB}, b_{AB}\rangle|$.

\paragraph{Cross-model results ($\Delta s$ estimator).}
Per-pair-seed unit (6 domain pairs $\times$ 3 seeds = 18 units per run), the drift-cancelling antisymmetric statistic $\Delta s = \langle \theta_{AB} - \theta_{BA},\, b_{AB}\rangle$ (eq.~\ref{eq:delta_s}) gives:
\begin{table}[h]
\centering
\small
\begin{tabular*}{\textwidth}{@{\extracolsep{\fill}}llrr}
\toprule
Model & Scale & $\mathrm{sign}(\Delta s)$ correct & Wilson 95\% CI \\
\midrule
Llama-3.2-1B (bf16 storage) & 1B & 14/18 = 77.8\% & $[55\%, 91\%]$ \\
Llama-3.2-1B (fp32) & 1B & 14/18 = 77.8\% & $[55\%, 91\%]$ \\
Qwen-2.5-1.5B (fp32) & 1.5B & 16/18 = 88.9\% & $[67\%, 97\%]$ \\
Qwen-2.5-1.5B (bf16 storage) & 1.5B & 16/18 = 88.9\% & $[67\%, 97\%]$ \\
Qwen-3-4B (bf16 storage) & 4B & 18/18 = 100.0\% & $[82\%, 100\%]$ \\
Llama-3.1-8B (bf16 storage) & 8B & 18/18 = 100.0\% & $[82\%, 100\%]$ \\
\midrule
\textbf{Combined (both precisions)} & 4 models, 2 architectures, 4 scales & \textbf{96/108 = 88.9\%} & $\mathbf{[81.6\%, 93.5\%]}$ \\
\bottomrule
\end{tabular*}
\caption{Paired-endpoint training-order assignment via the $\mathrm{sign}(\Delta s)$ estimator. Six runs across four unique models, with the $1$B and $1.5$B models replicated under both fp32 and bf16-storage precision regimes (identical sign-recovery in each). Counting each model once gives $66/72 = 91.7\%$ ($\geq\!4$B-class $100\%$, $1$B-class $77.8\%$--$88.9\%$); the dual-precision count $96/108 = 88.9\%$ shown here is a precision-replication check.}
\label{tab:forensic_cross_model}
\end{table}

\paragraph{Why the antisymmetric statistic and not per-query $\mathrm{sign}(s(\theta))$.} The original per-query test $\mathrm{sign}(s(\theta_{AB})) = +1, \mathrm{sign}(s(\theta_{BA})) = -1$ is the leading-order condition of Theorem~\ref{thm:forensic_id}, but suffers from a finite-$\eta$ failure: when the symmetric Trotter drift contribution $\eta^2 \langle c_{AB}, b_{AB}\rangle$ is comparable in magnitude to the antisymmetric $\eta^2 \cdot \tfrac12 \|b_{AB}\|^2$ (i.e.\ $\mathrm{SCR} \approx 1$), both $s(\theta_{AB})$ and $s(\theta_{BA})$ can lie on the same side of zero. On the $1$B-class and $1.5$B-class runs and the $8$B Llama-3.1 run, $17/18$--$18/18$ pair-seed units have $s(\theta_{AB})$ and $s(\theta_{BA})$ matching in sign, mechanically pinning per-query sign accuracy to $50\%$; on Qwen-3-4B run (Qwen-3-4B) only $8/18$ are sign-aligned, giving per-query accuracy of $77.8\%$. The antisymmetric $\Delta s$ removes the shared drift terms in the paired subtraction and identifies the order at $\geq 77.8\%$ on all six runs.

\paragraph{Per-query SCR satisfaction.}
Of $18$ $(\text{pair},\text{seed})$ units per run, the Theorem~\ref{thm:forensic_id} inequality $\mathrm{SCR} > 1$ is satisfied at: $16/18$ (Qwen-3-4B), $14/18$--$15/18$ ($1$B-class runs), $13/18$ (Llama-3.1-8B). On the Qwen-3-4B run, $7/18$ units have $\mathrm{SCR} \geq 2$, and all of them pass the per-query test.

\paragraph{Pair identification (six-candidate task, Qwen-3-4B run only).}
Cosine score between $\theta_{\text{query}} - \theta_{\text{ref}}$ and each candidate $b_{AB}$ for pair identification among six candidates: $11/36$ per-query joint-correct ($30.6\%$ vs.\ $16.7\%$ chance; $1.83\times$). The pair-identification subtask is harder than sign recovery and uses an absolute (not antisymmetric) score; it is reported as a directional indicator only.

\paragraph{Comparison to prior work.} Watermarking~\citep{kirchenbauer2023watermark,zhao2024provable} embeds a signal during generation; membership-inference attacks~\citep{shokri2017membership,carlini2022membership} target individual training examples; influence-functions / TRAK~\citep{koh2017understanding,grosse2023studying,park2023trak} estimate per-sample influence on predictions. Paired-endpoint readability follows from the antisymmetry of the bracket $b_{AB}$ on unmodified weights.

\section{DPO commutator memory: full protocol and per-pair table}
\label{app:dpo_details}

\paragraph{Setup.} For a frozen reference model $\pi_\text{ref}$ and a policy $\pi_\theta$, DPO~\citep{rafailov2023direct} optimizes the margin $\beta_{\text{DPO}} \cdot [(\log \pi_\theta(c|p) - \log \pi_\text{ref}(c|p)) - (\log \pi_\theta(r|p) - \log \pi_\text{ref}(r|p))]$ on (prompt, chosen, rejected) triples. The chosen-vs-rejected split alone is degenerate for the bracket: both terms appear in the same loss with a fixed coefficient, not as two independent gradient sources. We therefore use \emph{preference-data sources} as the $A, B$ domains: the commutator $b_{AB} = H_B g_A - H_A g_B$ is computed from DPO gradients on preference pairs drawn from UltraFeedback~\cite{cui2024ultrafeedback} grouped by source (e.g., \texttt{false\_qa} vs.\ \texttt{flan\_v2\_p3}, \texttt{ultrachat} vs.\ \texttt{sharegpt}). This preserves the two-independent-sources semantics of the SFT protocol while moving the loss into the preference regime. We project the bracket to vocabulary space via the same finite-difference $\tau_k$ procedure at the Trotter reference, using a Pile-domain eval set (\texttt{code}, \texttt{news}, \texttt{legal}, \texttt{math}) shared with the SFT runs so that the resulting $\tau$ vectors live in a comparable logit subspace. The parameter subspace is last-layer \texttt{o\_proj} and \texttt{down\_proj} (the attention output projection and MLP down-projection; matching the template used for the scalar ordering-quality predictor in~\cite{geometry_sequential_learning}); magnitudes are therefore not directly comparable between SFT and DPO.

\begin{table}[h]
\centering
\small
\caption{Per-pair DPO commutator memory concentration. ``pct for 80\%'' = fraction of vocabulary required to account for $80\%$ of $|\tau|$ mass. These descriptive concentration rows do not include the random-direction support nulls used for the SFT specificity claim.}
\label{tab:dpo_per_pair}
\begin{tabular}{llccc}
\toprule
Model & Pair & Gini & pct for $80\%$ & pct for $99\%$ \\
\midrule
Qwen-2.5-1.5B (fp32) & false\_qa vs.\ flan\_v2\_p3 & $0.960$ & $1.26\%$ & $30.1\%$ \\
Qwen-2.5-1.5B (fp32) & ultrachat vs.\ sharegpt     & $0.969$ & $1.05\%$ & $25.1\%$ \\
\midrule
Qwen-3-4B (bf16)  & false\_qa vs.\ flan\_v2\_p3  & $0.999$ & $0.003\%$ & $0.70\%$ \\
Qwen-3-4B (bf16)  & ultrachat vs.\ sharegpt      & $0.998$ & $0.006\%$ & $0.97\%$ \\
Qwen-3-4B (bf16)  & false\_qa vs.\ ultrachat     & $1.000$ & $0.002\%$ & $0.08\%$ \\
\midrule
Qwen-3-8B (bf16)  & false\_qa vs.\ flan\_v2\_p3  & $0.997$ & $0.019\%$ & $2.90\%$ \\
Qwen-3-8B (bf16)  & ultrachat vs.\ sharegpt      & $0.998$ & $0.016\%$ & $1.55\%$ \\
Qwen-3-8B (bf16)  & false\_qa vs.\ ultrachat     & $0.997$ & $0.016\%$ & $2.69\%$ \\
\bottomrule
\end{tabular}
\end{table}

\paragraph{Scale pattern.} Going Qwen-2.5-1.5B $\to$ Qwen-3-4B $\to$ Qwen-3-8B, ``pct for $80\%$'' moves $1.16\% \to 0.004\% \to 0.017\%$---sharpening between the 1.5B and 4B checkpoints, then easing from 4B to 8B. The 1.5B-to-4B comparison crosses both a scale boundary and a pre-training-procedure boundary (Qwen-2.5 vs.\ Qwen-3 families), so we cannot separate scale from pre-training as the driver; these rows show that concentration recurs across the tested 1.5B--8B DPO checkpoints, not that support specificity has been established for DPO.

\section{DPO token reweighting on fresh pairs}
\label{app:r78_freshpairs}

The planned fresh-pair arm of the token-reweighting test had three evaluable pairs under the released UltraFeedback source taxonomy. The available pairs ran with the full token-reweighting protocol and are reported as a partial replication; the main fresh-pair evidence is the six-pair replication of the matched-batch correction in Appendix~\ref{app:r79_freshpairs}.

\begin{table}[h]
\centering
\small
\caption{Token-reweighting results on the three evaluable fresh pairs (Qwen-3-4B fp32, same protocol as the original pairs).}
\label{tab:r78_freshpairs}
\begin{tabular}{lccc}
\toprule
Pair & CI excludes 0 & paired difference CI 95\% & direction \\
\midrule
\texttt{evol\_instruct\_vs\_truthful\_qa} & yes & excludes 0 & mitigation \\
\texttt{evol\_instruct\_vs\_false\_qa}    & no  & includes 0  & no effect \\
\texttt{truthful\_qa\_vs\_flan\_v2\_p3}  & yes & excludes 0 & mitigation \\
\bottomrule
\end{tabular}
\end{table}

We treat this three-pair arm as descriptive; the confirmatory fresh-pair evidence is the six-pair matched-batch replication below.

\section{Matched-batch DPO correction on fresh pairs}
\label{app:r79_freshpairs}

To test dependence on the pair list, we re-run the matched-batch correction protocol unchanged on a six-pair list disjoint from the original six. The fresh list crosses two new sources (\texttt{evol\_instruct}, \texttt{truthful\_qa}) with the four original sources available in the released snapshot. Hyperparameters, seed schedule, and the matched-batch protocol are identical to the original pairs (Qwen-3-4B fp32, $\beta_{\rm DPO}{=}0.1$, $\eta=2\mathrm{e}{-}3$, $3$ seeds $\times$ $10$ trials per pair).

\begin{table}[h]
\centering
\small
\caption{Matched-batch bracket-correction closure on six disjoint fresh UltraFeedback pairs (Qwen-3-4B fp32, $3$ seeds $\times$ $10$ trials each, $b_{AB}$ recomputed per trial). Pair abbreviations: evol=\texttt{evol\_instruct}, truth=\texttt{truthful\_qa}; other abbreviations as in Table~\ref{tab:r79_bracket_closure}. All six pair medians are positive in $[0.71, 0.85]$.}
\label{tab:r79_freshpairs}
\begin{tabular*}{\textwidth}{@{\extracolsep{\fill}}lccccc}
\toprule
Pair & bracket mean & median & random & wrong-sign & first-order \\
\midrule
\texttt{evol:fq}    & $+0.436$ & $+0.775$ & $+0.001$ & $-1.085$ & $-0.573$ \\
\texttt{evol:flan} & $+0.375$ & $+0.774$ & $+0.001$ & $-1.059$ & $-2.004$ \\
\texttt{evol:share}     & $+0.613$ & $+0.802$ & $-0.001$ & $-0.904$ & $-2.252$ \\
\texttt{truth:flan}   & $+0.378$ & $+0.849$ & $-0.000$ & $-0.910$ & $-0.132$ \\
\texttt{truth:share}       & $+0.501$ & $+0.726$ & $-0.001$ & $-1.280$ & $-0.956$ \\
\texttt{truth:ultra}      & $+0.509$ & $+0.714$ & $-0.001$ & $-1.331$ & $-0.805$ \\
\midrule
\multicolumn{6}{p{0.95\textwidth}}{\textbf{Pooled (180 fresh trials):} bracket mean $+0.408$, median $+0.778$; bracket $>$ random in $155/180$ ($86.1\%$); $6/6$ pair medians positive.} \\
\bottomrule
\end{tabular*}
\end{table}

Combined with original pairs ($161/180$ bracket $>$ random), $316/360 = 87.8\%$ matched trials show bracket $>$ random across both pair sets, with all $12/12$ pair-level medians of the bracket-minus-random closure difference positive. Excluding the two trials below the per-pair baseline noise floor leaves $316/358$, so the $12/12$ result is not driven by near-zero denominator cases. The control hierarchy (random $\approx 0$, wrong-sign $\approx -1$, first-order widening the gap) reproduces uniformly under the fixed pair list and edit-size rule.

\section{Seed-clustered bootstrap diagnostics for the DPO correction}
\label{app:r79_seed_cluster}

This appendix reports per-pair seed-clustered bootstrap intervals as \emph{descriptive diagnostics} of within-pair variability. Because each pair has only $n_{\rm clusters}=3$ seed-index clusters, the resampling distribution is too coarse to support these intervals as formal $95\%$ inference. The $11/12$ count reported in the main text uses flat per-trial bootstrap intervals ($6/6$ original pairs and $5/6$ fresh pairs) and serves as a check rather than formal inference; under the seed-clustered bootstrap, $3/6$ original-pair intervals exclude zero (Table~\ref{tab:r79_seed_cluster}). The source-pair summary is the median bracket-minus-random closure difference: all $12/12$ pair medians are positive across the original and fresh-pair grids. The seed-clustered bootstrap (3 clusters per pair, concatenated 10 trials each) is reported here alongside the flat per-trial bootstrap to show that the cross-pair effect ranking does not depend on the choice of within-pair variability estimator.

\begin{table}[h]
\centering
\small
\caption{Original-pairs winsorized closure of the matched-batch DPO correction: flat bootstrap (1000 iter on the 30 trials) vs seed-clustered bootstrap (1000 iter cluster-resampling 3 seeds, then aggregating their 10 trials each). Both columns are descriptive diagnostics of within-pair variability; with $n_{\rm clusters}=3$ per pair the seed-cluster percentile is too coarse for formal $95\%$ inference. Bold rows have intervals strictly above zero under the corresponding estimator (descriptive, not a decision criterion).}
\label{tab:r79_seed_cluster}
\begin{tabular}{lcc}
\toprule
Pair & flat-bootstrap CI 95\% & seed-cluster CI 95\% \\
\midrule
\texttt{false\_qa\_vs\_flan\_v2\_p3}  & \textbf{[0.561, 0.767]} & \textbf{[0.425, 0.772]} \\
\texttt{false\_qa\_vs\_sharegpt}      & \textbf{[0.157, 0.703]} & \textbf{[0.126, 0.536]} \\
\texttt{false\_qa\_vs\_ultrachat}     & \textbf{[0.477, 0.798]} & $[-0.386, 0.669]$ \\
\texttt{flan\_v2\_p3\_vs\_sharegpt}   & \textbf{[0.247, 0.776]} & $[-2.306, 0.825]$ \\
\texttt{flan\_v2\_p3\_vs\_ultrachat}  & \textbf{[0.645, 0.788]} & \textbf{[0.455, 0.754]} \\
\texttt{ultrachat\_vs\_sharegpt}      & \textbf{[0.219, 0.665]} & $[-0.107, 0.604]$ \\
\midrule
\multicolumn{1}{l}{\textbf{Strict-CI positive count}}        & \textbf{6/6}            & \textbf{3/6} \\
\bottomrule
\end{tabular}
\end{table}

Under the seed-clustered estimator, \emph{three} of the six original pairs have CI strictly above zero on the winsorized closure: \texttt{false\_qa\_vs\_flan\_v2\_p3}, \texttt{false\_qa\_vs\_sharegpt}, and \texttt{flan\_v2\_p3\_vs\_ultrachat}. The other three pairs (\texttt{false\_qa\_vs\_ultrachat}, \texttt{flan\_v2\_p3\_vs\_sharegpt}, \texttt{ultrachat\_vs\_sharegpt}) have positive cluster-mean closure but CI including zero, reflecting the limited effective sample size at $3$ seed-clusters. None of the six pairs flips sign under cluster-bootstrap; the qualitative ranking (bracket $\gg$ random $\approx 0$, wrong-sign $\approx -1$, first-order norm-matched negative) is preserved on every pair under both estimators. The cluster bootstrap is computed from the per-trial result files in the supplement (App.~\ref{app:reproducibility}).

\section{Triplet additivity of bracket edits}
\label{app:triplet_additivity}

\paragraph{Setup.} For each triplet $(A,B,C)$ on Qwen-3-4B ($\eta = 6.42\!\times\!10^{-4}$) we compute the three pair-brackets $b_{AB}$, $b_{AC}$, $b_{BC}$ at $\theta_0$ on \texttt{lm\_head}, then train $A\to B\to C$ with four SGD steps per domain (last-block MLP $+$ \texttt{lm\_head}) to reach $\theta_{ABC}$. A subset edit $\mathcal{P} \in \{\varnothing, \{AB\}, \{AC\}, \{BC\}, \{AB,AC\}, \{AB,BC\}, \{AB,AC,BC\}\}$ subtracts, for each $p\in\mathcal{P}$, the correction $\eta^2 b_p$ from the \texttt{lm\_head} rows of the fewest tokens whose row norms account for $80\%$ of that correction's total row norm. For each $d \in \{A,B,C\}$, $\Delta\mathrm{NLL}_d$ is the change in per-domain evaluation NLL relative to $\theta_{ABC}$, with every edit evaluated on the same fixed batches. Additivity is scored for the two composite edits $\{AB,AC\}$ and $\{AB,BC\}$ by comparing the actual $\Delta\mathrm{NLL}_d$ under the joint edit with the sum of the two singleton-edit values; the three-pair edit is measured but not scored. The relative additivity error for one (composite, domain) cell is $|\,\text{actual} - \text{predicted}\,| / |\,\text{actual}\,|$, where $\text{predicted}$ is the singleton sum.

\paragraph{Result.} Across $3$ triplets $\times$ $2$ seeds $\times$ $6$ (composite, domain) cells per triplet-seed ($36$ cells total), the grand mean relative additivity error is $\mathbf{4.29\%}$ (data: the triplet run's result file in the supplement; raw $\Delta\mathrm{NLL}$ scale $\sim\!10^{-6}$--$10^{-5}$).

\begin{table}[h]
\centering
\small
\caption{Per-triplet relative additivity error of bracket edits on Qwen-3-4B. ``Per-seed mean'' averages over the $6$ (composite, domain) cells for that seed; ``Cells $|\text{err}|<5\%$'' counts the cells whose relative additivity error is under $5\%$; ``Triplet mean'' averages over all $12$ cells of the triplet. The grand mean over $36$ cells is $4.29\%$.}
\label{tab:triplet_additivity}
\begin{tabular}{lcccc}
\toprule
Triplet $(A,B,C)$ & Seed $42$ mean & Seed $43$ mean & Cells $|\text{err}|<5\%$ & Triplet mean \\
\midrule
\texttt{code, legal, news}        & $0.95\%$  & $1.11\%$  & $12/12$ & $1.03\%$  \\
\texttt{code, biomedical, legal}  & $5.17\%$  & $15.16\%$ & $7/12$  & $10.17\%$ \\
\texttt{news, biomedical, math}   & $1.41\%$  & $1.95\%$  & $12/12$ & $1.68\%$  \\
\midrule
\textbf{Grand mean} ($36$ cells)   & \multicolumn{4}{c}{$\mathbf{4.29\%}$} \\
\bottomrule
\end{tabular}
\end{table}

\paragraph{Interpretation.} $31/36$ cells ($86\%$) have relative error $\le 5\%$; $35/36$ have $\le 14\%$. The grand mean is inflated by one cell of the \texttt{code, biomedical, legal} triplet at seed $43$ where the prediction ($-7.9\!\times\!10^{-8}$) and the measured change ($-2.7\!\times\!10^{-7}$) are both well below the triplet's typical effect ($\sim\!10^{-6}$), so a small absolute error becomes a large relative one.

\section{Descriptive SVD of the cross-pair $\tau$-matrix}
\label{app:reachability}

The $\tau$ rows used in this appendix are follow-up $\theta_0$-reference readouts on an evaluation split shared across pairs, not the pair-specific Trotter-reference SFT interpretability readout used in Section~\ref{sec:interpretability}; the two protocols give different surface tokens for the same nominal pair (e.g., the follow-up \texttt{code\_vs\_news} top tokens are dominated by $\langle$, $\$$, \texttt{xml}, \texttt{php} rather than by domain-marker proper nouns), so per-token semantic claims from Section~\ref{sec:interpretability} are not a direct argument about the SVD rows here. We report a descriptive SVD of the per-pair $\tau$-matrix on the 6-pair benchmark grid. Stacking the per-pair $\tau$ vectors across six domain pairs (Qwen-3-4B, $k = 1$, seed $42$) and taking the SVD of the resulting $6 \times V$ matrix yields singular values $(22.83,\,7.59,\,3.84,\,2.72,\,0.94,\,0.52)$ with cumulative energy fractions $(0.866,\,0.961,\,0.986,\,0.998,\,0.999,\,1.000)$. The top-$2$ singular directions carry $96.1\%$ of the Frobenius mass, and the top-$4$ carry $99.8\%$. Because the rows are not norm-matched, this concentration can reflect unequal row norms as well as correlation across pairs; we report it descriptively.

\paragraph{Scope.} The SVD is descriptive over six $\tau$-vectors at one model, seed, and parameter subspace; principal directions are not interpreted semantically. The descriptive concentration is consistent with the $4.3\%$ compositional additivity error (App.~\ref{app:triplet_additivity}) and the rapid plateau of iterative correction once the dominant modes are absorbed.

\section{Null baselines for support specificity}
\label{app:sparsity_nulls}

We use null directions to separate generic heavy-tailed behavior of the logit readout from bracket-specific support structure. The fp32 check covers Llama-3.2-1B, Qwen-2.5-1.5B, and Qwen-3-4B over $3$ pairs $\times$ $3$ seeds per model, using the same finite-difference readout pipeline for the bracket and every control.

\textbf{Controls.} (i) \emph{Norm-matched random parameter direction}, both globally norm-matched and per-tensor norm-matched (3 instances each). (ii) \emph{First-order endpoint direction} $\eta(g_B - g_A)$, norm-matched to $\eta^2 b_{AB}$. (iii) \emph{Pairing-permuted HVP} $H_B g_B - H_A g_A$, norm-matched. (iv) \emph{Empirical endpoint displacement} $\theta_{AB}-\theta_{BA}$ from actual one-step SGD. (v) \emph{Batch-resampled bracket}: $b_{AB}$ recomputed on disjoint A/B batches, norm-matched. All directions read out via the same $\tau_k = \mathbb{E}[e_k\,\delta z_k]$ at $\theta_{\rm ref}$ with $\epsilon=1$.

\textbf{Concentration calibration.} Gini and pct-for-$80\%$ mass are descriptive concentration summaries, not the specificity test. Random norm-matched directions can be as concentrated as the bracket on the two smaller fp32 checks: random directions require about $0.6$--$0.7\%$ of vocabulary for $80\%$ mass, while the bracket requires about $1.0$--$1.5\%$. On Qwen-3-4B the pattern reverses: the bracket requires $0.30\%$ of vocabulary on average ($0.042\%$ median), compared with $0.66$--$0.68\%$ for random directions. The stable claim across models is therefore support specificity, not Gini magnitude alone.

\textbf{Support alignment.} Empirical endpoint displacements and batch-resampled brackets preserve the bracket's high-mass support, while random controls recover only the tokens that any direction excites. Mean top-$20$ overlap with the bracket is $99\%/97\%$ (endpoint/resampled) on Llama-3.2-1B, $99\%/97\%$ on Qwen-2.5-1.5B, and $82\%/93\%$ on Qwen-3-4B. The corresponding random-direction top-$20$ overlaps are $35\%$, $39$--$40\%$, and $36$--$49\%$. Because random directions share the same readout operator, uniform-vocabulary chance is used only as a scale reference; random-direction overlap is the relevant null.

\textbf{Frequency correction.} The smoothed $\kappa$-Gini $= \mathrm{Gini}(\tau_k / (\Pr(y{=}k) + \varepsilon))$ is high for both bracket and random directions. We therefore use $\kappa$ as a label-frequency diagnostic, while endpoint and resampled-bracket support agreement provide the specificity check.

\textbf{Precision check.} Top-$20$ support is stable across $\epsilon \in \{0.05, 0.1, 0.3, 1.0\}$ on the precision checks (90--100\% overlap, 100\% sign stability among high-$|\tau|$ tokens; verified at fp32 perturbation materialization on representative pairs).

\section{Observable/null decomposition of the training commutator}
\label{app:obs_gauge}

\paragraph{Setup.}
Let $\theta_0$ denote the pre-trained base-model parameters. One gradient step
from $\theta_0$ with deterministic mean-gradient loss on domain $A$ then $B$
(no minibatch noise) yields $\theta_{AB}$; the reverse order yields $\theta_{BA}$.
The subspace-restricted training commutator is
\begin{equation}
  b_{AB} := H_B\,g_A - H_A\,g_B,
\end{equation}
computed on the selected parameter subspace $\mathcal{S}$ consisting of the
last transformer block's MLP parameters together with the language-model head,
\mbox{$d \approx 4.64\times 10^{8}$} entries. The Trotter reference point is
$\theta_{\text{ref}} := \theta_0 - \eta(g_A + g_B)$ with $\eta = 6.417\times 10^{-4}$
fixed by the cube-root step-size calibration.

\paragraph{The aggregated $\tau$ operator.}
Let $\mathcal{B}_{\text{obs}}$ be a fixed 4-batch subset of the unified
the attribution evaluation split $E_{\text{attr}}$, the held-out slice used for $\tau$ (13 batches total; subset chosen to bound compute at Qwen-3-4B scale). We define the \emph{aggregated $\tau$
operator}
\begin{equation}
  B := P_K \cdot S \cdot D \cdot J_z(\theta_{\text{ref}})
      : \mathcal{S} \longrightarrow \mathbb{R}^K,
\end{equation}
where $J_z(\theta_{\text{ref}})$ is the logit Jacobian at the non-padding label
positions of $\mathcal{B}_{\text{obs}}$; $D$ is the diagonal error weighting
$e_{n,k} = \mathrm{softmax}(z)_{n,k} - \mathbb{1}[y_n = k]$ at
$\theta_{\text{ref}}$; $S$ sums over those positions,
$(S \cdot u)_k = \sum_n u_{n,k}$ for $u \in \mathbb{R}^{N \times V}$; and
$P_K$ restricts to the top-K vocabulary support covering $80\%$ of $|\tau|$
mass, with $\tau$ computed by finite-difference attribution on the full
13-batch $E_{\text{attr}}$ at $\theta_{\text{ref}}$. The unrestricted
counterpart is $B_{\text{full}} := C \cdot E \cdot J_z(\theta_{\text{ref}})
: \mathcal{S} \to \mathbb{R}^V$.

\paragraph{Observable/null decomposition.}
Under the Euclidean inner product on flattened selected-parameter coordinates,
every $\delta\theta \in \mathcal{S}$ admits a unique orthogonal decomposition
\begin{equation}
  \delta\theta
    = \delta\theta^{\text{obs}} + \delta\theta^{\text{null}},
  \quad
  \delta\theta^{\text{obs}} \in \operatorname{row}(B_{\text{full}}),
  \quad
  \delta\theta^{\text{null}} \in \ker(B_{\text{full}}).
\end{equation}
The readout-null component produces zero aggregated logit-loss effect by
construction: it lies in the kernel of the chosen observation operator.
The \emph{observable fraction} of the training commutator under operator $B$
is
\begin{equation}
  \rho_{\rm obs}(B)
    := \frac{\|\,P_{\operatorname{row}(B)} \cdot b_{AB}\,\|_2}{\|b_{AB}\|_2}
    \in [0, 1].
\end{equation}

\paragraph{Computation.}
We obtain $\rho$ via the least-squares construction
$x^\star = B^\top w$ with $(B\,B^\top + \nu\,I)\,w = B \cdot b_{AB}$
and Tikhonov damping $\nu = 10^{-6}$, solved by conjugate gradient.
In the exact zero-damping, fully-converged limit,
$\cos(x^\star, b_{AB}) = \rho_{\rm obs}(B)$. Each CG iteration costs one Jacobian-vector product and one vector-Jacobian product. The CG runs in
the model's native dtype as loaded (bfloat16 for Qwen-3-4B, no explicit
fp32 cast), with maximum 200 iterations and target relative residual
$10^{-5}$; the achieved residual is reported per pair.

\paragraph{Three-way decomposition (approximate empirical).}
Comparing $\rho_{\text{topK}} := \rho_{\rm obs}(B)$ to
$\rho_{\text{full}} := \rho_{\rm obs}(B_{\text{full}})$ yields
\begin{equation}
  \underbrace{\rho_{\text{topK}}^{\,2}}_{\text{top-K observable}}
  \;+\;
  \underbrace{\rho_{\text{full}}^{\,2} - \rho_{\text{topK}}^{\,2}}_{\text{non-top-K observable}}
  \;+\;
  \underbrace{1 - \rho_{\text{full}}^{\,2}}_{\text{readout-null}}
  \;=\; 1,
\end{equation}
exact only in the zero-damping and full-CG-convergence limit; reported here
as an approximate empirical decomposition at the achieved CG residual.

\paragraph{Results.}
Three pairs on Qwen-3-4B (Table~\ref{tab:observable_gauge}): across all three
the aggregated top-K $\tau$ operator captures the majority of the bracket's
parameter-space energy in the selected subspace. Empirical $\rho_{\text{topK}}$
lies in $[0.88, 0.94]$ (mean $0.918$), and $\rho_{\text{full}}$ lies in
$[0.92, 0.97]$ (mean $0.953$). The squared-cosine proxies give a top-K
observable fraction $0.77$--$0.89$ (mean $0.844$), a further non-top-K observable
fraction $0.05$--$0.08$, and an unexplained fraction $0.06$--$0.15$ (mean $0.092$) at
the achieved CG residual per pair. The achieved CG relative residual ranges
from $2.7\times 10^{-3}$ on legal/biomedical ($K=485$) to $2.5\times 10^{-2}$
on code/math ($K=884$) and code/news ($K=928$); the best-converged pair
has the smallest unexplained fraction, consistent with the decomposition being sensitive
to incomplete CG convergence on larger-$K$ systems.

\begin{table}[h]
\centering
\small
\begin{tabular*}{\textwidth}{@{\extracolsep{\fill}}lccccccccc}
  \toprule
  Pair & $\rho_{\text{topK}}$ & $\rho_{\text{full}}$ & $K$ & $V$
       & top-K & non-top & null
       & CG topK & CG full \\
  \midrule
  code/news        & $0.931$ & $0.966$ & $928$ & $151\,936$ & $0.868$ & $0.066$ & $0.067$ & $2.3\!\cdot\!10^{-2}$ & $2.1\!\cdot\!10^{-2}$ \\
  code/math        & $0.880$ & $0.923$ & $884$ & $151\,936$ & $0.775$ & $0.077$ & $0.148$ & $2.5\!\cdot\!10^{-2}$ & $2.3\!\cdot\!10^{-2}$ \\
  legal/bio & $0.942$ & $0.970$ & $485$ & $151\,936$ & $0.888$ & $0.053$ & $0.060$ & $2.7\!\cdot\!10^{-3}$ & $6.7\!\cdot\!10^{-3}$ \\
  \midrule
  mean             & $0.918$ & $0.953$ &       &           & $0.844$ & $0.065$ & $0.092$ & & \\
  \bottomrule
\end{tabular*}
\caption{Observable/null decomposition of $b_{AB}$ on Qwen-3-4B,
  selected subspace $\mathcal{S}$ (last-block MLP + LM head,
  $d \approx 4.64\times 10^{8}$), $\eta=6.417\times 10^{-4}$, $k=1$, seed 42.
  $\rho_{\rm obs}(B) := \cos(x^\star, b_{AB})$ at the achieved CG relative residual
  shown in the final two columns; Tikhonov damping $\nu=10^{-6}$.
  Decomposition columns report $\rho_{\text{topK}}^{\,2}$,
  $\rho_{\text{full}}^{\,2} - \rho_{\text{topK}}^{\,2}$, and
  $1 - \rho_{\text{full}}^{\,2}$; exact only in the zero-damping /
  full-convergence limit, reported here as approximate empirical fractions.}
\label{tab:observable_gauge}
\end{table}

\paragraph{Interpretation.}
The decomposition quantifies what \emph{sparsity of $\tau$} means as a
statement about the training commutator itself in the selected parameter
subspace and under aggregated logit readout. Empirically the top-K $\tau$
signature captures the dominant observable component of the bracket: the
squared-cosine proxies assign $77$--$89\%$ of $\|b_{AB}\|_2^2$ to $\operatorname{row}(B)$,
a further $5$--$8\%$ to $\operatorname{row}(B_{\text{full}})$ beyond $\operatorname{row}(B)$
(observable, but outside the top-K vocabulary cut), and leave $6$--$15\%$ unexplained.
These proxies come from damped, finite-CG solves in native bf16. In exact arithmetic
each proxy lower-bounds its observable energy fraction, but the difference between
the two proxies is not a certified subspace fraction. The unexplained part mixes a
readout-null component (directions in parameter space that leave the aggregated
logit-loss readout unchanged) with damping, solve and rounding error.
The high proxy values indicate that the aggregated readout reaches most of the bracket in
the selected parameter subspace, so the sparse top-K $\tau$ readout is not merely a
compressed summary of the bracket (Table~\ref{tab:observable_gauge}). Measured in the aggregated-readout subspace rather than in the original parameter
coordinates, sparsity of the readout and sparsity of the bracket approximately coincide.

\paragraph{Scope and caveats.}
\begin{enumerate}
  \item The selected parameter subspace is last-block MLP plus LM head, not
    the full model. Claims about parameter-space reach apply to this
    subspace only.
  \item Operator $B$ is evaluated on a 4-batch subset of $E_{\text{attr}}$
    while top-K is defined on the full 13-batch $E_{\text{attr}}$. This is
    a compute-bounded approximation; $\rho$ depends on the eval-batch
    sampling, so reported values should be read as batch-subset-specific
    estimates of the aggregated operator.
  \item $\rho$ is an approximate projection coefficient under finite CG and
    nonzero damping. We report $\rho$ at the
    \emph{achieved} CG residual, not the nominal tolerance target.
  \item Single training seed (42) per pair; $N=3$ domain pairs. These
    estimates are descriptive for the reported grid.
\end{enumerate}

\section{Learning Rate Calibration Details}
\label{app:eta_autopilot}

The choice of $\eta$ is critical (Eq.~\ref{eq:endpoint_diff}): too small $\eta$ yields undetectable signals; too large $\eta$ violates second-order truncation accuracy. We use an automated calibration procedure to select $\eta$ in the ``cube-root'' regime.

\subsection{Calibration regimes}
The magnitude of the leading bracket term scales as $\eta^2$, so $\eta$ must be large enough to clear numerical noise but small enough to remain BCH-local. We use ``cube-root'' as an empirical regime label inherited from the scalar-bracket protocol, not as a theorem required by the present paper. Operationally, the accepted window is the range where (i) the bracket sign predictor is stable on held-out evaluation batches, (ii) the bracket displacement remains subordinate to the first-order update, and (iii) wrong-sign controls behave according to the local antisymmetry prediction.

We distinguish three practical regimes. In the \emph{small-signal} regime, bracket effects are below the loss-evaluation floor. In the \emph{BCH-local window}, the bracket is measurable while higher-order terms remain controlled. In the \emph{saturated} regime, loss improvement plateaus or sign controls become incoherent, indicating that the second-order truncation is no longer the right local model.

\subsection{Calibration procedure}

The calibration algorithm:
\begin{enumerate}
\item \textbf{Initialize}: Start with a conservative $\eta_0$ (e.g., $10^{-4}$).

\item \textbf{Probe}: For candidate $\eta$, run a short training sequence (10 steps) and measure:
\begin{itemize}
\item Loss improvement $\Delta \mathcal{L}$
\item Parameter movement $\|\Delta\theta\|$
\item Gradient norm $\|\nabla \mathcal{L}\|$
\end{itemize}

\item \textbf{Classify regime}: Compute scaling exponents via finite differences across multiple $\eta$ values. Classify as linear/cuberoot/strongly-powered based on exponent ranges.

\item \textbf{Binary search}: Adjust $\eta$ to target the cube-root regime boundary. Typical range: $10^{-4}$ to $10^{-2}$ for LLMs.

\item \textbf{Validation}: Verify that Lie bracket magnitude is $10^{-2}$ to $10^{-1}$ times first-order gradient norm (detectable but subordinate).
\end{enumerate}

\subsection{Calibrated Values}

\begin{table}[h]
\centering
\caption{Learning rates calibrated per model.}
\label{tab:eta}
\begin{tabular}{lcc}
\toprule
Model & $\eta$ & Regime \\
\midrule
Qwen-3-4B & $6.42 \times 10^{-4}$ & cube-root \\
Qwen-2.5-1.5B & $1.91 \times 10^{-3}$ & cube-root \\
Llama-3.1-8B & $1.28 \times 10^{-3}$ & cube-root \\
Llama-3.2-1B & $8.0 \times 10^{-4}$ & cube-root \\
GPT-2 (reference) & $4.16 \times 10^{-3}$ & strongly powered \\
\bottomrule
\end{tabular}
\end{table}

Table~\ref{tab:eta} reports final calibrated SFT learning rates. All target models fall in the cube-root regime; GPT-2 is reference only.

\subsection{Implementation}

The calibration implementation is included in the supplementary code; the reported runs use the fixed calibrated values in Table~\ref{tab:eta}.

\section{Intervention Types}
\label{app:intervention_types}

We tested four intervention methods to validate the causal role of the predicted harmful $\tau_k$ support. Main mitigation runs target the ten harmful tokens defined below; descriptive sparsity analyses continue to rank by $|\tau_k|$.

\subsection{Loss Reweighting (Main Method)}

During alternating training $\theta_{AB}$, we reweight per-position cross-entropy loss for domain A:
\begin{equation}
\mathcal{L}_{\text{reweighted}} = \frac{1}{N_{\text{pos}}} \sum_{t=1}^{N_{\text{pos}}} w_t \cdot \ell_t
\end{equation}
where $w_t = \alpha$ if the label at position $t$ is a harmful token and $w_t = \omega$ otherwise, with mean-preserving normalization. The harmful tokens, used by every method in this appendix, are the ten largest-$|\tau_k|$ tokens whose $\tau_k$ has the sign of the measured baseline order gap. That sign is positive in $89/90$ Qwen-3-4B trials and $25/28$ above-floor Qwen-2.5-1.5B trials, where the harmful tokens are therefore the top-positive-$\tau_k$ tokens; the helpful tokens are the ten largest-$|\tau_k|$ tokens of the opposite sign. The normalization is
\begin{equation}
\omega = \frac{1 - p\alpha}{1 - p}, \quad p = \text{fraction of positions whose target is a harmful token}.
\end{equation}
This keeps the mean position weight at $1$.

\paragraph{Rationale.} Loss reweighting suppresses the label-token gradient contribution associated with the selected harmful tokens. By downweighting positions where harmful tokens appear as labels, we target the predicted support of the ordering gap while keeping the mean position weight fixed.

\paragraph{Hyperparameter.} We sweep $\alpha \in \{0.01, 0.1, 0.5, 1.0\}$ where $\alpha < 1$ suppresses harmful tokens (Figure~\ref{fig:dose_response}). Results use $\alpha=0.1$ (90\% downweight, i.e., harmful tokens contribute 10\% of baseline loss weight) unless noted.

\begin{figure}[h]
\centering
\includegraphics[width=0.7\textwidth]{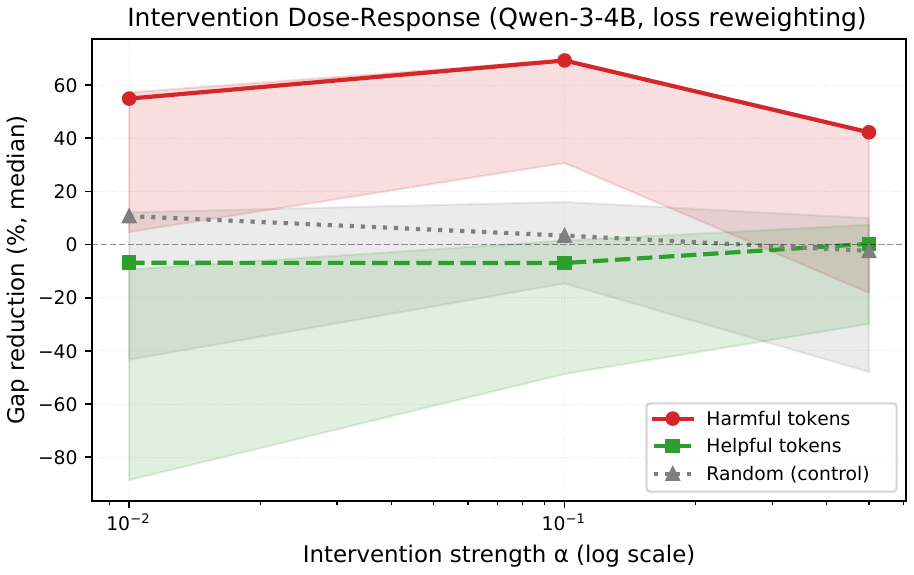}
\caption{Dose-response of loss reweighting on Qwen-3-4B (an earlier sweep, run before the held-out evaluation of Table~\ref{tab:interventions}). Median gap reduction (\%) against the weight $\alpha$ applied to harmful (red), helpful (green), and random (gray) token sets; $\alpha<1$ downweights. Downweighting harmful tokens gives a positive response at every tested $\alpha<1$, while random controls remain near zero. Shaded bands show 95\% confidence intervals (30 trials each).}
\label{fig:dose_response}
\end{figure}

\subsection{Learning Rate Scaling}

Modify the per-token learning rate by scaling gradients for lm\_head rows corresponding to harmful tokens:
\begin{equation}
g_{\text{scaled}}[k, :] = \begin{cases}
\alpha \cdot g[k, :] & \text{if } k \in \text{top-10 harmful} \\
g[k, :] & \text{otherwise}
\end{cases}
\end{equation}

\paragraph{Difference from loss reweighting.} Loss reweighting modifies the loss landscape (affects both the hidden-state and the weight pathway). Learning rate scaling modifies only the gradient applied to lm\_head weights (weight pathway). For small-gap models, the more targeted learning-rate scaling can reduce noise by avoiding amplification through the hidden-state pathway.

\subsection{Projection (PCGrad-Inspired)}

Project harmful token gradients orthogonal to helpful token gradients:
\begin{equation}
g_{\text{proj}}[k, :] = g[k, :] - \frac{g[k, :] \cdot g_{\text{helpful}}[k, :]}{\|g_{\text{helpful}}[k, :]\|^2} g_{\text{helpful}}[k, :]
\end{equation}
for $k \in$ top-10 harmful tokens, where $g_{\text{helpful}}$ is the mean gradient for top-10 helpful tokens (the opposite sign).

\paragraph{Rationale.} Inspired by PCGrad~\citep{yu2020gradient}, this prevents harmful token learning from interfering with helpful token learning. In practice, this showed weaker effects than loss reweighting on Qwen-3-4B.

\subsection{Regularization}

Add L2 penalty driving harmful token embedding rows toward initialization:
\begin{equation}
\mathcal{L}_{\text{reg}} = \mathcal{L}_A + \lambda \sum_{k \in \text{harmful}} \|W[k, :] - W_0[k, :]\|^2
\end{equation}
where $W$ is the lm-head weight matrix, $W_0$ its value at $\theta_0$, and $\lambda = 10^{-3}$.

\paragraph{Rationale.} Prevents harmful tokens from accumulating large updates. Less effective than direct gradient manipulation (loss reweighting or learning-rate scaling).

\subsection{Comparative Results}

\begin{table}[h]
\centering
\caption{Intervention type comparison ($\alpha=0.1$, 30 trials/pair). Sep.: \checkmark\ clear harmful-vs-random separation, $\sim$ weak, $\times$ none. $^\dagger$Qwen-3-4B LR scaling, Projection, and Regularizer rows are from earlier runs without held-out evaluation; the Loss reweighting row uses held-out evaluation. The Qwen-2.5-1.5B row is the full-fp32 above-floor subset reported in Table~\ref{tab:interventions}.}
\label{tab:intervention_types}
\begin{tabular*}{\textwidth}{@{\extracolsep{\fill}}lllccc}
\toprule
Model & Precision & Method & Harmful (med.) & Random (med.) & Sep. \\
\midrule
Qwen-3-4B & bf16 & Loss reweighting & +32.0\% & +0.2\% & \checkmark \\
Qwen-3-4B & bf16 & LR scaling$^\dagger$ & +31.2\% & +1.7\% & \checkmark \\
Qwen-3-4B & bf16 & Projection$^\dagger$ & +14.6\% & --5.1\% & $\sim$ \\
Qwen-3-4B & bf16 & Regularizer$^\dagger$ & +8.3\% & +2.9\% & $\times$ \\
\midrule
Qwen-2.5-1.5B & \textbf{fp32} & \textbf{LR scaling} & \textbf{+47.9\%} & \textbf{--0.003\%} & \checkmark \\
\bottomrule
\end{tabular*}
\end{table}

\paragraph{Summary.}
\begin{itemize}
\item \textbf{Qwen-3-4B (bf16)}: Loss reweighting closes a median $32\%$ of the gap on harmful tokens and $\sim$0\% on random tokens under held-out evaluation (disjoint from training data); LR scaling gives a similar effect ($+31.2\%$) in earlier runs without held-out evaluation.
\item \textbf{Qwen-2.5-1.5B (fp32)}: LR scaling yields +47.9\% harmful closure on the 28/90 above-floor trials, with --0.003\% random closure. We treat this as a conditioned fp32 result and do not compare its magnitude directly to Qwen-3-4B.
\item \textbf{Attribution term}: Selecting tokens with the full bracket $H_Bg_A-H_Ag_B$ (Cohen's $d = 0.53$) outperforms the $H_Bg_A$ term alone ($d = 0.25$), confirming that both Hessian-vector product terms contribute to the ordering signal.
\end{itemize}

\section{What remains after the token edit}
\label{app:residual_gap}

Table~\ref{tab:interventions} evaluates one targeted edit: during the update on source $A$, positions whose target is one of the ten harmful tokens defined in Appendix~\ref{app:intervention_types} are downweighted by $90\%$, and other positions are rescaled to keep the mean weight unchanged. The edit changes only the source-$A$ update; it neither subtracts the predicted commutator displacement nor forces the $A$-then-$B$ and $B$-then-$A$ endpoints to agree. The $32\%$ median closure is therefore the effect of this edit, not the fraction of the ordering gap explained by commutator memory, and its complement is not a separately identified $68\%$ component.

To test whether higher-order terms carry the remainder, we applied the one-update-per-source Qwen-3-4B intervention protocol across $90$ trials (three source pairs, $30$ trials each) and compared three quantities after reweighting: (i) the held-out NLL gap predicted by the leading commutator; (ii) the linearized NLL gap obtained from the parameter difference produced by the two actual updates, which includes finite-step path effects; and (iii) the NLL gap measured at the two endpoints. For each source pair we flip signs so that the original gaps are positive, divide the summed leading-order predictions by the summed magnitudes of the original gaps, and average the three pair-level ratios equally.

After the edit, the leading-order commutator still predicts $71\%$ of the pre-intervention measured gap, and it points in the original direction for every source-pair average. The step from (i) to (ii) is where finite-step corrections, including any net higher-order BCH effect, enter: it reduces the remaining effect for code-vs-news and code-vs-math but increases it for code-vs-legal, so these corrections do not explain a common remainder. The step from (ii) to (iii), the loss-surface correction, reduces the gap for all three pairs. Although $71\%$ is numerically close to $68\%$, the two are different statistics: $68\%$ is the complement of the median absolute closure in Table~\ref{tab:interventions}, whereas $71\%$ is a leading-order prediction normalized within each source pair and then averaged across pairs.

\section{Magnus expansion validation}
\label{app:magnus}

Using the BCH/Magnus expansion logic underlying classical exponential integrators~\citep{magnus1954exponential,blanes2009magnus}, we compare, on Qwen-3-4B, the measured held-out gap $\mathcal{L}_E(\theta_{AB})-\mathcal{L}_E(\theta_{BA})$ after $k$ SGD steps per source with its leading bracket prediction $k^2\eta^2\langle g_E(\theta_{\rm ref}),b_{AB}(\theta_0)\rangle$ (Eq.~\ref{eq:target_score}). The validation uses 12 points (3 seeds $\times$ 4 step counts), with the same $A$ and $B$ batches repeated at every step.

\begin{table}[h]
\centering
\caption{Magnus/bracket validation details (Qwen-3-4B). The slope $0.706$ in Fig.~\ref{fig:magnus} is the pooled regression slope over $k\in\{1,2,4,8\}$; the $k=1$ ratio is reported separately because the BCH expansion is most accurate there.}
\label{tab:magnus}
\begin{tabular}{lc}
\toprule
Metric & Value \\
\midrule
Plotted points & $n=12$ \\
$R^2$ of the $k^2$ fit (mean over seeds) & 0.994 \\
Pooled predicted-vs-actual $R^2$ & 0.990 \\
Best-fit slope in pooled plot & 0.706 \\
Mean magnitude ratio at $k=1$ & $1.004 \pm 0.015$ \\
\bottomrule
\end{tabular}
\end{table}

The measured gap grows nearly as $k^2$, the scaling of the leading bracket term. The pooled slope is below one because higher-order BCH terms grow with $k$: the measured-to-predicted ratio falls from $1.004$ at $k=1$ to $0.97$, $0.90$, and $0.74$ at $k=2$, $4$, and $8$.

\begin{figure}[h]
\centering
\includegraphics[width=0.6\textwidth]{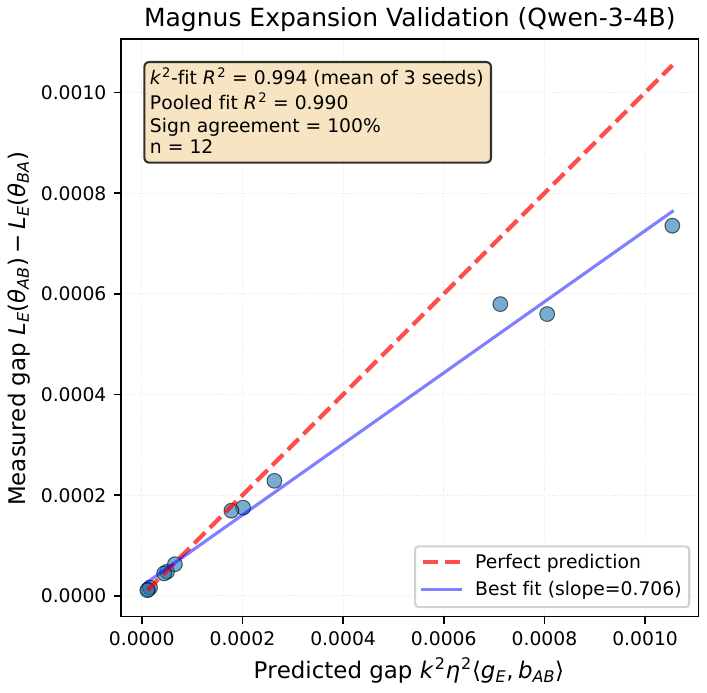}
\caption{Magnus/bracket validation (Qwen-3-4B). Measured held-out gap $\mathcal{L}_E(\theta_{AB})-\mathcal{L}_E(\theta_{BA})$ against its leading bracket prediction $k^2\eta^2\langle g_E(\theta_{\rm ref}),b_{AB}\rangle$ for 3 seeds and $k\in\{1,2,4,8\}$ SGD steps per source. The $k^2$ fit has mean $R^2=0.994$; the pooled predicted-vs-actual fit has slope $0.706$ and $R^2=0.990$; signs agree at all 12 points.}
\label{fig:magnus}
\end{figure}

\section{Per-Trial Results}
\label{app:per_trial}

Per-trial records for the Qwen-3-4B held-out intervention run (90 trials across 3 domain pairs $\times$ 30 seeds) and the Qwen-2.5-1.5B float32 run (90 trials) are included in the supplementary \texttt{runs/} directory with columns: trial, seed, domain\_A, domain\_B, gap\_baseline, gap\_intervened, gap\_reduction\_pct.

\section{Interpretability extended}
\label{app:interpretability_extended}

\begin{figure}[h]
\centering
\includegraphics[width=\textwidth]{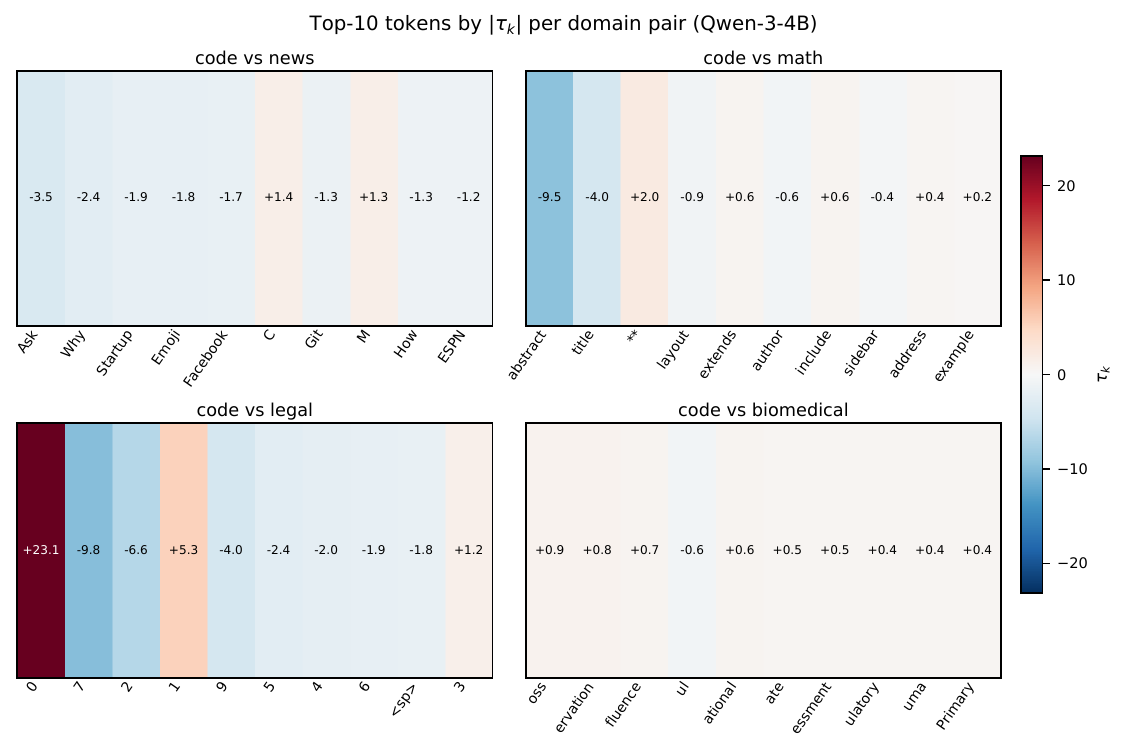}
\caption{Top-10 tokens by $|\tau_k|$ for each domain pair (Qwen-3-4B). Color encodes signed $\tau_k$ (red = order AB disadvantages token; blue = advantages). Tokens are almost entirely disjoint across pairs (Jaccard = 0.004), confirming domain specificity. Code vs.\ legal is dominated by digit tokens; code vs.\ biomedical by BPE suffix fragments.}
\label{fig:token_heatmap}
\end{figure}

\begin{figure}[h]
\centering
\includegraphics[width=\textwidth]{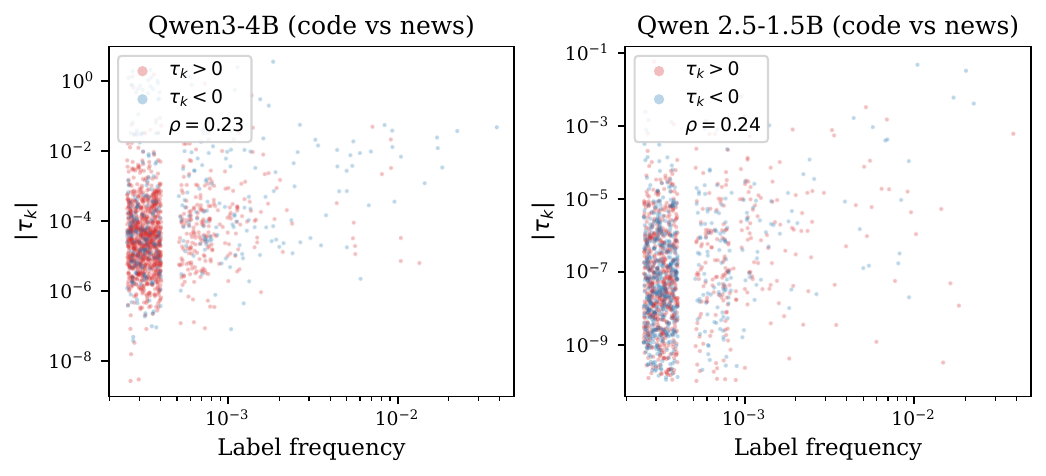}
\caption{Label frequency vs.\ $|\tau_k|$ for all vocabulary tokens (code vs.\ news). High-$|\tau_k|$ tokens span the full frequency range; low rank correlation ($\rho \approx 0.23$) argues against a Zipf-only explanation.}
\label{fig:freq_vs_tau}
\end{figure}

\begin{figure}[h]
\centering
\footnotesize
\setlength{\fboxsep}{1pt}\setlength{\fboxrule}{0.3pt}
\newcommand{\tkn}[1]{\fcolorbox{gray!45}{white}{\strut #1}}
\newcommand{\tkb}[1]{\fcolorbox{gray!45}{blue!20}{\strut #1}}
\newcommand{\tkr}[1]{\fcolorbox{gray!45}{red!20}{\strut #1}}
\begin{tabular}{@{}p{0.17\textwidth}p{0.79\textwidth}@{}}
\toprule
Slice (run) & Tokens, and the readout of the colored ones \\
\midrule
code/news & \tkb{Ask} \tkn{H}\tkn{N}\tkn{:} \tkn{Should} \tkn{I} \tkn{include} \tkn{my} \tkn{GPA} \tkn{and}\tkn{/or} \tkn{transcripts} \tkn{when} \tkn{applying} \tkn{for} \tkn{jobs}\tkn{?} \newline \texttt{Ask}: $\tau=-3.51$ (rank $1$) \\
\addlinespace
code/biomedical & \tkn{Comput}\tkn{ers} \tkn{are} \tkn{a} \tkn{ubiquitous} \tkn{part} \tkb{of} \tkn{the} \tkn{amb}\tkr{ulatory} \tkn{health} \tkn{care} \tkn{environment}\tkn{.} \newline \texttt{ulatory}: $\tau=+0.437$ (rank $8$); \texttt{of}: $\tau=-0.364$ (rank $19$) \\
\addlinespace
code/news (iterative run) & \tkb{Facebook} \tkn{Dating} \tkn{launch} \tkn{blocked} \tkn{in} \tkn{Europe} \tkn{after} \tkn{it} \tkn{fails} \tkn{to} \tkn{show} \tkn{privacy} \tkn{workings} \newline \texttt{Facebook}: $\tau=-1.75$ (rank $5$); its NLL is $10.074$ under AB and $10.896$ under BA, and $10.920$ after bracket correction ($97.0\%$ of its NLL gap closed) \\
\bottomrule
\end{tabular}
\caption{Tokenwise view of the held-out sentences in Section~\ref{sec:interpretability}, with one box per Qwen-3-4B token. A token is colored if it is among its pair's $20$ largest-$|\tau|$ tokens in the slice-level readout (rank in parentheses): blue for $\tau<0$ (order AB advantages the token) and red for $\tau>0$ (AB disadvantages it), as in Fig.~\ref{fig:token_heatmap}. Uncolored tokens are outside that set. The readout is aggregated over the evaluation slice (evaluation seed 44) rather than scored per sentence. The last row also tracks the \texttt{Facebook} position through the instrumented iterative-correction run, which targets $\theta_{BA}$ and overshoots it slightly (by $0.025$).}
\label{fig:examples}
\end{figure}

\paragraph{Per-pair token characters.} Code vs.\ news: 17/20 top tokens are complete words, predominantly proper nouns and brand names (``Ask'', ``Facebook'', ``ESPN'', ``Amazon'', ``Washington''); 87\% of $|\tau_k|$ mass. Code vs.\ math: 17/20 are complete words -- academic metadata (``abstract'' $\tau{=}-9.5$, ``title'' $\tau{=}-4.0$, ``author'', ``layout'', ``sidebar'') reflecting LaTeX source structure, plus code keywords (``extends'', ``pragma'', ``include''). Code vs.\ legal: 9/20 are single digit tokens contributing 92\% of mass; token ``0'' alone has $\tau=+23.1$.

\paragraph{Model scale.} Comparing Qwen-3-4B (4B) to Qwen-2.5-1.5B (1.5B), the same domain pairs show $|\tau_k|$ magnitudes 19--3700$\times$ larger in the larger model. The concentration pattern is consistent, while the absolute readout strength is larger in the larger model.

\paragraph{Prediction/label decomposition.} Since $e_k = p_k - \mathbb{1}_{y=k}$, $\tau_k = \tau^p_k - \tau^y_k$ where $\tau^p_k = \mathbb{E}[p_k \cdot \delta z_k]$ (predictions) and $\tau^y_k = \mathbb{E}[\mathbb{1}_{y=k} \cdot \delta z_k]$ (label occurrence). Aggregate $|\tau^p|/|\tau^y|=0.88$; both Gini $> 0.97$. 5 of top-20 are pure-prediction tokens with $\tau^y_k=0$, and $30\%$ of mass comes from such unobserved tokens. Cancellation between terms is rare (11\% of tokens). Smoothed frequency correction $\kappa_k = \tau_k/(\Pr(y{=}k)+\varepsilon)$ is reported as a frequency diagnostic (Spearman $0.98$, top-100 overlap $80\%$ within this check).

\paragraph{Token transfer within a model family (Qwen-3-4B vs.\ Qwen-2.5-1.5B; Llama-3.1-8B vs.\ Llama-3.2-1B).} Top-500 overlap is $28.6\times$ above the equal-size baseline on Qwen pairs ($47/500$ on code-vs-news; $19/500$ on code-vs-math, $11.5\times$), and $9.7\times$ on Llama pairs ($19/500$ on code-vs-math). Shared tokens are semantically coherent domain markers (e.g., ``Facebook'', ``Linux'', ``Programming''). Qwen and Llama use different tokenizers; cross-family is quantified only in semantic categories.

\paragraph{Rank vs.\ magnitude.} Overlap statistics measure rank-order agreement within each model's $|\tau|$ distribution; they do not imply absolute-magnitude convergence. Top-$20$ mean $|\tau|$ differs by $4+$ orders of magnitude across the four models ($\approx 1.31$ on Qwen-3-4B, $8.8\times 10^{-3}$ on Qwen-2.5-1.5B, $1.2\times 10^{-2}$ on Llama-3.1-8B, $4.3\times 10^{-5}$ on Llama-3.2-1B); $\sim\!30{,}000\times$ residual after $\eta^2$ normalization.

\paragraph{Why these categories.} The three token categories (domain markers, BPE morpheme fragments, digit asymmetry) all identify tokens with large error-weighted logit responses $e_k\,\delta z_k$ along the bracket. Domain-exclusive tokens carry large responses; morpheme fragments aggregate them via BPE compression; digit tokens reflect differences in how domains use numbers. The category counts and $|\tau|$-mass fractions are computed on the top-50 tokens per pair from the Qwen-3-4B main run; a held-out re-run reproduces the same top-token ranking on the two pairs available in both runs.

\section{Tables: ordering baselines, multi-step persistence, scaling in $k$}
\label{app:tables}

\paragraph{Ordering-prediction baselines.}\label{app:baselines}
\begin{table}[h]
\centering
\caption{Ordering prediction accuracy on 18 pair-seed units (6 domain pairs $\times$ 3 seeds per model).}
\label{tab:baselines}
\begin{tabular}{lccc}
\toprule
Method & Qwen-3-4B & Qwen-2.5-1.5B & Llama-3.1-8B \\
\midrule
Bracket & \textbf{72\%} (13/18) & \textbf{100\%} (18/18) & \textbf{100\%} (18/18) \\
Grad Cosine & 11\% (2/18) & 33\% (6/18) & 22\% (4/18) \\
Grad Norm & 11\% (2/18) & 56\% (10/18) & 22\% (4/18) \\
Random & 44\% (8/18) & 50\% (9/18) & 39\% (7/18) \\
\bottomrule
\end{tabular}
\end{table}

\paragraph{Multi-step persistence.}\label{app:multistep}
\begin{table}[h]
\centering
\caption{Multi-step persistence with fresh batches per step (averaged over 12 experiments per model). $\tau$ recall@1\%V is the fraction of the empirical top-1\% set (among tokens occurring as labels) at step count $k$ contained in the single-step bracket top-1\% full-vocabulary $\tau$ set. Qwen-3-4B uses bf16 storage with fp32 HVPs; Qwen-2.5-1.5B is full fp32. gap/gap$_1$: measured gap relative to $k=1$; $k^2$: the leading-order prediction.}
\label{tab:multistep}
\begin{tabular}{lllcccc}
\toprule
Model & Storage/HVP & $k$ & Gini & $\tau$ recall@1\%V & gap/gap$_1$ & $k^2$ \\
\midrule
\multirow{4}{*}{Qwen-3-4B} & \multirow{4}{*}{bf16/fp32}    & 1 & 0.9998 & 100\% & 1.0 & 1 \\
                           &                          & 2 & 0.9998 & 100\% & 1.5 & 4 \\
                           &                          & 4 & 0.9998 & 100\% & 2.0 & 16 \\
                           &                          & 8 & 0.9997 & 100\% & 3.1 & 64 \\
\midrule
\multirow{4}{*}{Qwen-2.5-1.5B} & \multirow{4}{*}{fp32} & 1 & 0.9982 & 99\% & 1.0 & 1 \\
                               &                       & 2 & 0.9982 & 99\% & 4.0 & 4 \\
                               &                       & 4 & 0.9983 & 98\% & 17.8 & 16 \\
                               &                       & 8 & 0.9983 & 98\% & 57.1 & 64 \\
\bottomrule
\end{tabular}
\end{table}

This recall statistic is intentionally asymmetric and is not the same as the equal-cardinality support-recovery metric in Table~\ref{tab:token_prediction}. Table~\ref{tab:token_prediction} compares predicted and empirical top sets of the same size within the active vocabulary for single-step token prediction; Table~\ref{tab:multistep} asks whether the empirical high-effect active tokens across multiple step counts remain covered by one fixed top-$1\%$ full-vocabulary $\tau$ set. Table~\ref{tab:multistep} draws fresh batches at every step, so its gap growth ($3.1\times$ at $k{=}8$ on Qwen-3-4B) is not comparable with the repeated-batch $k^2$ check of Appendix~\ref{app:magnus}.

\paragraph{$\tau$-vs-$\Delta W$ scaling in $k$.}\label{app:scaling_in_k}
The $\tau$-vs-$\Delta W$ Spearman correlation decays log-linearly across the six Qwen-3-4B domain pairs at $k \in \{1, 4, 8\}$: $\rho(k) \approx 0.83 - 0.055 \cdot \log k$ (mean decay rate $0.055 \pm 0.015$, $r^2 > 0.95$). The bracket's Lemma 2.1 expansion is a $k=1$ object; higher-order terms accumulate at roughly this rate for $k\le 8$.

\section{Iterative correction: per-position trajectories and downstream metrics}
\label{app:iterative_extended}

Iterative correction starts at $\theta_{AB}$. Each iteration recomputes the bracket $b$ at the current point $\theta$ on fresh $A$ and $B$ batches, sets $\gamma=\langle\theta-\theta_{BA},b\rangle/\langle b,b\rangle$, and updates the trained parameters by $\theta\leftarrow\theta-\gamma b$. It stops at $20$ iterations, or earlier if the remaining held-out gap falls below $1\%$ of the original or closure changes by less than $0.5$ percentage points over three iterations.

\begin{figure}[h]
\centering
\includegraphics[width=0.85\linewidth]{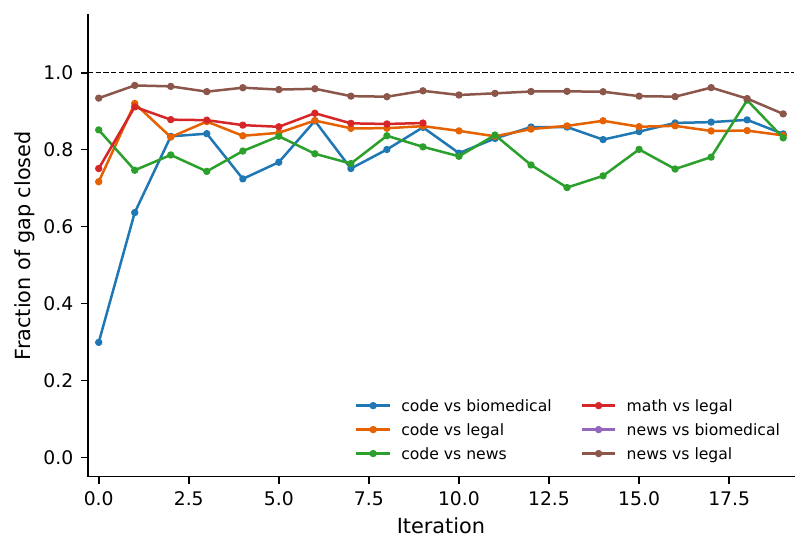}
\caption{Iterative bracket correction convergence (averaged per pair). Five of six pairs reach $>$85\% closure within 20 iterations; news-vs-biomedical ($k{=}1$ gap $8.5\times10^{-4}$, near the noise floor) diverges. On the other five pairs the iterative projection closes $90\%$, against $13.5\%$ for the single-step full-bracket edit averaged over all pairs and step counts ($6.7\times$; App.~\ref{app:endpoint_perpair}).}
\label{fig:convergence}
\end{figure}

\paragraph{Per-position movement.} Instrumented runs track NLL of individual sequence positions throughout the correction. For each experiment we select the 50 positions with the largest $|\text{NLL}(\theta_{AB}) - \text{NLL}(\theta_{BA})|$ and measure their NLL at every iteration. Mean per-position convergence is $85\%$ across five domain pairs; top-gap tokens (e.g., ``Netflix'', ``Facebook'', ``Google'' for code-vs-news) reach $93$--$97\%$ of their $\theta_{BA}$ targets. Figure~\ref{fig:trajectories} shows representative trajectories.

\begin{figure}[h]
\centering
\includegraphics[width=0.85\linewidth]{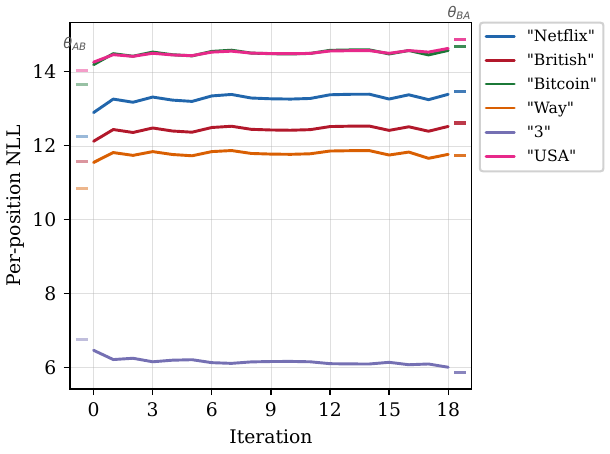}
\caption{Per-position NLL trajectories during iterative bracket correction (code-vs-news, seed=42, $k=1$). Each line tracks one sequence position's NLL; dashed lines show $\theta_{BA}$ targets.}
\label{fig:trajectories}
\end{figure}

\paragraph{Downstream likelihood detail.} On a representative code-vs-math pair on Qwen-3-4B at $k=1$, gold-answer teacher-forced NLL closes $14.2\%$ on GSM8K (1319 problems, 1000-iterate bootstrap CI $[0.126, 0.157]$) and $20.4\%$ on HumanEval (164 problems, CI $[0.178, 0.232]$). Discrete accuracy (pass@1, exact match) moves by at most $7$ percentage points at $k=1$; these changes are not tested for significance.

\paragraph{Selectivity diagnostic.} Across the 34 experiments that did not diverge (all but the two news-vs-biomedical seeds with negative closure), domain-$A$ NLL moves to $99.3\% \pm 3\%$ of $\theta_{BA}$'s value on $A$ and domain-$B$ NLL to $109.9\% \pm 15\%$ of $\theta_{BA}$'s value on $B$. The selectivity ratio $|\mathrm{movement}_B/\mathrm{movement}_A|$, where $\mathrm{movement}_D$ is the change in domain-$D$ NLL from $\theta_{AB}$ to the corrected endpoint, has median $1.04$, mean $1.12$ ($\pm 0.32$).

\FloatBarrier
\section{AdamW augmented-state commutator: endpoint check}
\label{app:adamw}

The Lie bracket structure of \citet{geometry_sequential_learning} is proved for SGD; the Adam family's per-coordinate normalization~\citep{kingma2014adam} and AdamW's decoupled weight decay~\citep{loshchilov2019decoupled} do not commute with the gradient-only commutator. We define an \emph{augmented-state} commutator on the lifted state $z = (\theta, m, v, s)$ (parameters, first- and second-moment buffers, step counter $s$):
\[
\tau_{\text{Adam}} = u(g_A; z_0) + u(g_B; z_A^*) - u(g_B; z_0) - u(g_A; z_B^*),
\]
where $u(g;z)$ is the AdamW parameter update produced by gradient $g$ from optimizer state $z$, and $z_A^*$ advances the optimizer buffers and step counter under $g_A$ but freezes $\theta$ at $\theta_0$ (and symmetrically for $z_B^*$). The associated frozen-surrogate state pair $\hat\theta_{AB}, \hat\theta_{BA}$ satisfies $\hat\theta_{AB} - \hat\theta_{BA} = -\eta\,\tau_{\text{Adam}} + O(\eta^2)$, the AdamW analogue of Lemma 2.1's $O(\eta^2)$ SGD identity. Up to sign, $\eta\,\tau_{\text{Adam}}$ is the first-order term of Theorem~1 in \citet{sweeney2026optimizermemory} applied to the two orders $AB$ and $BA$; that term comes from replaying the optimizer state with parameters frozen.

\paragraph{Fixed-clock regime (cited).} \citet{sweeney2026optimizermemory} characterizes the regime in which this check operates; we summarize and apply, but do not re-prove, two of its results:
\begin{itemize}
\item \textbf{Lifted-state clock theorem (\citealp{sweeney2026optimizermemory}, Thm.~1).} Fixed-$\beta$ moment buffers and de-biasing counters advance with the step index rather than with the learning-rate-scaled time $\eta k$, so the lifted-state step map is not a regular family $F_D^\eta = I + \eta X_D + O(\eta^2)$: the buffers keep changing even as the parameter displacement is scaled down. For such fixed-clock state, including full frozen-gradient AdamW replay, reordering the same data (in particular, swapping $AB$ for $BA$) changes the endpoint by $\eta$ times the difference of the two frozen-parameter optimizer-state replays, plus $O(\eta^2)$. The effect is therefore $\Theta(\eta)$ whenever that difference is nonzero. For regular (memoryless) optimizers the first-order term cancels, and the leading effect is the $O(\eta^2)$ bracket.
\item \textbf{Matched-clock restoration (\citealp{sweeney2026optimizermemory}, \S3 and App.~C.6).} For first-moment memory with a frozen coordinate preconditioner, matching both the memory and de-biasing clocks to the learning-rate-scaled time $\eta k$ ($\beta = e^{-a\eta}$, $s_0 = T_c/\eta$, with $a$ and $T_c$ fixed constants of that control) flattens the optimizer's per-gradient position weights to an $O(\eta)$ spread, restoring the regular $O(\eta^2)$ scaling. The supporting experiment is a regularized first-moment control, not full AdamW. The matched clock is not itself a regular family, since $\beta$ and $s_0$ depend on $\eta$; standard fine-tuning recipes use fixed-clock state.
\end{itemize}
\citet[\S5 and App.~C.4]{sweeney2026optimizermemory} measure this split in LoRA fine-tuning: under fixed-clock AdamW, the order effect has fitted $\eta$-slopes of $0.910$ (Pythia-1B) and $0.915$ (Llama-3.2-1B), compared with $1.960$--$2.005$ for SGD and $1.968$--$1.973$ for a matched-clock first-moment control. Adding only a fixed-$\beta$ momentum buffer moves the slope from $1.988$ (buffer-free SGD) to $0.998$. The paper's bracket-direct DPO edit and GRPO-style closure check (Appendices~\ref{app:dpo_extended} and~\ref{app:grpo}) are therefore restricted to SGD, where the bracket identity is exact at $O(\eta^2)$; the $O(\eta)$ fixed-clock regime requires the lifted-state object $\tau_{\text{Adam}}$.

\paragraph{Midpoint scoring (Remark).} The standard scalar score $\langle g_E(\theta_0), \theta_{AB}-\theta_{BA}\rangle$ has $O(\eta^2)$ error in fixed-$\beta$ AdamW because the symmetric drift $\bar d := \tfrac12(\theta_{AB}+\theta_{BA}) - \theta_0$ is $O(\eta)$ and the order split $\Delta_{\rm split} := \theta_{AB}-\theta_{BA}$ is $O(\eta)$, so the cross term $\bar d^\top H_E \Delta_{\rm split}$ contributes an $O(\eta^2)$ residual. Scoring at the midpoint $\theta_m := \tfrac12(\theta_{AB}+\theta_{BA})$ instead gives
\[
\mathcal{L}_E(\theta_{AB}) - \mathcal{L}_E(\theta_{BA}) \;=\; \langle \nabla \mathcal{L}_E(\theta_m), \theta_{AB} - \theta_{BA}\rangle + O(\eta^3),
\]
because the second-order Taylor term cancels exactly by symmetry around $\theta_m$. This is the AdamW analogue of the $\theta_{\rm ref}$ drift-cancellation identity that makes Proposition~\ref{prop:bo} clean in the SGD regime. It is a loss-linearization identity at the observed endpoints; an $O(\eta^3)$ AdamW correction step, which we leave to follow-up work, would also need an endpoint prediction accurate to that order.

\paragraph{Order-prediction transfer (cited).} \citet[App.~D.7]{geometry_sequential_learning} evaluates the unmodified SGD-derived order predictor against AdamW ground truth over $1{,}020$ Llama-3.2-1B trials. Sign accuracy is $47.7\%$, $51.3\%$, $57.1\%$, and $46.7\%$ at $k{=}5$, $10$, $20$, and $50$ AdamW steps per domain; an exploratory Adam-aware augmented-state score reaches $59.6\%$ at $k{=}10$. The SGD bracket is therefore at most weakly informative under AdamW, which is why the checks below use the lifted-state object $\tau_{\text{Adam}}$ rather than the SGD commutator.

\paragraph{AdamW gradient-space check (our experiment).} On Qwen-3-4B bf16 across six UltraFeedback source-pairs ($\eta=10^{-4}$, $\beta_1=0.9$, $\beta_2=0.999$, weight decay $10^{-2}$, last-layer down/up/gate + lm\_head, 463M params, 20-step warmup (AdamW steps before the two source updates), 3 seeds each), the augmented-state readout is concentrated in the SGD-comparable gradient-space metric:
\begin{itemize}
\item \textbf{Reweighted Gini} (multiply each row by $\sqrt{\hat v}$ before computing Gini---this undoes AdamW's per-coordinate rescaling and recovers the SGD-comparable metric): median $\mathbf{0.980}$ (bf16; range $0.977$--$0.982$ across pairs); $0.989$ on a fp32 control. Compare SGD's gradient-space $0.97$.
\item \textbf{Gradient-difference Gini} (Gini of the lm\_head row of $g_A - g_B$, a first-order proxy for the bracket direction in \emph{gradient-space row-norm structure} only, not a parameter-space edit; cf.\ the first-order controls below, where applying $g_A-g_B$ as a parameter step at norm-matched magnitude widens the ordering gap): mean $0.974$. This is consistent with the row-norm structure that drives the SGD readout.
\item \textbf{Cross-pair top-$1000$ Jaccard}: $\leq 0.051$ across all $15$ pair$\times$pair combinations (mean $0.024$, min $0.000$). Top tokens are pair-specific, matching the cross-pair Jaccard $\approx 0.004$ of SFT $\tau$ (Section~\ref{sec:interpretability}). The high-mass support is domain-specific, not universal, although the leading singular direction is shared across pairs (next item).
\item \textbf{Stable rank ($\|\cdot\|_F^2/\|\cdot\|_2^2$) of the $\tau_{\text{Adam}}$ subspace} mean $6.08$ across $18$ pair-seed units (range $5.26$--$6.75$); on a matched-row-$L_2$ Gaussian random control the stable rank is $963$, a $\sim$160$\times$ compression. The augmented-state commutator is low-rank, consistent with the cross-pair compositionality observed for SGD.
\item \textbf{Shared leading singular direction across pairs}: pairwise cosine between leading singular directions has off-diagonal mean $0.996$ across $306$ entries of the $18 \times 18$ matrix (min $0.989$, max $0.998$); same-pair mean $0.997$, different-pair mean $0.995$. The leading bracket direction is largely shared; pair-specificity emerges in the rank-$\sim$6 subspace beyond the leading direction. This is reported as a descriptive observation in this AdamW check.
\item \textbf{Raw lm\_head Gini} (without $\sqrt{\hat v}$ reweighting): $0.49$--$0.55$ -- \emph{by AdamW design}, not a sparsity loss. AdamW's update is $\hat m / \sqrt{\hat v}$; the $1/\sqrt{\hat v}$ factor intentionally rescales coordinates with high second-moment estimate, spreading the per-coordinate update magnitudes. The gradient-space Gini above removes that rescaling and shows the concentration metric in SGD-comparable coordinates.
\item \textbf{Top tokens} (raw): \texttt{"???"}, \texttt{" Neither"}, \texttt{" substance"}, \texttt{" seasons"}, \texttt{" Mostly"} on \texttt{false\_qa\_vs\_flan\_v2\_p3} ---a mix of formatting and content tokens consistent with DPO's chosen-vs-rejected structure.
\end{itemize}

\paragraph{Lifted-state endpoint closure (our experiment).} On Qwen-2.5-1.5B fp32 with AdamW ($\beta_1{=}0.9$, $\beta_2{=}0.999$, weight decay $10^{-2}$, $\eta{=}5{\times}10^{-5}$, $20$-step warmup, last-layer MLP $\sim\!41\text{M}$ params: \texttt{gate\_proj}, \texttt{up\_proj}, \texttt{down\_proj} on layer $27$), the framework's lifted-state object $\tau_{\text{Adam}}$ predicts the AdamW endpoint difference in direction and magnitude across $6$ source pairs $\times\,3$ seeds $=18$ experiments. The closure edit $\theta_{AB} \leftarrow \theta_{AB} + \lambda\eta\,\tau_{\text{Adam}}$ (sign convention: $\theta_{AB} - \theta_{BA} \approx -\eta\,\tau_{\text{Adam}}$, so closure \emph{adds}):
\begin{itemize}
\item \textbf{Parameter cosine} between $\theta_{BA}-\theta_{AB}$ and $\eta\,\tau_{\text{Adam}}$: mean $\mathbf{0.998}$ (the framework's predicted closure direction matches the actual endpoint difference direction at near-unit cosine).
\item \textbf{Relative endpoint error} at $\lambda{=}1$: $r_{\rm end}:=\|\Delta_{\rm end}-\eta\,\tau_{\text{Adam}}\|/\|\Delta_{\rm end}\|$ with $\Delta_{\rm end}=\theta_{BA}-\theta_{AB}$ has mean $\mathbf{0.068}$, i.e., the predicted edit leaves a residual of about $7\%$ of the endpoint difference (the companion statistic $1-r_{\rm end}^2$ has mean $\mathbf{0.995}$).
\item \textbf{NLL gap closure} at $\lambda{=}1$: mean $\mathbf{79.5\%}$, median $\mathbf{83.8\%}$, $95\%$ CI $\mathbf{[72.6\%, 85.9\%]}$.
\item \textbf{Sign-convention controls}: random norm-matched edit $-0.03\%$ CI $[-0.17\%, 0.10\%]$ (null); wrong-sign edit $\theta_{AB} - \eta\,\tau_{\text{Adam}}$ drives closure to $-96.6\%$ CI $[-102\%, -90\%]$ ($\sim\!2\times$ the gap, the predicted reflection).
\item \textbf{First-order baseline (our experiment)}: applying $g_A - g_B$ at the warmed-up state $\theta_{\rm warm}$, norm-matched to $\|\eta\,\tau_{\text{Adam}}\|$ with parameter-space $\cos(g_A-g_B,\,\tau_{\text{Adam}})=0.43$ (s.d.\ $0.02$) across all $18$ pair-seed units, drives closure to $\mathbf{-668\%}$ at $\lambda{=}1$ (median $-523\%$, CI $[-839\%, -503\%]$); the $\lambda$-scan is monotonic ($\lambda{=}0.5{:}-252\%$, $1.5{:}-1088\%$, $2.0{:}-1511\%$). $1-\cos^2 \approx 0.81$ of the first-order edit's energy is orthogonal to $\tau_{\text{Adam}}$, exciting high-curvature eval directions that wrong-sign (which stays along $\tau_{\text{Adam}}$) does not. The same gap-widening pattern under first-order norm-matched edits appears in the DPO (App.~\ref{app:r79_lambda}) and GRPO-style (App.~\ref{app:grpo}) controls.
\item \textbf{$\lambda$-scan} for $\tau_{\text{Adam}}$ ramps cleanly: $\lambda{=}0$ $\to 0\%$, $\lambda{=}0.5 \to 47\%$, $\lambda{=}1 \to 80\%$, $\lambda{=}1.5 \to 44\%$ (overshoot), $\lambda{=}2 \to -2\%$ (full overshoot). Optimum at the predicted $\lambda{=}1$ closure step.
\end{itemize}
This endpoint closure check gives an AdamW analogue when the SGD bracket $b_{AB}$ is replaced by the augmented-state commutator $\tau_{\text{Adam}}$ on the lifted state $z = (\theta, m, v, s)$. The control hierarchy (random $\approx 0$, wrong-sign $\approx -1$, first-order $\ll -1$, $\tau_{\text{Adam}} \to +0.80$) mirrors the DPO and GRPO-style pattern.

\paragraph{AdamW summary.} The unmodified SGD bracket is at most weakly predictive of AdamW order ($57.1\%$ at $k{=}20$; \citealp[App.~D.7]{geometry_sequential_learning}), consistent with the fixed-clock regime above. Our AdamW experiments add a gradient-space concentration analogue on Qwen-3-4B (Gini $0.980$, stable rank $\sim$6, Jaccard $\leq 0.051$) and a lifted-state endpoint closure check on Qwen-2.5-1.5B fp32 (parameter cosine $0.998$, relative endpoint error $0.068$, NLL gap closure $79.5\%$ $[72.6\%, 85.9\%]$). The main DPO edit and GRPO-style closure check use SGD, where the BCH identity is exact at $O(\eta^2)$; an AdamW analogue of this edit with midpoint scoring is left to follow-up work.

\paragraph{Implementation.} The supplementary code includes \texttt{src/lie\_fisher/adamw\_commutator.py}, with tests for agreement against \texttt{torch.optim.AdamW} on single steps and for the closure-edit sign convention. AdamW correction during training remains follow-up work.

\section{Continuation study: retention and attenuation}
\label{app:continuation}

\paragraph{Protocol.} This study measures how long the order-specific parameter component survives later training. It uses Qwen-3-4B at the step size of the main SFT runs ($\eta=6.42\times10^{-4}$), training the last-block MLP and the language-model head. It covers all ten source pairs over \{code, news, legal, biomedical, math\} (the six pairs over \{code, news, legal, biomedical\} and the four pairs that include math), each with two new seeds, for $20$ runs. After the two source updates, the endpoints $\theta_{AB}$ and $\theta_{BA}$ are both continued with SGD on a third domain $C\notin\{A,B\}$ (Table~\ref{tab:continuation}), using the same sixteen disjoint $50$-record blocks in the same order, with gradients recomputed at each branch's current parameters. With $D_h=\theta_{AB}^{(h)}-\theta_{BA}^{(h)}$ after $h$ later updates, we track
\[
M_h=\frac{\langle D_h,b_{AB}\rangle}{\eta^2\|b_{AB}\|^2},\qquad R_h=\frac{\langle D_h,b_{AB}\rangle}{\langle D_0,b_{AB}\rangle}.
\]
$M_h$ is the trace on the scale of the predicted initial effect, and $R_h$ is the signed fraction of the initial trace that remains. A run starts with an aligned trace if $M_0>0.01$, and it retains the aligned trace at horizon $h$ if also $M_h>0.01$. The criterion is one-sided: a run whose projection changes sign does not retain the aligned trace, even when the reversed component is large. No run is excluded from any count.

\begin{table}[t]
\centering
\small
\caption{Continuation study, all $20$ runs (Qwen-3-4B; SGD continuation on domain $C$). $M_0$ is the initial trace on the predicted scale, and $R_h$ is the signed fraction retained after $h$ later updates. $^{\circ}$: the run does not retain the aligned trace at that horizon ($M_h\le0.01$, including sign reversals). The two \texttt{news/math} runs do not start with an aligned trace ($M_0\le0.01$), so $R_h$ is not reported for them. Rows above the middle rule are non-math pairs; rows below include math.}
\label{tab:continuation}
\begin{tabular}{@{}lllrrrr@{}}
\toprule
Pair $(A,B)$ & $C$ & Seed & $M_0$ & $R_4$ & $R_8$ & $R_{16}$ \\
\midrule
\texttt{code/news} & legal & 8042 & 0.663 & $+$0.286 & $+$0.175 & $+$0.134 \\
\texttt{code/news} & legal & 9042 & 0.530 & $+$0.291 & $+$0.219 & $+$0.113 \\
\texttt{code/legal} & news & 8042 & 2.513 & $-$0.433$^{\circ}$ & $-$0.309$^{\circ}$ & $-$0.170$^{\circ}$ \\
\texttt{code/legal} & news & 9042 & 2.337 & $-$0.348$^{\circ}$ & $-$0.257$^{\circ}$ & $-$0.157$^{\circ}$ \\
\texttt{code/biomedical} & news & 8042 & 0.721 & $+$0.583 & $+$0.446 & $+$0.292 \\
\texttt{code/biomedical} & news & 9042 & 0.577 & $+$0.576 & $+$0.439 & $+$0.283 \\
\texttt{news/legal} & code & 8042 & 1.221 & $+$0.220 & $+$0.144 & $+$0.063 \\
\texttt{news/legal} & code & 9042 & 1.121 & $+$0.191 & $+$0.121 & $+$0.055 \\
\texttt{news/biomedical} & math & 8042 & 0.599 & $+$0.289 & $+$0.248 & $+$0.217 \\
\texttt{news/biomedical} & math & 9042 & 0.555 & $+$0.365 & $+$0.339 & $+$0.295 \\
\texttt{legal/biomedical} & math & 8042 & 2.023 & $-$0.010$^{\circ}$ & $-$0.015$^{\circ}$ & $-$0.015$^{\circ}$ \\
\texttt{legal/biomedical} & math & 9042 & 2.200 & $+$0.027 & $+$0.030 & $+$0.053 \\
\midrule
\texttt{code/math} & legal & 8042 & 0.275 & $+$0.197 & $+$0.145 & $+$0.086 \\
\texttt{code/math} & legal & 9042 & 0.033 & $+$0.502 & $+$0.016$^{\circ}$ & $-$0.170$^{\circ}$ \\
\texttt{news/math} & biomedical & 8042 & 0.005 & --- & --- & --- \\
\texttt{news/math} & biomedical & 9042 & $-$0.006 & --- & --- & --- \\
\texttt{legal/math} & biomedical & 8042 & 0.557 & $+$0.190 & $+$0.093 & $+$0.022 \\
\texttt{legal/math} & biomedical & 9042 & 0.436 & $+$0.252 & $+$0.094 & $+$0.022$^{\circ}$ \\
\texttt{biomedical/math} & code & 8042 & 0.158 & $+$0.455 & $+$0.428 & $+$0.287 \\
\texttt{biomedical/math} & code & 9042 & 0.081 & $+$0.433 & $+$0.327 & $+$0.301 \\
\midrule
\multicolumn{3}{@{}l}{Runs with an aligned trace} & $18/20$ & $15/20$ & $14/20$ & $13/20$ \\
\multicolumn{3}{@{}l}{Median $R_h$ over the $18$ runs that start aligned} & & $0.269$ & $0.144$ & $0.074$ \\
\bottomrule
\end{tabular}
\end{table}

\paragraph{Pass rule.} The rule was fixed before the runs. A horizon passes only if all of the following hold: (i) at least $15$ of the $20$ runs retain the aligned trace, including at least $9$ of the $12$ non-math runs and $6$ of the $8$ math-paired runs; (ii) at least $8$ of the $10$ pairs have at least one seed that retains it; (iii) the median $R_h$ over runs that start aligned is positive; and (iv) the continuation trains on $C$: at least $32$ of the $40$ branches lower their loss on $C$ while moving at least as far as the original source updates (i.e., for each branch $r\in\{AB,BA\}$, $\|\theta_r^{(h)}-\theta_r^{(0)}\|$ is at least the root-mean-square norm of the two source updates), and the median relative drop in $C$ loss is at least $0.1\%$.

\paragraph{Results.} Table~\ref{tab:continuation} lists every run. Eighteen of the $20$ runs start with an aligned trace; the two news-vs-math runs do not. The rule is met after four later updates ($15/20$ runs; $9/12$ non-math and $6/8$ math-paired) but not after eight ($14/20$; $5/8$ math-paired) or sixteen ($13/20$; $4/8$ math-paired). The non-math count stays at $9/12$ throughout, so both failures come from the math-paired runs. Condition (iv) holds at every horizon: all $40$ branches lower their loss on $C$; $35$, $40$, and $40$ of them also meet the displacement condition; and the median relative drop in $C$ loss is $2.2\%$, $3.1\%$, and $4.1\%$. Over the $18$ runs that start aligned, the median $R_h$ falls from $0.27$ to $0.14$ to $0.07$ after four, eight, and sixteen later updates. Both code-vs-legal runs reverse sign within four updates; they fail the one-sided criterion but keep a large reversed component ($M_4=-1.09$ and $-0.81$). The aligned trace therefore persists over a short horizon and then attenuates in the median, with sign reversals in some runs. This does not establish behavioral retrieval or long-term memory.

\FloatBarrier
\section{Cost accounting and cheaper curvature approximations}
\label{app:cost}

\paragraph{Exact matrix-free cost.} The exact calculation never forms or stores a Hessian. For one source pair it computes two gradients and two Hessian-vector products (HVPs); with each HVP costing roughly two forward--backward passes (Appendix~\ref{app:reproducibility}), the bracket costs about six forward--backward equivalents for the reported parameter block. The scalar score requires one additional held-out gradient, and the token readout adds three forward-only passes. The number of passes per pair is fixed, although the cost of each pass grows with model size, sequence length, sample count, and the number of parameters included. The paper applies this exact calculation to the reported parameter blocks at model sizes through 8B.

\paragraph{Exact reuse across pairs.} Suppose $N$ candidate sources share a base model $\theta_0$ and a held-out evaluation loss, with $g_D$, $H_D$, and $g_E$ evaluated at $\theta_0$. By Hessian symmetry, the leading predicted gap for pair $(A,B)$ is $\eta^2[g_A^\top(H_Bg_E)-g_B^\top(H_Ag_E)]$. Each $H_Dg_E$ is therefore computed once per source, so ranking all pairs among $N$ sources needs $N$ HVPs instead of $N(N-1)$, plus inexpensive dot products. This is the identity behind the Lie-Bracket Tournament of \citet[\S3.1]{geometry_sequential_learning}, and it is exact for the fixed-base leading-order score. On the Qwen-3-4B code-vs-news test, reused and direct scores agree to a maximum relative error of $1.6\times10^{-5}$. If different pairs use different evaluation losses, reuse applies only within groups sharing the same loss. The paper's primary score instead evaluates the held-out gradient at the pair-specific reference $\theta_0-\eta(g_A+g_B)$; replacing it by the base-model gradient changes the predicted gap at $O(\eta^3)$~\citep[Remark~2.6]{geometry_sequential_learning}, the order already neglected by the second-order approximation. The fixed-base formula can therefore screen all pairs cheaply; the pair-specific score, exact matrix-free HVPs, and token readout are then computed pair by pair.

\paragraph{Cheaper curvature approximations.} A second route avoids HVPs with a streamed diagonal curvature approximation. For the final vocabulary-prediction layer, with hidden states held fixed, the layer's own contribution to the Hessian diagonal at weight $(k,j)$ can be accumulated in one pass from $p_k(1-p_k)h_j^2$, where $p_k$ is the probability of token $k$ and $h_j$ is hidden-state coordinate $j$. This discards cross-coordinate interactions. Block-diagonal or low-rank generalized Gauss--Newton approximations~\citep{martens2015optimizing} retain more of those interactions and provide intermediate cost--fidelity trade-offs, but are not automatically HVP-free. The relevant fidelity tests are whether an approximation preserves held-out ordering signs and the largest token effects. All reported pair-specific scores and token readouts use exact matrix-free HVPs.

\section{Reproducibility}
\label{app:reproducibility}

The supplementary materials include source code, protocol definitions, and result files for reported claims. Run IDs in the paper are stable identifiers; artifact locations are documented in the supplement and need not correspond one-to-one to local directory names. The \texttt{src/lie\_fisher/} package contains the bracket, attribution, and intervention machinery; \texttt{src/lie\_fisher/protocols.py} contains the frozen configuration records (\texttt{Protocol} dataclasses) used by the reported intervention protocols.

\subsection{Software environment}

\begin{itemize}
\item Reported runs used CUDA/PyTorch/Transformers environments recorded in released run metadata; \texttt{requirements.txt} gives the reproduction baseline. The run envelope is Python 3.11--3.12, PyTorch 2.x, Transformers 4/5 depending on run, NumPy 1.26--2.x, and matplotlib 3.x.
\item One CUDA backend per run (no multi-GPU sharding for the reported experiments). Float32 is enforced for bracket/HVP/readout paths that require the fp32 precision policy in Appendix~\ref{app:precision}.
\item Seed control: each experiment locks \{Python, NumPy, PyTorch CPU, PyTorch CUDA\} seeds at the value reported in \texttt{run\_meta.base\_seed} of the corresponding result JSON. Per-trial seeds are derived deterministically from \texttt{base\_seed} via the experiment's protocol.
\end{itemize}

\subsection{Models and dataset revisions}

\begin{itemize}
\item \textbf{Qwen-3-4B} (\texttt{Qwen/Qwen3-4B}, 4B params); \textbf{Qwen-3-8B} for DPO concentration checks; \textbf{Qwen-2.5-1.5B} (\texttt{Qwen/Qwen2.5-1.5B}, 1.5B params); \textbf{Qwen-2.5-0.5B-Instruct} for the GRPO-style surrogate; \textbf{Llama-3.1-8B} (\texttt{meta-llama/Llama-3.1-8B}, 8B params); \textbf{Llama-3.2-1B} (\texttt{meta-llama/Llama-3.2-1B}, 1B params). All models loaded via \texttt{transformers.AutoModelForCausalLM} with \texttt{trust\_remote\_code=False}.
\item SFT training/eval data: The Pile~\citep{gao2020pile}, six domain partitions: \texttt{code} (StackExchange), \texttt{news} (CC-News), \texttt{legal} (FreeLaw), \texttt{biomedical} (PubMed Central), \texttt{math} (StackExchange Math), \texttt{wikipedia}. Section~\ref{sec:setup} lists the domain pairs used by each SFT experiment; the continuation study uses the ten pairs over $\{$code, news, legal, biomedical, math$\}$ (Appendix~\ref{app:continuation}). Eval offset $= 50$ documents past the training partition for held-out evaluation (Section~\ref{sec:setup}).
\item DPO training/eval data: UltraFeedback~\citep{cui2024ultrafeedback}, source partitions: \texttt{false\_qa}, \texttt{flan\_v2\_p3}, \texttt{sharegpt}, \texttt{ultrachat} for the original six-pair grid; \texttt{evol\_instruct}, \texttt{truthful\_qa} for the fresh-pairs replication (App.~\ref{app:r79_freshpairs}). The token-reweighting fresh-pair check uses the three source pairs evaluable under the released snapshot; the matched-batch fresh-pairs grid uses a pair-disjoint source-pair list.
\end{itemize}

\subsection{Compute budget and hardware}

\begin{table}[h]
\centering
\small
\caption{Indicative GPU-hours and hardware per experiment family. Total project compute is driven primarily by the DPO matched-batch runs and the endpoint-correction runs; sparsity/baseline runs are lower-cost. The rows sum to about 32 GPU-hours; the project total is cumulative over all runs. Run identifiers are listed under Repository pointers below.}
\label{tab:compute_budget}
\begin{tabular}{p{0.21\textwidth}p{0.17\textwidth}p{0.10\textwidth}p{0.40\textwidth}}
\toprule
Experiment family & Hardware & GPU-hours & Notes \\
\midrule
Sparsity (4 models)         & RTX 4090 / Pro 6000 & $\sim$2  & R8b, R9, R13\_sparsity, R14\_sparsity, R15\_sparsity, R16 \\
Baselines (4 models)        & RTX 4090 / A40      & $\sim$1  & R8b, R9, R13 \\
SFT interventions           & RTX Pro 6000 / RTX 4090 & $\sim$3 & R14a, R14b, R15 (full fp32 for Qwen-2.5) \\
Endpoint correction / multi-step        & A40 / L40 / Pro 6000    & $\sim$5 & R26, R27, R28, R19, R20 \\
Iterative bracket correction & L40                     & $\sim$3 & R29, R30 (instrumented per-position NLL) \\
DPO sparsity (3 models)     & A40 / Pro 6000          & $\sim$2 & R50, R52, R60 \\
DPO token reweighting & RTX Pro 6000 WS         & $\sim$3 & R78: 6 original pairs $+$ three-pair fresh check, 3 seeds $\times$ 20 trials \\
DPO matched-batch correction & RTX Pro 6000 WS / H100 NVL & $\sim$10 & R79: 6 original pairs (3 seeds $\times$ 10 trials) $+$ 6 disjoint fresh pairs (3 seeds $\times$ 10) \\
Frozen-rollout GRPO-style & A100 80GB / RTX-class & $\sim$2 & R81/R82 matched-rollout checks \\
AdamW endpoint closure       & RTX 4090 24GB           & $\sim$1 & R86 + R87 first-order baseline \\
\midrule
Project total (cumulative)  & mixed academic GPUs     & $\sim$48 & approximately USD 50--55 in cumulative spot-rental cost \\
\bottomrule
\end{tabular}
\end{table}

The main per-experiment cost is HVP computation (each Hessian-vector product is roughly $2\times$ a forward-backward pass; the bracket requires two HVPs). The hardware table is indicative; released result files correspond to the reported protocols.

\subsection{Fixed protocols and conflict checks}

Each intervention reported in the paper has a corresponding frozen configuration record (\texttt{Protocol} dataclass) in \texttt{src/lie\_fisher/protocols.py} that fixes \{model, dtype, $\eta$, $k$, batch composition, dose-match policy, identity-check tolerance, top-$K$ filter, control set\}. The runner aborts when the protocol's immutable fields conflict with command-line overrides. Released \texttt{run\_meta} blocks record the active protocols for the final reported runs.

\subsection{Repository pointers}

\begin{itemize}
\item Bracket and HVP machinery: \texttt{src/lie\_fisher/lie.py} (HVP via \texttt{torch.autograd.grad} on the gradient inner product, in-module).
\item Per-token attribution: \texttt{src/lie\_fisher/head\_moments.py} (SFT) and \texttt{compute\_token\_attribution\_dpo} (corrected DPO $\tau$).
\item Winsorized aggregation (5/95 mean + bootstrap CI): \texttt{src/lie\_fisher/stats.py:robust\_aggregate}.
\item $\eta$ calibration (cube-root regime): \texttt{src/lie\_fisher/eta\_autopilot.py}.
\item Protocol definitions: \texttt{src/lie\_fisher/protocols.py}.
\item \textbf{Run identifiers} (used in the compute table above and as directory names in the archive): R8b (Qwen-3-4B main run: sparsity, interpretability, baselines, Magnus check), R9 (Qwen-2.5-1.5B main run), R13/R16 (Llama runs), R8c/R10 (earlier Qwen-3-4B intervention sweeps), R14a/R14b (Qwen-3-4B held-out SFT interventions), R15 (Qwen-2.5-1.5B fp32 SFT interventions), R19/R20 (multistep persistence), R26/R28 (endpoint correction on Qwen-2.5-1.5B/Qwen-3-4B), R27 (triplet additivity), R29/R30 (iterative correction; R30 instrumented per position), R44 (follow-up $\theta_0$-reference readouts), R50/R52/R60 (DPO concentration), R71 (the original six DPO source pairs), R78 (DPO token reweighting with the corrected $\tau$), R79 (matched-batch DPO correction), R80 (AdamW gradient-space check), R81/R82 (GRPO-style single-step and block-sequential checks), R86/R87 (AdamW lifted-state closure and first-order baseline).
\end{itemize}

\section{Frozen-rollout GRPO-style test: protocol and controls}
\label{app:grpo}
\paragraph{Protocol.} These experiments test whether the BCH-local bracket remains structured in a frozen-rollout, KL-regularized reward-surrogate setting inspired by GRPO/PPO. We use Qwen-2.5-0.5B-Instruct, freeze the sampled rollouts shared by the two orders, and define two analytic reward sources: math correctness and brevity/format compliance. The bracket $b_{AB}$ is computed at $\theta_0$ on the realized rollouts, and endpoints $\theta_{AB},\theta_{BA}$ are trained on the matched rollout records. This is a matched-rollout, KL-regularized GRPO-style surrogate with analytic rewards, using the GRPO/PPO objective family as protocol motivation~\citep{shao2024deepseekmath,schulman2017ppo}.

\paragraph{Single-step matched-rollout BCH check.} Across 20 trials in 5 seed clusters, the endpoint-identity check is tight: the cosine between the measured $\theta_{AB}-\theta_{BA}$ and $\eta^2 b_{AB}$ (identity cosine) has median $0.999$ (mean $0.998$, CI $[0.996,0.999]$), and the projection coefficient $\zeta=\langle\theta_{AB}-\theta_{BA},\eta^2 b_{AB}\rangle/\|\eta^2 b_{AB}\|^2$ has median $1.002$ (mean $1.002$, CI $[0.995,1.009]$). The finite-difference HVP checks have minimum HVP cosine median $0.999998$ and maximum residual median $0.0026$; PPO/GRPO clipping is inactive in both orders. All checks pass together in $17/20$ trials ($0.85$).

\begin{table}[h]
\centering
\small
\caption{Single-step frozen-rollout GRPO-style controls. Values are medians over 20 trials unless a CI is shown. Unmatched rollouts: the bracket from one rollout set applied to endpoints trained on another. Reported $r$ are closure ratios $r := 1 - \|\text{edited}-\text{target}\|/\|\text{baseline}-\text{target}\|$ against the target AB-vs-BA displacement on held-out rollouts.}
\label{tab:grpo_r81_controls}
\begin{tabular}{lccc}
\toprule
Condition & Parameter closure $r$ & Log-prob closure $r$ & Loss closure $r$ \\
\midrule
Bracket edit & $0.952$ & $0.886$ & $0.805$ \\
Random norm-matched & $-0.414$ & $-0.001$ & $0.010$ \\
Wrong sign & $-0.997$ & $-0.987$ & $-1.008$ \\
First-order norm-matched & $-0.410$ & $-0.197$ & $-2.384$ \\
Unmatched rollouts (negative control) & $-0.396$ & $-0.286$ & $-0.158$ \\
\bottomrule
\end{tabular}
\end{table}

\paragraph{Block-sequential $T$-sweep.} The block-sequential sweep composes the correction over longer matched-rollout blocks and selects the largest $T$ in the fixed grid $\{2,4,8,16\}$ satisfying identity cosine $>0.7$, $\zeta\in[0.5,1.5]$, and parameter-space closure $C_\theta>0.6$, without reward-outcome tuning. All evaluable block lengths pass; $T^\star=16$ is selected by this rule.

\begin{table}[h]
\centering
\small
\caption{Block-sequential frozen-rollout $T$-sweep. $C_\theta$ is the parameter-space closure of the composed correction.}
\label{tab:grpo_r82_sweep}
\begin{tabular}{rrrrr}
\toprule
$T$ & identity cosine & $\zeta$ & relative residual & $C_\theta$ \\
\midrule
2  & $0.997$ & $0.967$ & $0.083$ & $0.917$ \\
4  & $0.998$ & $0.904$ & $0.122$ & $0.878$ \\
8  & $0.988$ & $0.818$ & $0.267$ & $0.733$ \\
16 & $0.991$ & $0.884$ & $0.188$ & $0.812$ \\
\bottomrule
\end{tabular}
\end{table}

\paragraph{Interpretation.} The GRPO-style result is strongest as a closure check: in a third objective regime, the same bracket satisfies a near-exact matched-path identity, wrong-sign edits obey antisymmetry, random controls stay near zero in log-probability and loss, and static cross-rollout controls fail. The caveat is also structural: a bracket computed for one realized path is not expected to transfer to a different path.

\end{document}